%% file: main.tex
\documentclass[11pt]{article}
\usepackage{tocloft}
\input{notation.tex}

\input{preamble.tex}

\title{{\bf Global Sequential Testing for Multi-Stream Auditing}}

\author{Beepul Bharti\\\texttt{bbharti1@jhu.edu}}
\author{
    Beepul~Bharti$^{1}$\footnote{Corresponding author.}\\
    {\small bbharti1@jhu.edu}
    \and
    Ambar Pal$^{2}$\footnote{This work is not related to AP's position at Amazon.}\\
    {\small ambarpal@amazon.com}
    \and
    Jeremias~Sulam$^{1}$\\
    \small {jsulam1@jhu.edu}
}
\date{
    \centering
    $^1$Johns Hopkins University\\
    $^2$Amazon Responsible AI\\
    \vspace{1em}
    \today
}

\begin{document}
\maketitle
\vspace{-2em}
\begin{abstract}
    Across many risk-sensitive areas, it is critical to continuously audit machine learning systems as we receive more data to quickly determine if they are performing as designed. This auditing task can be modeled as a sequential hypothesis testing problem with $k$ data streams and a global null hypothesis that asserts the system operates as intended across all $k$ streams. Under the alternative, the standard global sequential test, which uses a Bonferroni correction, has an expected stopping time of $\mathcal{O}\left(\ln \nicefrac{k}{\alpha}\right)$ for large $k$ and significance level $\alpha$. In this work, we demonstrate that efficient sequential tests, relying on merging martingales via averaging and products rules, provide improved stopping times, and thus more powerful tests against the null. 
    Using these results, we show that a balanced test can match the Bonferroni rate of $\mathcal{O}\left(\ln \nicefrac{k}{\alpha}\right)$ in the sparse regime (just a few non-null streams) while achieving $\mathcal{O}\left(\nicefrac{1}{k}\ln \nicefrac{1}{\alpha}\right)$ under dense alternatives (many non-null steams). We validate our theory through experiments on both synthetic and real-world data. 
\end{abstract}

\tableofcontents
\newpage

\section{Introduction}

\label{sec:intro}
Machine learning (ML) systems are increasingly deployed in high-stakes decision-making contexts such as healthcare \citep{shailaja2018machine} and criminal justice \citep{rudin2020age}, where they influence critical decisions about individuals. Thus, the development of auditing procedures that continually evaluate a system's performance has become an essential area of study \citep{raji2019actionable, chugg2023auditing, bandy2021problematic, metaxa2021auditing, vecchione2021algorithmic}. Notably, the importance of auditing methods has been emphasized by regulatory and governance bodies, including the U.S. Office of Science and Technology Policy \citep{ostp2022blueprint}, the European Union \citep{eu2021aiact}, and the United Nations \citep{unesco2021recommendation}.

Statistical hypothesis testing provides a rigorous framework to perform online auditing of an ML system. For example, suppose we are tasked with auditing a newly deployed state-of-the-art medical foundation model at a hospital. The model can perform zero-shot diagnoses across various imaging modalities, like X-rays, computed tomography, magnetic resonance imaging, and more. Importantly, we receive a continuous stream of data representing the model's predictions. To audit the model’s performance on a specific modality, such as X-rays, we can define the following null hypothesis:
\begin{align*}
    H_{0} &:~ \text{The model is working as intended on X-rays.}
\end{align*}
Moreover, we can continuously test this null by leveraging \emph{sequential} hypothesis tests. Unlike traditional hypothesis tests \citep{neyman1933ix}, which require collecting a fixed batch of data up to a pre-specified time to obtain a valid $p$-value, sequential tests allow data collection to stop at arbitrary, data-dependent times once sufficient evidence against the null hypothesis has accumulated. 

Ensuring a system is performing well on one stream is important, but assessing its performance across multiple streams (e.g., demographic groups, geographic locations, etc) is crucial for ensuring fairness, safety, and reliability \citep{susarla2024zillow, obermeyer2019dissecting, dastin2022amazon}. For instance, in the foundation model example, it is critical to ensure the model is performing well on \emph{multiple} imaging modalities to assure safe patient outcomes. Thus, in this work we consider a multi-stream auditing problem: monitoring $k$ data streams, the goal is to raise an alarm as soon as there is sufficient evidence that the model is not performing as intended on \emph{any} stream. That is, we focus on sequentially testing a \emph{global} null hypothesis of the form:
\begin{align*}
    H^{\text{global}}_{0} &:~ \text{The model is working as intended across all image modalities.}
\end{align*}
Testing this global null is equivalent to testing the intersection of \( k \) hypotheses \( H_{1,0}, \ldots, H_{k,0} \), where each \( H_{i,0} \) asserts that the system is performing as intended on stream \( i \). Importantly, a range of alternatives may arise: for example, the model may perform poorly on only one modality (a sparse alternative) or on many modalities (a dense alternative). For both regimes -- and everything in between -- we must understand which tests are efficient and powerful to enable effective global testing.

To accomplish this, we study a set of sequential tests for this global null and provide new expected stopping times for them. The only test with a known expected stopping time under the global alternative is one that relies on Bonferroni correction. For significance level $\alpha \in (0,1)$, this test has an expected stopping time of $\calO(\ln \nicefrac{k}{\alpha})$ in both sparse and dense settings. While simple, this guarantee is unnecessarily large in cases when many streams provide evidence against the null -- as we will make clear shortly. More generally, this Bonferroni correction is an example of a test that employs a specific \emph{merging} strategy by combining evidence from different streams (by computing a max). This motivates the study of other common, previously studied merging strategies, such as the average and product \citep{vovk2021values, wang2025only, ramdas2024hypothesis}. While empirically useful, neither of these strategies have known stopping time guarantees that justify their use under different alternative hypotheses. In this work we provide stopping time results for each strategy, highlighting their strengths and weaknesses. Furthermore, by leveraging these findings, we construct a simple sequential test that balances their strengths and achieves an expected stopping time of $\calO(\ln \nicefrac{k}{\alpha})$ under a sparse alternative and a smaller bound of $\calO(\nicefrac{1}{k}\ln \nicefrac{1}{\alpha})$ under a dense alternative.

\section{Problem Formulation}

\label{sec:prob_form}
We formalize the auditing problem as a statistical hypothesis test \cite{daiindividual, chugg2023auditing}. Let $P_i$ be a distribution supported on $\mathcal{Z} \subset [-1,1]$. The random variable $Z \sim P_i$ captures the system's behavior, with larger $|Z|$ indicating greater deviations from the optimal behavior. 
Given data $Z_{i,1}, Z_{i,2}, \ldots \overset{\text{iid}}{\sim} P_i$, auditing a system's performance on $P_i$ can be framed as a hypothesis test with null and alternative hypotheses:
\begin{align}
H_{i,0}: \mathbb{E}_{P_i}[Z_{i,1}] = 0 \quad \text{and} \quad H_{i,1}: \mathbb{E}_{P_i}[Z_{i,1}] \neq 0.
\end{align}
The null $H_{i,0}$ asserts the system performs as intended on $P_i$, while the alternative $H_{i,1}$ suggests a deviation from expected behavior. This is a bounded mean test \citep{waudby2024estimating}, which tests if the mean of a bounded random variable equals zero. Such tests are widely used for ML system auditing \citep{daiindividual, chugg2023auditing}.

In this work, we move beyond single audits and study the problem of evaluating a systems performance across $k$ distributions, $P_1, \dots, P_k$. Formally, letting $[k] \coloneqq \{1, \dots, k\}$, we consider the multi-stream auditing task where we observe data from all distributions, $Z_{i,1}, Z_{i,2}, \ldots \overset{\text{iid}}{\sim} P_i$ for each $i \in [k]$, and want to test the \emph{global} null and alternative hypotheses:
\begin{equation}
    \begin{aligned}
        H^{\text{global}}_{0} : \forall i \in [k], \ \mathbb{E}_{P_i}[Z_{i,1}] = 0 \quad \text{and} \quad H^{\text{global}}_{1} : \exists i \in [k] ~~ \text{such that} ~~ \mathbb{E}_{P_i}[Z_{i,1}] \neq 0.
    \end{aligned}
\end{equation}
The global null holds if and only if every null  \(H_{i,0}\) is true. Thus, it provides a precise auditing process: rejecting $H^{\text{global}}_{0}$ implies the system is performing sub-optimally on at least one distribution $P_i$.

Unlike traditional hypothesis testing \citep{neyman1933ix}, we work in a setting where data is collected over time, i.e., we have \( k \) parallel data streams \( (Z_{i,t})_{t \geq 1} \equiv (Z_{i,1}, Z_{i,2}, \ldots) \), where \( Z_{i,1}, Z_{i,2}, \ldots \overset{\text{iid}}{\sim} P_i \) for all \( i \in [k] := \{1, \ldots, k\} \). Our goal is to \emph{sequentially} test \( H^{\text{global}}_{0} \) as data arrive, determining whether the system is performing as intended across \emph{all} streams and raising an alarm if there is evidence to the contrary. The standard approach to testing the global null relies on testing each individual null and then applying a Bonferroni correction. Tests relying on average and products for combining evidence from the different streams have been recently proposed and studied to varying degrees \citep{vovk2021values, ramdas2024hypothesis, wang2025only}. Importantly, however, it is unknown if these other strategies provide better stopping time guarantees, which is the core focus of our work.

{\noindent {\bf Additional Notation.}} For $k$ streams, $(Z_{1,t})_{t\geq 1}, \dots, (Z_{k,t})_{t\geq 1}$, denote $\Delta_i = \bE[Z_{i,1}]$\footnote{The $Z_{i,1},\dots ,Z_{i,t} \overset{iid}{\sim} P_i$, so $\mathbb{E}[Z_{i,1}] = \cdots = \mathbb{E}[Z_{i,t}] = \Delta_i$. We use $Z_{i,1}$ for clarity and notational convenience.}, $\Delta_{\textsf{m}} = \underset{i \in [k]}\max~\Delta_i$, and $\Delta_{\textsf{s}} =$ $\sum^k_{i=1} \Delta^2_i$. By definition, under $H^{\textsf{global}}_0$, these quantities equal zero. Under the alternative $H^{\textsf{global}}_1$, $\exists i \in [k]$, such that $\Delta_i \neq 0$. As a result, under $H^{\textsf{global}}_1$, $\Delta_{\textsf{m}} \neq 0$ and $\Delta_{\textsf{s}} \neq 0$.

\subsection{Related Work}

\label{sec:related_work}
Safe anytime valid inference, multi-stream testing, and auditing all draw on several distinct lines of research. We discuss the works in these areas most related to ours. 

\paragraph{Safe anytime-valid inference.}\citet{shafer2021testing, vovk2021values, shekhar2023nonparametric, waudby2024estimating, waudby2025universal, ramdas2024hypothesis} introduce and evaluate the testing by betting framework, providing numerous results regarding sequential tests and how to construct them with betting, martingales, and $e$-processes. Our work employs the machinery presented in these works. Note, there are several other works that study testing multiple hypotheses via sequential hypothesis tests \citep{xu2024online, javanmard2018online, zrnic2021asynchronous}. The primary focus of these works is developing sequential tests with controlled false discovery rate.

\paragraph{\bf Auditing ML systems via hypothesis testing.} \citet{cen2024transparency} operationalize modern algorithmic auditing using statistical hypothesis testing. 
However, their focus is on classical non-sequential settings where one has a fixed sample. On the other hand, there are various other works that focus on developing sequential tests to continuously audit deployed systems. \citet{chugg2023auditing} construct sequential tests to continually audit ML predictors for parity-fairness and \citet{daiindividual} construct a real-time auditing test to verify if demographic groups are harmed within a reporting based framework. More recent works have developed sequential tests to evaluate large language models (LLM). \citet{chenonline} propose tests to detect LLM generated texts and \citet{richterauditing} construct tests to detect behavioral shifts in LLMs. These works briefly discuss multi-stream extensions and global testing, though they all perform a Bonferroni correction.

\paragraph{Multi-stream sequential testing.}
A few works explore sequential testing in a multi-stream setting. \citet{kaufmann2021mixture} use a mixture-based approach to construct wealth processes for single streams and combine them using a product strategy. However, they assume parametric models for their data streams, modeling them as 1-dimensional exponential families, whereas we consider a general nonparametric setting. \citet{cho2024peeking} consider a different multi-stream setting: At each time, they observe a single outcome $Z_t$ along with another random variable $A_t$, taking a value in $[k]$ that determines the membership of $Z_t$. That is, every observed point belongs to one of $k$ possible groups. In contrast,  we observe $k$ outcomes at every time point, each from a different stream. Additionally, unlike our work, they do not provide expected stopping times guarantees. Similar to our work, \citet{sandoval2026multi} consider testing the global null in a setting with multiple arms (streams). The critical difference is at each time $t$, their tests can only select a single arm to obtain a data point. 

\paragraph{Merging evidence.} \citet{vovk2021values, ramdas2024hypothesis, wang2025only} detail various methods for merging evidence from multiple streams to test the global null. They introduce merging functions that, at a high level, allow one to combine sequential tests to yield other valid, sequential tests. Among these different merging strategies, we focus on the average and product rules. \citet{vovk2021values} discuss when these merging functions dominate others, albeit only in the sense of providing a larger degree of evidence. While this provides some intuition about when one merging-based test may reject quicker than another, explicit bounds on the expected stopping times of merging-based sequential tests remain unknown. Establishing maximal expected stopping times for sequential tests is central to drawing conclusions about their power, and can indicate when one test is preferable to another.

\vspace{-0.60em}

\section{A Primer in Sequential Testing}
\vspace{-0.75em}
\label{sec:primer}
A sequential test $\phi \equiv (\phi_t)_{t \geq 1}$ for a null $H_0$ makes a decision $\phi_t \in \{0,1\}$ at each (discrete) time $t$ based on the data observed up to time $t$. A decision $\phi_t = 1$ corresponds to rejecting $H_0$, while $\phi_t = 0$ indicates failing to reject $H_0$. A test $\phi$ is required to satisfy two key criteria. First, it must have level $\alpha$. A sequential test $\phi$ for $H_0$ is a  level-$\alpha$ test if $\sup_{P \in H_0} P(\exists t \geq 1: \phi_t = 1) \leq \alpha$. That is, the probability it incorrectly rejects $H_0$ is at most $\alpha$ \emph{simultaneously across all time steps}.


The second criteria is that $\phi$ be powerful. In sequential testing, one typically requires $\phi$ to be asymptotically powerful \citep{shekhar2023nonparametric, chugg2023auditing}, i.e., it satisfies $\inf_{P \in H_1} P(\exists t \geq 1: \phi_t = 1) = 1$. Another, complementary notion, requires that the stochastic process underlying $\phi$ grows fast under the alternative \citep{waudby2025universal}. Following \citep{shekhar2023nonparametric, waudby2025universal}, we deem a test $\phi$ powerful if, under $H_1$, it rejects quickly on average. Formally, letting $\tau = \min\{t \geq 1 : \phi_t = 1\}$ denote its stopping time, we require $\mathbb{E}[\tau]$ be small under the alternative.


For efficient multi-stream auditing, we seek level-$\alpha$ sequential tests for $H^{\text{global}}_{0}$ that stop quickly under $H^{\text{global}}_{1}$. 
In this work we focus on 
tests that rely on merging strategies and provide new expected stopping time guarantees, showcasing when they are, and are not, expected to be powerful. To begin, we introduce relevant definitions and properties of nonnegative supermartingales.

\paragraph{Martingales.} A stochastic process $M \equiv (M_t)_{t\geq 1}$, built on a stream of random variables $(Z_t)_{t\geq1}$, is a test-martingale for $H_0$ if it is non-negative, has initial value $M_0$ satisfying $E_P[M_0] \leq 1$ for all $P \in H_0$, and satisfies $\bE_P[M_{t+1} \mid Z_1,  \dots, Z_t] = M_t$,  for all $ t\geq 1$ and all $P \in H_0$. When the equality is replaced by an inequality ($\leq$), $M$ is a test-supermartingale. A central result regarding test-supermartingales is Ville's inequality \citep{ville1939etude}, 

\begin{restatable}[Ville's Inequality \citep{ville1939etude}]{theorem}{villes}
\label{theorem:villes}
Let $M$ be a test-supermartingale for null hypothesis $H_0$. Then $\forall \alpha > 0$ and $P \in H_0$, $P(\exists t\geq 1: M_t \geq 1/\alpha) \leq \alpha$.
\end{restatable}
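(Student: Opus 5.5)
The plan is the standard optional-stopping argument applied at the first crossing of the threshold $1/\alpha$. Fix $P \in H_0$ and $\alpha > 0$, and let $\tau = \inf\{t \geq 1 : M_t \geq 1/\alpha\}$, with $\tau = \infty$ if no such $t$ exists. This $\tau$ is a stopping time with respect to the filtration $\mathcal{F}_t = \sigma(Z_1,\dots,Z_t)$ generated by the data. The event of interest, $\{\exists t \geq 1 : M_t \geq 1/\alpha\}$, is exactly $\{\tau < \infty\}$.

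First I will bound the probability of crossing within a finite horizon. Fix $T \geq 1$ and consider the bounded stopping time $\tau \wedge T$. The stopped process $(M_{t \wedge \tau})_{t \geq 0}$ is again a nonnegative supermartingale. To see this, write
\[
M_{(t+1)\wedge\tau} = M_{t\wedge\tau} + \mathbf{1}\{\tau > t\}(M_{t+1} - M_t),
\]
and note that $\{\tau > t\} \in \mathcal{F}_t$. Taking conditional expectations and using the supermartingale property gives $\mathbb{E}_P[M_{(t+1)\wedge\tau} \mid \mathcal{F}_t] \leq M_{t\wedge\tau}$. Iterating down to time zero yields
\[
\mathbb{E}_P[M_{\tau\wedge T}] \leq \mathbb{E}_P[M_0] \leq 1.
\]
On the event $\{\tau \leq T\}$ we have $M_{\tau\wedge T} = M_\tau \geq 1/\alpha$. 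Nonnegativity lets us drop the complementary event, so
\[
\tfrac{1}{\alpha}\, P(\tau \leq T) \leq \mathbb{E}_P\bigl[M_{\tau\wedge T}\mathbf{1}\{\tau\leq T\}\bigr] \leq \mathbb{E}_P[M_{\tau\wedge T}] \leq 1,
\]
which gives $P(\tau \leq T) \leq \alpha$ for every $T$.

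Finally, the events $\{\tau \leq T\}$ increase to $\{\tau < \infty\}$ as $T \to \infty$. Continuity of probability from below then gives $P(\exists t : M_t \geq 1/\alpha) = P(\tau<\infty) \leq \alpha$.

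The main point of care is avoiding optional stopping at the possibly infinite time $\tau$. That step is not justified, since $M_\tau$ may be undefined on $\{\tau = \infty\}$ and no uniform integrability is available. The proof therefore works only with the truncated times $\tau \wedge T$ and passes to the limit through the events, not through expectations. The other small detail is the time-zero term: the supermartingale inequality is stated from $t \geq 1$, so I will check that the chain $\mathbb{E}_P[M_1] \leq \mathbb{E}_P[M_0]$ holds under the paper's convention (treating $M_0$ as part of the process). If it does not, I will instead start the argument from $\mathbb{E}_P[M_1] \leq 1$.
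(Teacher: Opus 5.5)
Your argument is correct. It is the standard proof of Ville's inequality: optional stopping at the bounded time $\tau\wedge T$, a Markov-type bound on $\{\tau\le T\}$ using nonnegativity, then continuity from below. You also rightly avoid stopping at the possibly infinite $\tau$. The paper gives no proof of this statement; it cites \citet{ville1939etude} and uses the result as a black box, so there is no competing route to compare with. The usual alternative packaging is Doob's maximal inequality for nonnegative supermartingales on $\{1,\dots,T\}$, whose proof is the same stopping argument.

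One point in your last paragraph needs correcting. Because $\tau\ge 1$, your chain only needs two things: the supermartingale property for $t\ge 1$, and $\mathbb{E}_P[M_1]\le 1$. The second is not a fallback you can derive if the paper's definition is read literally, with the supermartingale property imposed only for $t\ge 1$. Under that reading the statement is false. For $\alpha\in(0,1)$, take the deterministic process $M_0=1$, $M_t=1/\alpha$ for $t\ge 1$. It is nonnegative, has $\mathbb{E}_P[M_0]\le 1$, and satisfies $\mathbb{E}_P[M_{t+1}\mid Z_1,\dots,Z_t]=M_t$ for all $t\ge 1$, yet it reaches $1/\alpha$ with probability one.

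So $\mathbb{E}_P[M_1]\le\mathbb{E}_P[M_0]\le 1$, i.e.\ the supermartingale step at $t=0$, must be taken as part of the hypothesis rather than derived. This is clearly the intended definition, and it holds for the paper's wealth processes; for instance, $\mathbb{E}_P[W^{\textsf{ons}}_{i,1}]=1+\lambda_{i,1}\mathbb{E}_P[Z_{i,1}]=1$ under $H_{i,0}$. Similarly, your claim that $\tau$ is a stopping time requires each $M_t$ to be a function of $Z_1,\dots,Z_t$. This adaptedness is only implicit in the paper's phrase ``built on a stream'' and is worth stating as an explicit assumption.
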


In simple terms, one may construct a level-$\alpha$ test for $H_0$ by rejecting when a test super-martingale for $P\in H_0$ becomes large enough. Test martingales and Ville's inequality are the core tools underlying sequential tests. We now show how they are used to test a stream-specific null $H_{i,0}$, providing the foundations to study sequential tests for the global hypotheses counterpart.\footnote{The discussions regarding martingales can be generalized to stochastic processes adapted to the underlying canonical filtration. We omit this generalization for improved readability.}

\subsection{Testing a Stream-Specific Null $H_{i,0}$}

\label{sec:single_test}
To construct a sequential test for a single \(H_{i,0}\), we can use the \emph{testing-by-betting} framework \citep{shafer2021testing}. This involves starting with wealth \(W_{i,0} = 1\) and iteratively betting on the values of \(Z_{i,t}\) from the stream \((Z_{i,t})_{t \geq 1}\) to maximize our wealth. A positive bet indicates our belief that $Z_{i,t} > 0$ (resp. negative indicates $< 0)$, and the magnitude represents how much we are willing to bet. The key result is that we can structure our bets so our wealth increases if $\bE_{P_i}[Z] \neq 0$ (i.e., if the null is false) and stays constant otherwise, on average. Thus, with well-placed bets, our wealth can serve as a measure of \emph{evidence} against the null: the more wealth, the stronger our belief that the null is false.

Formally, at every time $t\geq 1$, using all the data up to $t-1$, $Z_{i,1}, \dots, Z_{i, t-1}$, we choose a \emph{signed} betting fraction $\lambda_{i,t}$ When $\lambda_{i,t} > 0$, it indicates our belief that $Z_{i,t} > 0$. Then, we receive $Z_{i,t}$ and, following previous works \citep{shafer2021testing, ramdas2023game, shekhar2023nonparametric, chugg2023auditing, waudby2024estimating, ryu2024confidence, orabona2023tight}, update our wealth according to 
\begin{align}
    W_{i,t} = W_{i,t-1} \cdot (1 + \lambda_{i,t}Z_{i,t}) = \prod_{j=1}^t (1 + \lambda_{i,j}Z_{i,j}).
\end{align}
The wealth process $W_i \equiv (W_{i,t})_{t \geq 1}$ is a test-martingale under $H_{i,0}$, so Ville's inequality ensures that rejecting when $W_{i,t} \geq 1/\alpha$ constitutes a level-$\alpha$ sequential test \citep{shekhar2023nonparametric}. Thus, to design a powerful level-$\alpha$ sequential test for $H_{i,0}$ it suffices to construct a wealth process that (1) is a test-supermartingale under $H_{i,0}$, and (2) grows rapidly under $H_{i,1}$.
 We select our sequence of betting fractions \((\lambda_{i,t})_{t \geq 1}\) using Online Newton Step (ONS) \citep{cutkosky2018black, hazan2007logarithmic} (\cref{alg:ons}), ensuring it is $[-\nicefrac{1}{2}, \nicefrac{1}{2}]$-valued. 
We choose ONS as it guarantees
the wealth grows exponentially under the alternative (see \cref{lemma:log_wealth_lb}).

With these choices, denote the wealth process as $W^{\textsf{ons}}_i \equiv (W^{\textsf{ons}}_{i,t})_{t\geq1}$ and test $\phi^{\textsf{ons}}_{i} \equiv  (\phi^{\textsf{ons}}_{i,t})_{t\geq1}$ with 
\begin{equation}
    \label{eq:ons_test}
    \phi^{\textsf{ons}}_{i,t} = \mathbf{1}[W^{\textsf{ons}}_{i,t} \geq 1/\alpha].
\end{equation} 
\citet {chugg2023auditing} show that $\phi^{\textsf{ons}}_{i}$ is a level-$\alpha$ sequential test for $H_{0,i}$ and that, under $H_{1,i}$, its stopping time $\tau_i = \min\{t: W^{\textsf{ons}}_{i,t} \geq 1/\alpha \}$ 
obeys $\bE[\tau_i] = \calO\left(\frac{1}{\Delta_i^2} \ln\frac{1}{\alpha \cdot \Delta_i^2}\right)$. This shows $\phi^{\textsf{ons}}_i$ adapts to the hardness of the alternative: a larger $|\Delta_i|$ implies less time required to reject $H_{i,0}$. Moreover, the smaller the significance level $\alpha$, the larger the expected stopping time (logarithmically). We provide a proof of this result for completeness in \cref{app:prop1}.

\subsection{The Standard Bonferroni Test for the Global Null}

\label{sec:bonferonni}
The machinery utilized to test a single $H_{0,i}$, described in the previous section, forms the basis for the standard strategy to test $H^{\text{global}}_{0}$. The test, defined as $\phi^{\textsf{bonf}} \equiv (\phi^{\textsf{bonf}}_t)_{t \geq 1}$, relies on $k$ parallel wealth processes $W^{\textsf{ons}}_{i}$, one for each $H_{i,0}$. The decision at every time $t$ is
\begin{align}
\label{eq:Bonf_test}
    \phi^{\textsf{bonf}}_t = \mathbf{1}\left[\max_{i \in [k]} W^{\textsf{ons}}_{i,t} \geq k/\alpha\right].
\end{align}
This test tracks the single highest wealth at each time $t$ and rejects $H^{\text{global}}_{0}$ using a Bonferroni-style correction that raises the rejection threshold to $\nicefrac{k}{\alpha}$. The following result shows that $\phi^{\textsf{bonf}}$ is a level-$\alpha$ sequential test and provides an upper bound on its expected stopping time under the alternative.

\begin{restatable}[\citet{chugg2023auditing}]{proposition}{chugg}
The test $\phi^{\textsf{bonf}}$ is a level-\(\alpha\) sequential test for $H^{\textsf{global}}_0$. Moreover, under 
$H^{\textsf{global}}_1$, its expected stopping obeys
\label{proposition:prop2} $
    \bE[\tau] = \calO\left(\frac{1}{\Delta_{\textsf{m}}^2}\ln\frac{k}{\alpha\cdot \Delta^2_{\textsf{m}}}\right)$.
\end{restatable}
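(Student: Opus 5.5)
Validity and the stopping-time bound are handled separately.

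\emph{Validity.} Under $H^{\textsf{global}}_0$ each $W^{\textsf{ons}}_{i}$ is a test-martingale for $H_{i,0}$. This holds because the ONS fractions $\lambda_{i,t}\in[-\nicefrac12,\nicefrac12]$ are predictable and $|Z_{i,t}|\le 1$, so $1+\lambda_{i,t}Z_{i,t}\ge \nicefrac12>0$ and $\bE[1+\lambda_{i,t}Z_{i,t}\mid \mathcal F_{t-1}]=1$. The event that $\phi^{\textsf{bonf}}$ ever rejects is contained in $\bigcup_{i\in[k]}\{\exists t: W^{\textsf{ons}}_{i,t}\ge k/\alpha\}$. Ville's inequality (\cref{theorem:villes}) applied to each $W^{\textsf{ons}}_i$ at level $\alpha/k$ bounds each event's probability by $\alpha/k$. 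A union bound over $i\in[k]$ then gives type-I error at most $\alpha$, uniformly over $P\in H^{\textsf{global}}_0$.

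\emph{Stopping time.} Under $H^{\textsf{global}}_1$, fix $i^\star$ attaining the largest $|\Delta_i|$; I read $\Delta_{\textsf{m}}$ as $\max_i|\Delta_i|$, since the tests are two-sided. Because $\phi^{\textsf{bonf}}_t\ge \mathbf{1}[W^{\textsf{ons}}_{i^\star,t}\ge k/\alpha]$, we have $\tau\le \tau_{i^\star}(\alpha/k)$, the stopping time of the single-stream test $\phi^{\textsf{ons}}_{i^\star}$ run at level $\alpha/k$. Monotonicity of expectation and the single-stream bound from \cref{sec:single_test} (proved in \cref{app:prop1}), with $\alpha$ replaced by $\alpha/k$, give
\begin{equation*}
\bE[\tau]\le \bE[\tau_{i^\star}(\alpha/k)]=\calO\!\left(\frac{1}{\Delta_{\textsf{m}}^2}\ln\frac{k}{\alpha\,\Delta_{\textsf{m}}^2}\right).
\end{equation*}

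\emph{Main obstacle.} The substantive step is the single-stream bound, which I would re-derive to check its constants. First, the ONS regret guarantee (\cref{lemma:log_wealth_lb}) gives $\ln W^{\textsf{ons}}_{i,t}\ge c\,\frac{(S_t)^2}{t}-C\ln t$, where $S_t=\sum_{j\le t}Z_{i,j}$. Second, Hoeffding's inequality gives $S_t\ge t|\Delta_i|/2$ except with probability $e^{-t\Delta_i^2/8}$. Third, write $\bE[\tau]=\sum_t P(\tau>t)\le T_0+\sum_{t>T_0}P(\tau>t)$. Choose $T_0$ as the smallest $t$ satisfying $c\,t\Delta_i^2/4-C\ln t\ge \ln(1/\alpha)$; this gives $T_0=\calO(\Delta_i^{-2}\ln(1/(\alpha\Delta_i^2)))$. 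For $t>T_0$, $P(\tau>t)$ is at most the Hoeffding failure probability, and this tail sums to $\calO(\Delta_i^{-2})$. The delicate part is solving the implicit inequality involving $\ln t$, which is where the $\ln(1/\Delta^2)$ factor arises. Since this result is already established earlier in the paper, I would simply invoke it.
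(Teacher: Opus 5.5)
Your proposal is correct and matches the paper's proof. Validity comes from Ville's inequality at threshold $k/\alpha$ for each $W^{\textsf{ons}}_i$ plus a union bound, and the stopping-time bound from dominating $\tau$ by the single-stream stopping time at level $\alpha/k$ (the paper does this through $P(\max_i W^{\textsf{ons}}_{i,t}<k/\alpha)\le P(W^{\textsf{ons}}_{i,t}<k/\alpha)$ in the tail-sum formula, then minimizes over $i$). Your reading $\Delta_{\textsf{m}}=\max_i|\Delta_i|$ is the right one, since the paper's signed $\max_i\Delta_i$ can vanish under $H^{\textsf{global}}_1$ when every nonzero mean is negative and some mean is zero; your Hoeffding-tail sketch of the single-stream bound is also a valid, slightly cleaner alternative to the paper's $1/t^2$ summation.
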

A proof is included in \cref{app:proofs}. We pause to discuss this bound's dependence on $k$ and $\Delta_m$.

\paragraph{Dependence on $k$.} Keeping the maximum mean $\Delta_{\textsf{m}} \leq \epsilon$, the upper bound increases with $k$ as $\ln k$. While the dependence is only logarithmic, the bound \emph{always} increases, regardless of whether the new streams provide additional evidence against the null.

\paragraph{Dependence on $\Delta_{\textsf{m}}$.} $H^{\textsf{global}}_0$ can fail to hold in many ways, including when a majority, or even all, of the streams have nonzero means (a dense alternative). Regardless, keeping $k$ fixed, the upper bound depends on evidence from all streams only through the maximum mean.

Both dependencies expose a key limitation of $\phi^{\textsf{bonf}}$: when many streams have nonzero means, $\phi^{\textsf{bonf}}$ fails to fully use the evidence within all $k$ streams. To be precise, by construction, at every time $t$, this test merges evidence from every stream by tracking the \emph{only} the maximum wealth. This raises the natural question: can alternative merging strategies yield better expected stopping time guarantees?

\section{Powerful Global Tests via Merging}
\label{sec:testing_via_merge}
We now study sequential tests that rely on \emph{merging} evidence \citep{wang2025only,ramdas2024hypothesis}. As stated earlier, a wealth process $W^{\textsf{ons}}_i$ quantifies evidence against its null $H_{i,0}$, with more wealth indicating stronger evidence. Naturally, these processes also provide varying degrees of evidence against the global null. Merging looks to exploit this by aggregating these sources of evidence. The test $\phi^{\textsf{bonf}}$ can be thought of as a merging-based test: it combines the wealth processes $\{W^{\textsf{ons}}_{i}\}^k_{i=1}$ by taking their maximum at each time $t$ and dividing by $k$. 
However, other merging strategies---notably the average and the product---have desirable properties (described shortly) for aggregating evidence \citep[Propositions~3.1 \& 4.2]{vovk2021values}. Yet, precise guarantees on their expected stopping times remain unknown, which are critical to conclude on their power in sequential testing. We will now study these two merging functions and derive bounds on their expected stopping time.

\subsection{Product Wealth Process}

\label{sec:merging_product}
Given wealth processes $\{W^{\textsf{ons}}_{i}\}^k_{i=1}$, the product wealth process, $W^{\textsf{prod}} \equiv (W^{\textsf{prod}}_t)_{t\geq 1}$, takes the product of the $k$ individual wealth processes at each time $t$. In other words, 
\begin{align}
    W^{\textsf{prod}}_{t} = \prod^k_{i=1}W^{\textsf{ons}}_{i,t}.
\end{align} 
The process $W^{\textsf{prod}}$ is a test martingale for $H^{\textsf{global}}_0$ and so $\phi^{\textsf{prod}} \equiv (\phi^{\textsf{prod}}_{t})_{t \geq 1}$, where
\(
    \phi^{\textsf{prod}}_{t} = \mathbf{1}\left[W^{\textsf{prod}}_{t} \geq 1/\alpha\right],
\) is a level-$\alpha$ sequential test for $H^{\textsf{global}}_{0}$.

\citet[Proposition~4.2]{vovk2021values} show that this merging rule  {can} ``dominate'' other strategies in the sense that if, at any fixed time $t$, all streams provide evidence (i.e., \(W^{\textsf{ons}}_{i,t} > 1\) for all \(i\)), then the product outperforms a large class of strategies in terms of accumulating wealth\footnote{This dominance result requires the wealth processes be independent.}. It is not clear, however, whether this fixed-time behavior translates into \(\phi^{\textsf{prod}}\) having higher power, or a smaller stopping time in general. We address this by establishing the following expected stopping-time guarantee for \(\phi^{\textsf{prod}}\) under the alternative.





\begin{restatable}[Stopping time of $\phi^{\textsf{prod}}$]{theorem}{prodstop}
\label{theorem:prod_stopping_time}
The test $\phi^{\textsf{prod}}$ is a level-\(\alpha\) sequential test for $H^{\text{global}}_{0}$. Moreover, under $H^{\text{global}}_{1}$, its expected stopping time obeys 
\begin{align}
    \bE[\tau] = \mathcal{O}\left(T^{\textsf{prod}}\right) ~~ \text{where} ~~  T^{\textsf{prod}} = \frac{1}{\Delta_{\textsf{s}}}\ln\frac{1}{\alpha} + \frac{k}{\Delta_{\textsf{s}}}\ln\frac{k}{\Delta_{\textsf{s}} } + \frac{k}{\Delta_{\textsf{s}}^2}\ln\frac{k}{\Delta^2_{\textsf{s}}}.
    \label{eq:prod_stopping_time}
\end{align}
\end{restatable}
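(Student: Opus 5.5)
Validity first. Under $H^{\text{global}}_0$ the streams are independent across $i$, and each $W^{\textsf{ons}}_i$ is a test-martingale with respect to its own stream. Conditioning on the joint past $\mathcal{F}_{t-1}=\sigma(Z_{i,s}: i\in[k], s\le t-1)$, each $\lambda_{i,t}$ is predictable, and the $Z_{i,t}$ are independent across $i$. Hence
\[
\bE\left[\prod_{i=1}^k (1+\lambda_{i,t}Z_{i,t}) \,\Big|\, \mathcal{F}_{t-1}\right] = \prod_{i=1}^k \left(1+\lambda_{i,t}\bE[Z_{i,t}]\right) = 1 ,
\]
so $W^{\textsf{prod}}$ is a test-martingale for the joint filtration. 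Ville's inequality (\cref{theorem:villes}) then gives level $\alpha$.

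For the stopping time, I would use the standard tail-sum route: $\bE[\tau]=\sum_{t\ge 0}P(\tau>t)\le \sum_t P(\ln W^{\textsf{prod}}_t<\ln(1/\alpha))$. The key input is the ONS log-wealth lower bound (\cref{lemma:log_wealth_lb}), which I expect has the form
\[
\ln W^{\textsf{ons}}_{i,t} \;\ge\; c\,\frac{\big(\sum_{s\le t} Z_{i,s}\big)^2}{t} \;-\; C\ln t
\]
for absolute constants $c,C$, valid deterministically since ONS regret against the best constant bet in $[-1/2,1/2]$ is $O(\ln t)$ and $\ln(1+x)\ge x-x^2$ on $|x|\le 1/2$. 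Summing over $i$ gives
\[
\ln W^{\textsf{prod}}_t \;\ge\; \frac{c}{t}\sum_{i=1}^k S_{i,t}^2 \;-\; Ck\ln t, \qquad S_{i,t}=\sum_{s\le t}Z_{i,s}.
\]
The $k\ln t$ penalty is what produces the $k$-dependent terms in $T^{\textsf{prod}}$.

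Next I would control $\sum_i S_{i,t}^2$ from below. Write $S_{i,t}=t\Delta_i+M_{i,t}$, with $M_{i,t}$ a sum of bounded centered increments. Then $S_{i,t}^2\ge t^2\Delta_i^2+2t\Delta_iM_{i,t}$, so
\[
\sum_i S_{i,t}^2 \;\ge\; t^2\Delta_{\textsf{s}} + 2t\sum_i \Delta_i M_{i,t}.
\]
The cross term $\sum_i\Delta_iM_{i,t}$ is a sum of $t$ independent increments, each bounded by $2\sum_i|\Delta_i|\le 2\sqrt{k\Delta_{\textsf{s}}}$ in magnitude. Since the variance proxy is only $\sum_i\Delta_i^2\le\Delta_{\textsf{s}}$ per step, Bernstein is preferable to Hoeffding here. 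Either way,
\[
P\!\left(\sum_i\Delta_iM_{i,t}\le -\tfrac{t\Delta_{\textsf{s}}}{4}\right)\le \exp\!\left(-c' t\,\Delta_{\textsf{s}}^2/(k\Delta_{\textsf{s}})\right) = \exp(-c't\Delta_{\textsf{s}}/k)
\]
under Hoeffding, and Bernstein gives something at least as good. Off this event, $\ln W^{\textsf{prod}}_t\ge \frac{c}{2}t\Delta_{\textsf{s}}-Ck\ln t$.

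Finally I would split the sum over $t$. Let $t_0$ be the first time at which $\frac{c}{2}t\Delta_{\textsf{s}}\ge \ln(1/\alpha)+Ck\ln t$. The elementary inversion ``$t\ge a\ln t$ whenever $t\gtrsim a\ln a$'' gives $t_0=O\!\left(\frac{1}{\Delta_{\textsf{s}}}\ln\frac1\alpha+\frac{k}{\Delta_{\textsf{s}}}\ln\frac{k}{\Delta_{\textsf{s}}}\right)$. Then
\[
\bE[\tau]\le t_0+\sum_{t>t_0}\exp(-c't\Delta_{\textsf{s}}/k).
\]
The geometric tail contributes $O(k/\Delta_{\textsf{s}})$. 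If the cruder concentration rate $\exp(-ct\Delta_{\textsf{s}}^2/k)$ is what falls out (e.g. bounding increments by $k$ rather than $\sqrt{k\Delta_{\textsf{s}}}$, or absorbing $\Delta_{\textsf{s}}>1$ cases), the tail gives $O(k/\Delta_{\textsf{s}}^2)$, and with a union over the $\ln t$ slack it gives the $\frac{k}{\Delta_{\textsf{s}}^2}\ln\frac{k}{\Delta_{\textsf{s}}^2}$ term. That matches the third term of $T^{\textsf{prod}}$, so the stated bound holds in either case.

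The main obstacle is this concentration step. The lower bound must hold uniformly enough across $k$ coupled streams while keeping the $k$-dependence to $k\ln t$ rather than something multiplicative with $\ln(1/\alpha)$. I would first try the single-time-$t$ Hoeffding argument above, and fall back on per-stream deviations if the cross-term analysis misbehaves for very unequal $\Delta_i$.
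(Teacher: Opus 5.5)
Your proof is correct. It departs from the paper's at the concentration step, and the change improves the bound.

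The paper keeps the full statistic $\psi=\sum_i A_{i,t}^2$. It applies McDiarmid with bounded differences $4t$ over all $kt$ coordinates, giving $\psi\ge \bE[\psi]-t^{1.5}\sqrt{16k\ln t}$ with probability $1-1/t^2$, and lower-bounds $\bE[\psi]\ge t^2\Delta_{\textsf{s}}$ by Jensen. It then needs $\tfrac13 t^2\Delta_{\textsf{s}}\ge t^{1.5}\sqrt{16k\ln t}$. That requirement is exactly where the third term $\frac{k}{\Delta_{\textsf{s}}^2}\ln\frac{k}{\Delta_{\textsf{s}}^2}$ of $T^{\textsf{prod}}$ comes from.

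You instead write $S_{i,t}=t\Delta_i+M_{i,t}$ and discard the nonnegative $M_{i,t}^2$, so only the linear term $\sum_i\Delta_iM_{i,t}$ needs control. Its fluctuations are of order $\sqrt{t\Delta_{\textsf{s}}}$, rather than the $t^{1.5}\sqrt{k}$ scale McDiarmid pays for the quadratic part. The failure probability is $\exp(-c't\Delta_{\textsf{s}}/k)$ with your range bound. It is even $\exp(-t\Delta_{\textsf{s}}/32)$ if you apply Hoeffding directly to the $kt$ independent summands $\Delta_i(Z_{i,s}-\Delta_i)$, each of range $2|\Delta_i|$. The resulting tail sum $O(k/\Delta_{\textsf{s}})$ is absorbed into the $\frac{k}{\Delta_{\textsf{s}}}\ln\frac{k}{\Delta_{\textsf{s}}}$ term, with an absolute constant inside the logarithm as in the paper's explicit $T_1$, because $\Delta_{\textsf{s}}\le k$.

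You therefore obtain $\bE[\tau]=\mathcal{O}\big(\frac{1}{\Delta_{\textsf{s}}}\ln\frac{1}{\alpha}+\frac{k}{\Delta_{\textsf{s}}}\ln\frac{k}{\Delta_{\textsf{s}}}\big)$. This implies the stated bound. It is strictly sharper whenever $\Delta_{\textsf{s}}<1$: in the sparse example $\gamma=1$ it gives $\frac{k}{\epsilon^2}$ instead of $\frac{k}{\epsilon^4}$, up to logarithms. The paper's route buys a single black-box inequality applied directly to the statistic in the event. Yours shows the third term is an artifact of concentrating the quadratic part.

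Two small remarks. Your closing fallback paragraph is unnecessary, and as written it is loose: bounding increments by $k$ would give rate $\Delta_{\textsf{s}}^2/k^2$, not $\Delta_{\textsf{s}}^2/k$. The ``uniformity across streams'' worry is also moot, since the tail-sum bound only needs fixed-$t$ estimates. Your validity argument matches the paper's and rests on the same cross-stream independence assumption.
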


A proof is provided in \cref{app:prod_stopping_time} with all constant factors made explicit. A few remarks are in order. For simplicity, consider $\gamma \in [k]$ and suppose under the alternative the first $\gamma$ means are nonzero, i.e., $\Delta_i = \epsilon \neq 0$ when $i \leq \gamma$, and $\Delta_i = 0$ otherwise. In this setting, \cref{eq:prod_stopping_time} evaluates to 
\begin{align*}
    \bE[\tau] = \mathcal{O}\left(\frac{1}{\gamma \epsilon^2}\ln\frac{1}{\alpha} + \frac{k}{\gamma \epsilon^2} \ln \frac{k}{\gamma \epsilon^2} + \frac{k}{\gamma^2 \epsilon^4}\ln\frac{k}{\gamma^2 \epsilon^4}\right).
\end{align*}

\paragraph{Dense alternative $\gamma = k$.} In the large $k$ and small $\alpha$ regime, the dominating term is $\nicefrac{1}{k}\ln \nicefrac{1}{\alpha}$. This is significantly better than the $\ln \nicefrac{k}{\alpha}$ dependence of $\phi^{\textsf{bonf}}$ in \cref{proposition:prop2}. As each wealth process grows, their product $W^{\textsf{prod}}$ provides overwhelming evidence against $H^{\textsf{global}}_{0}$, enabling rapid rejection.

\paragraph{\bf Sparse alternative $\gamma = 1$.} When $k$ is large and $\alpha$ small, the dominating term is $k \ln k + \ln \nicefrac{1}{\alpha}$. This is significantly worse than the dependence exhibited by $\phi^{\textsf{bonf}}$. In this case, the growth of $W^{\textsf{prod}}$ from the single non-null stream is heavily inhibited by the remaining streams fluctuating around $1$.

\subsection{Average Wealth Process}

\label{sec:merging_average}
Given the processes $\{W^{\textsf{ons}}_i\}^k_{i=1}$, the average wealth process $W^{\textsf{ave}} \equiv (W^{\textsf{ave}}_t)_{t\geq 1}$ is defined with 
\begin{align}
    W^{\textsf{ave}}_{t} = \frac{1}{k}\sum^k_{i=1} W^{\textsf{ons}}_{i,t}.
\end{align}
At each time $t$, $W^{\textsf{ave}}_{t}$ is the mean of the $k$ individual wealth processes at that $t$, taking a $\nicefrac{1}{k}$ fraction from each. $W^{\textsf{ave}}$ is a test martingale for $H^{\textsf{global}}_0$ and so the test $\phi^{\textsf{ave}} \equiv (\phi^{\textsf{ave}}_{t})_{t \geq 1}$, with
\(
    \phi^{\textsf{ave}}_{t} = \mathbf{1}\left[W^{\textsf{ave}}_{t} \geq 1/\alpha\right],
\) 
is a level-$\alpha$ sequential test for $H^{\textsf{global}}_0$. 

In fact, this merging rule also ``dominates'' other strategies when the wealth processes are dependent \citep[Proposition 3.1]{vovk2021values}. Conceptually,  at any time $t$, 
if only a few $W^{\textsf{ons}}_{i,t} \geq 1$, the average, unlike the product, will still yield evidence because it merges additively. 
Yet, it is again unclear whether this dominance property leads to smaller stopping times. 
The following result addresses this gap by providing an expected stopping time guarantee for $\phi^{\textsf{ave}}$ under the alternative.




\begin{restatable}[Stopping time of $\phi^{\textsf{ave}}$]{theorem}{avestop}
\label{theorem:ave_stopping_time}
The test $\phi^{\textsf{ave}}$ is a level-\(\alpha\) sequential test for $H^{\text{global}}_{0}$. Moreover, under $H^{\text{global}}_{1}$, its expected stopping time obeys:
\begin{align}
    \bE[\tau] = \mathcal{O}(\min\{T, T^{\textsf{bonf}}\})
\end{align}
where 
\begin{align}
    T = \frac{k}{\Delta_{\textsf{s}}}\ln\frac{1}{\alpha} + \frac{k}{\Delta_{\textsf{s}}}\ln\frac{k}{\Delta_{\textsf{s}} } + \frac{k}{\Delta_{\textsf{s}}^2}\ln\frac{k}{\Delta_{\textsf{s}}^2} \quad \text{and} \quad T^\textsf{bonf} = \frac{1}{\Delta_{\textsf{m}}^2}\ln\frac{k}{\alpha\cdot \Delta^2_{\textsf{m}}}.
\end{align}
\end{restatable}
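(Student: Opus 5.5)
Validity follows because $W^{\textsf{ave}}$ is an average of test martingales under $H^{\textsf{global}}_0$: each $W^{\textsf{ons}}_{i}$ is a nonnegative martingale with respect to the joint filtration of all $k$ streams, since $\lambda_{i,t}$ is predictable and $Z_{i,t}$ is independent of the past with mean zero. By linearity of conditional expectation, $W^{\textsf{ave}}$ is a test martingale with initial value $1$, and Ville's inequality (\cref{theorem:villes}) gives level $\alpha$.

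For the stopping time I would prove the two bounds $\bE[\tau]=\calO(T^{\textsf{bonf}})$ and $\bE[\tau]=\calO(T)$ separately and take the minimum.

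\textbf{Bonferroni bound.} Since all wealths are nonnegative, $W^{\textsf{ave}}_t \ge \frac1k \max_i W^{\textsf{ons}}_{i,t}$. Hence $W^{\textsf{ave}}_t\ge 1/\alpha$ whenever $\max_i W^{\textsf{ons}}_{i,t}\ge k/\alpha$, so $\tau\le\tau^{\textsf{bonf}}$ pathwise. \cref{proposition:prop2} then gives $\bE[\tau]=\calO(T^{\textsf{bonf}})$ directly.

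\textbf{Product-type bound.} By AM--GM, $W^{\textsf{ave}}_t\ge (\prod_i W^{\textsf{ons}}_{i,t})^{1/k}=(W^{\textsf{prod}}_t)^{1/k}$. Thus $\tau \le \min\{t: W^{\textsf{prod}}_t\ge \alpha^{-k}\}$. This is exactly the product test run at level $\alpha^k$.

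I would reuse the proof of \cref{theorem:prod_stopping_time} rather than only its statement, so that the dependence on the threshold is explicit. That proof presumably has three steps:
\begin{enumerate}
\item Lower bound $\ln W^{\textsf{prod}}_t=\sum_i \ln W^{\textsf{ons}}_{i,t}$ via \cref{lemma:log_wealth_lb}. This gives roughly $\sum_i \frac{(\sum_{j\le t} Z_{i,j})^2}{c\,t}-k\,c'\ln t$.
\item Use concentration of $\sum_j Z_{i,j}$ around $t\Delta_i$ to show the first term is about $t\Delta_{\textsf{s}}/c$ outside a bad event with summable probability.
\item Write $\bE[\tau]\le \sum_t P(\tau>t)$ and bound it by $t_0$ plus the tail sum. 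Here $t_0$ is the first time $t\Delta_{\textsf{s}}\gtrsim \ln(1/\alpha)+k\ln t$ with margin for the deviations.
\end{enumerate}

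Replacing $\ln(1/\alpha)$ by $k\ln(1/\alpha)$ leaves steps 1 and 2 unchanged. It alters only $t_0$, whose first term becomes $\frac{k}{\Delta_{\textsf{s}}}\ln\frac1\alpha$. The other two terms of $T^{\textsf{prod}}$ are already $k$-scaled and match those of $T$. This yields $\bE[\tau]=\calO(T)$.

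The main obstacle is this re-derivation: checking that in the proof of \cref{theorem:prod_stopping_time} the threshold enters only additively through $\ln(1/\alpha)$ in the choice of $t_0$. In particular, the concentration and tail-sum steps (union bounds over $k$ streams, solving $t\gtrsim a+b\ln t$) must not depend on $\alpha$ in a way that worsens when $\alpha$ is replaced by $\alpha^k$. If the constants are tracked explicitly as stated in \cref{app:prod_stopping_time}, this should be a direct substitution $\ln\frac1\alpha\mapsto k\ln\frac1\alpha$. I would state it as a lemma giving the product stopping time at a general threshold $1/\beta$.
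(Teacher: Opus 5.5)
Your proposal is correct and follows the paper's proof essentially step for step. The paper also bounds $\bE[\tau]$ twice and takes the minimum. First, it uses $\ln W^{\textsf{ave}}_t \ge \frac{1}{k}\sum_i \ln W^{\textsf{ons}}_{i,t}$ (Jensen, i.e.\ your AM--GM step) to reduce to the event $\{\sum_i \ln W^{\textsf{ons}}_{i,t} < k\ln(1/\alpha)\}$ and reruns the product argument, where the threshold indeed enters only through the $\frac{24k}{\Delta_{\textsf{s}}}\ln(1/\alpha)$ term. Second, it uses $W^{\textsf{ave}}_t \ge \frac{1}{k}W^{\textsf{ons}}_{i,t}$ to fall back on the Bonferroni bound; your pathwise comparisons $\tau \le \tau^{\textsf{bonf}}$ and $\tau \le \min\{t : W^{\textsf{prod}}_t \ge \alpha^{-k}\}$ are a slightly cleaner packaging of these same inequalities.
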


A proof is provided in \cref{app:ave_stopping_time}. Again, we pause to discuss how the bound behaves as the number of streams with nonzero means changes. As earlier, consider $\gamma \in [k]$, and suppose under the alternative that $\Delta_i = \epsilon$ when $i \leq \gamma$, and $\Delta_i = 0$ otherwise. Then, the bound evaluates to 
\begin{align*}
    \bE[\tau] = \calO\left(\frac{k}{\gamma\epsilon^2}\ln \frac{1}{\alpha} + \frac{k}{ \gamma\epsilon^2}\ln\frac{k}{\gamma\epsilon^2} + \frac{k}{\gamma^2\epsilon^4} \ln \frac{1}{\gamma^2\epsilon^4}\right).
\end{align*}

\paragraph{Dense alternative $\gamma = k$.} When $k$ is large and $\alpha$ is small, $T^{\textsf{bonf}}$ and $T$ are dominated by $\ln \nicefrac{k}{\alpha}$ and $\ln \nicefrac{1}{\alpha}$ respectively. Hence, the upper bound is dominated by $\ln \nicefrac{1}{\alpha}$. In this case, $\phi^{\textsf{ave}}$ does provide a benefit over $\phi^{\textsf{bonf}}$, just not to the same extent as $\phi^{\textsf{prod}}$.

\paragraph{Sparse alternative $\gamma = 1$.} In the large $k$ and small $\alpha$ regime, $T$ is dominated by $k \ln \nicefrac{1}{\alpha}$, whereas $T^{\textsf{bonf}}$ is dominated by $\ln \nicefrac{k}{\alpha}$. Hence, the bound matches that of $\phi^{\textsf{bonf}}$. This is because only $W^{\textsf{ons}}_1$ grows over time while the other wealth process $W^{\textsf{ons}}_2, \ldots, W^{\textsf{ons}}_k$ fluctuate around small values. In turn, the average wealth, $W^{\textsf{ave}}_t = \frac{1}{k} W^{\textsf{ons}}_{1,t}$, remains close to the maximum wealth, $\big(\frac{1}{k} \max_{i \in [k]} W^{\textsf{ons}}_{i,t}\big)_{t \geq 1}$, which is the process tracked by $\phi^\textsf{bonf}$. Thus, $\phi^{\textsf{ave}}$ behaves almost identically to $\phi^{\textsf{bonf}}$.

This analysis indicates that $\phi^{\textsf{ave}}$ is effective when evidence is concentrated in only a few streams, and also performs well, just not as good as $\phi^{\textsf{prod}}$, when many streams contain evidence against the null. \cref{tab:bound_comparison} shows the comparison between the tests in the large $k$, small $\alpha$ regime, showing the complementary strengths of the two tests. 

\subsection{A Balanced Sequential Test}
\label{sec:balanced_test}
So far, we have analyzed two merging-based tests beyond Bonferroni, $\phi^{\textsf{prod}}$ and $\phi^{\textsf{ave}}$, which work well in the sparse and dense regimes, respectively. Naturally, one does not know ahead of time which alternative one is in. If we use $\phi^{\textsf{ave}}$ when the alternative is dense, we will end up waiting a longer time to reject in comparison to had we used the product. Similarly, if we use $\phi^{\textsf{prod}}$ when the alternative is sparse, we will likely reject much later than we would have had we used the average. Luckily, we can get around this issue by building a wealth process that balances both $W^{\textsf{ave}}$ and $W^{\textsf{prod}}$, yielding a test that performs well in either regime.
\begin{table}[t]
    \caption{Comparison of the expected stopping time upper bounds, showing the dominating terms and their dependence on $k$ and $\alpha$ for large $k$ and small $\alpha$.}
    \label{tab:bound_comparison}
    \centering
    \begin{tabular}{ccc}
         Test & \multicolumn{2}{c}{Alternative} \\
         & Sparse $\gamma = 1$ & Dense $\gamma = k$\\
         \toprule
         Bonferroni $\phi^{\textsf{bonf}}$ (\cref{proposition:prop2}) & $ \ln \nicefrac{1}{\alpha} + \ln k$ & $ \ln \nicefrac{1}{\alpha} + \ln k$ \\
         Product $\phi^\textsf{prod}$ (\cref{theorem:prod_stopping_time}) & $\ln \nicefrac{1}{\alpha} + k \ln k$&$\nicefrac{1}{k} \ln \nicefrac{1}{\alpha}$\\
         Average $\phi^\textsf{ave}$ (\cref{theorem:ave_stopping_time}) & $\ln \nicefrac{1}{\alpha} + \ln k$&$\ln \nicefrac{1}{\alpha}$\\
         \midrule
         Balanced $\phi^\textsf{balance}$ (\cref{theorem:balance_stopping_time}) & $\ln \nicefrac{1}{\alpha} + \ln k$&$\nicefrac{1}{k} \ln \nicefrac{1}{\alpha}$\\
         \bottomrule
    \end{tabular}
\end{table}

Given $W^{\textsf{prod}}$ and $W^{\textsf{ave}}$, we can define the balanced process 
$W^{\textsf{balance}} \equiv (W^{\textsf{balance}}_{t})_{t\geq1}$ as
\begin{align}
    W^{\textsf{balance}}_{t} = \frac{1}{2}W^{\textsf{ave}}_{t} + \frac{1}{2}W^{\textsf{prod}}_{t}
\end{align}
and the associated test $\phi^{\textsf{balance}} \equiv (\phi^{\textsf{balance}}_t)_{t\geq1}$ where $\phi^{\textsf{balance}}_t = \mathbf{1}\left[W^{\textsf{balance}}_{t} \geq 1/\alpha\right]$. Since $W^{\textsf{ave}}$ and $W^{\textsf{prod}}$ are both test martingales for $H^{\textsf{global}}$, their average is as well \citep{vovk2021values}. Moreover, the following result provides an expected stopping time guarantee for $\phi^{\textsf{balance}}$ under the alternative, showing that it inherits the strengths of $\phi^{\textsf{prod}}$ and  $\phi^{\textsf{ave}}$.

\begin{restatable}[Stopping time of $\phi^{\textsf{balance}}$]{theorem}{balancestop}
\label{theorem:balance_stopping_time}
The test $\phi^{\textsf{balance}}$ is a level-\(\alpha\) sequential test for $H^{\textsf{global}}_0$. Moreover, under $H^{\textsf{global}}_1$, its expected stopping time obeys $\bE[\tau] \leq \min\{T^{\textsf{prod}}, T^{\textsf{bonf}}\}$.
\end{restatable}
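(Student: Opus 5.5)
Validity and the stopping-time bound both follow by comparing $W^{\textsf{balance}}$ with its two components.

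\emph{Validity.} $W^{\textsf{ave}}$ and $W^{\textsf{prod}}$ are both test martingales for $H^{\textsf{global}}_0$ with initial value $1$. A convex combination of two test martingales adapted to the same filtration is again nonnegative, starts at $1$, and satisfies the martingale identity by linearity of conditional expectation. Hence $W^{\textsf{balance}}$ is a test martingale, and Ville's inequality (\cref{theorem:villes}) gives that $\phi^{\textsf{balance}}$ has level $\alpha$.

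\emph{Stopping time.} The key observation is pathwise domination: since all wealths are nonnegative, $W^{\textsf{balance}}_t \geq \frac12 W^{\textsf{prod}}_t$ and $W^{\textsf{balance}}_t \geq \frac12 W^{\textsf{ave}}_t \geq \frac{1}{2k}\max_i W^{\textsf{ons}}_{i,t}$. Therefore
\[
\tau^{\textsf{balance}} \leq \min\{t : W^{\textsf{prod}}_t \geq 2/\alpha\} \quad\text{and}\quad \tau^{\textsf{balance}} \leq \min\{t : \max_i W^{\textsf{ons}}_{i,t} \geq 2k/\alpha\}.
\]
These are exactly the stopping times of $\phi^{\textsf{prod}}$ and $\phi^{\textsf{bonf}}$ run at level $\alpha/2$. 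Taking expectations, $\bE[\tau^{\textsf{balance}}]$ is at most the minimum of the expected stopping times of these two tests at level $\alpha/2$. I then invoke \cref{theorem:prod_stopping_time} and \cref{proposition:prop2} with $\alpha$ replaced by $\alpha/2$. This only changes $\ln(1/\alpha)$ to $\ln(2/\alpha)$, which is absorbed in the constants. The result is $\bE[\tau]=\calO(T^{\textsf{prod}})$ and $\bE[\tau]=\calO(T^{\textsf{bonf}})$ simultaneously, hence the bound by the minimum.

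The main obstacle is the stated inequality without a $\calO$. As written, $\bE[\tau] \leq \min\{T^{\textsf{prod}}, T^{\textsf{bonf}}\}$ must be read with the explicit constants absorbed into $T^{\textsf{prod}}$ and $T^{\textsf{bonf}}$, i.e., as an $\calO$ statement. To get it literally I would carry the explicit constants from the appendix proofs of the product and Bonferroni bounds, evaluated at $\alpha/2$, and check that they stay valid uniformly in $\alpha$. For the product part I would also verify that the appendix proof bounds $\bE[\tau]$ via $\sum_t P(\ln W^{\textsf{prod}}_t < \ln(2/\alpha))$ together with the ONS log-wealth lower bound (\cref{lemma:log_wealth_lb}). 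That argument goes through verbatim with the modified threshold, since the threshold enters only through the $\ln(1/\alpha)$ term.
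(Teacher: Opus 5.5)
Your proposal is correct and follows the paper's proof. Validity comes from the convex combination of two test martingales plus Ville's inequality, and the stopping-time bound comes from $W^{\textsf{balance}}_t \geq \frac12 W^{\textsf{prod}}_t$ and $W^{\textsf{balance}}_t \geq \frac12 W^{\textsf{ave}}_t$, which reduce to the product and Bonferroni/average bounds at threshold $2/\alpha$. The paper phrases this as $P(W^{\textsf{balance}}_t < 1/\alpha) \leq P(W^{\textsf{prod}}_t < 2/\alpha)$ inside the tail-sum formula rather than as pathwise domination of stopping times, which is equivalent. Your remark that the stated inequality must be read up to constants is also apt: the paper's own proof ends with explicit $T_1, T_2$ evaluated at $\alpha/2$ and silently drops the additive $\pi^2/6$.
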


A proof is provided in \cref{app:balance_stopping_time}. Crucially, when the alternative is dense, i.e. evidence is dispersed across a majority of streams, $\phi^{\textsf{balance}}$ behaves like $\phi^{\textsf{prod}}$. On the other hand, when the alternative is sparse, i.e., evidence is contained in just a few streams, $\phi^{\textsf{balance}}$ behaves like $\phi^{\textsf{ave}}$ (and  $\phi^\textsf{bonf}$). In this way, $\phi^{\textsf{balance}}$ always rejects nearly as quickly as the best possible test, regardless of the sparsity of the alternative. \cref{tab:bound_comparison} presents the stopping time guarantees for the two sparsity regimes across all the tests discussed, showing that $\phi^{\textsf{balance}}$ obtains the best of both worlds.

Before moving on, we note that this is not the only way of merging processes with improved expected stopping times. For instance, another alternative could rely on constructing a test that rejects whenever the product and Bonferroni tests reject by employing a union bound argument. In other words, rejecting 
whenever $\max\{W^\textsf{prod}_t, W^\textsf{bonf}_t\} \geq \nicefrac{2}{\alpha}$, where $W^\textsf{bonf}_t = \nicefrac{1}{k} \max_{i \in [k]} W^\textsf{ons}_{i,t}$. However, this test has the same expected stopping time as that for $\phi^{\textsf{balance}}$, and its wealth process is never higher than that of $W^{\textsf{balance}}$.

\section{Experiments}
\label{sec:experiments}
We now validate our theory on synthetic and real world data, focusing on how different levels of non-null streams affect the stopping time of the different tests.

\subsection{Synthetic}
We begin with a synthetic example with $k = 250$ streams, of which $k_1<k$ are non-nulls. For $\left\lfloor \frac{k_1}{k}\right\rfloor \in \{0.05, 0.30, 0.75\}$ fraction of streams, $Z_{i,t} \sim \text{Uniform}(a, b)$, with $a$ and $b$ chosen such that $|\mathbb{E}_{P_i}[Z]| = 0.1$ and $\text{Var}_{P_i}[Z] = 0.2$. For the remaining $k - k_1$ streams, $Z_{i,t} \sim \text{Uniform}\left(-\sqrt{\nicefrac{3}{5}}, \sqrt{\nicefrac{3}{5}}\right)$, yielding $\mathbb{E}_{P_i}[Z] = 0$ and $\text{Var}_{P_i}[Z] = 0.2$. 
We use $\phi^{\textsf{bonf}}, ~\phi^{\textsf{ave}},~\phi^{\textsf{prod}}$, and $\phi^{\textsf{balance}}$ to test the global null. Results for additional experiments with other values of $k$ and proportions $\left\lfloor \frac{k_1}{k} \right\rfloor$, are presented in \cref{app:add_exp_results}. This experiment is run 1,000 times.

\begin{figure*}[h!]
    \centering
    \hspace{-0.5em}
    \begin{subfigure}[t]{0.32\linewidth}
        \centering
        \includegraphics[width=\linewidth]{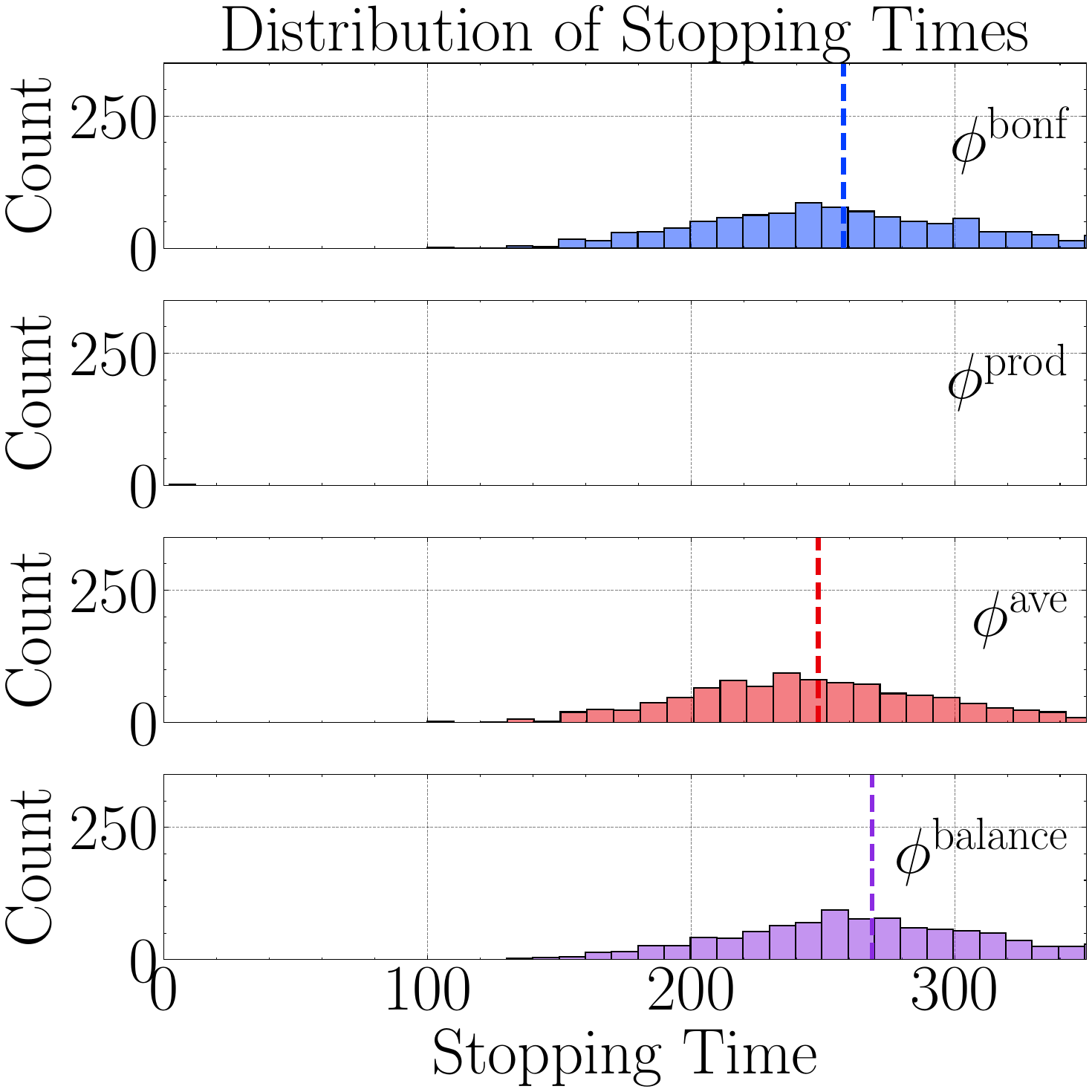}
    \end{subfigure}
    \begin{subfigure}[t]{0.32\linewidth}
        \centering
        \includegraphics[width=\linewidth]{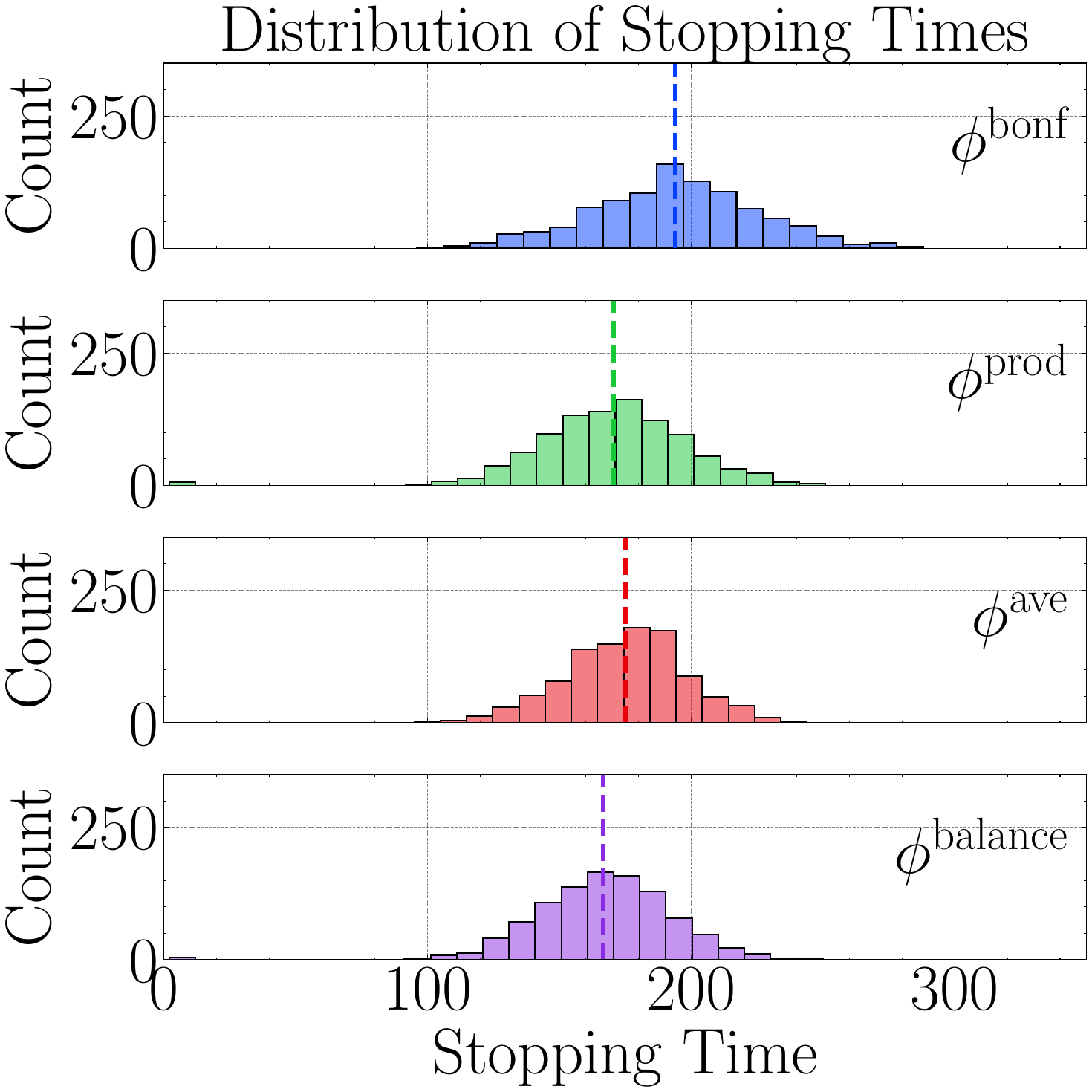}
    \end{subfigure}
    \begin{subfigure}[t]{0.32\linewidth}
        \centering
        \includegraphics[width=\linewidth]{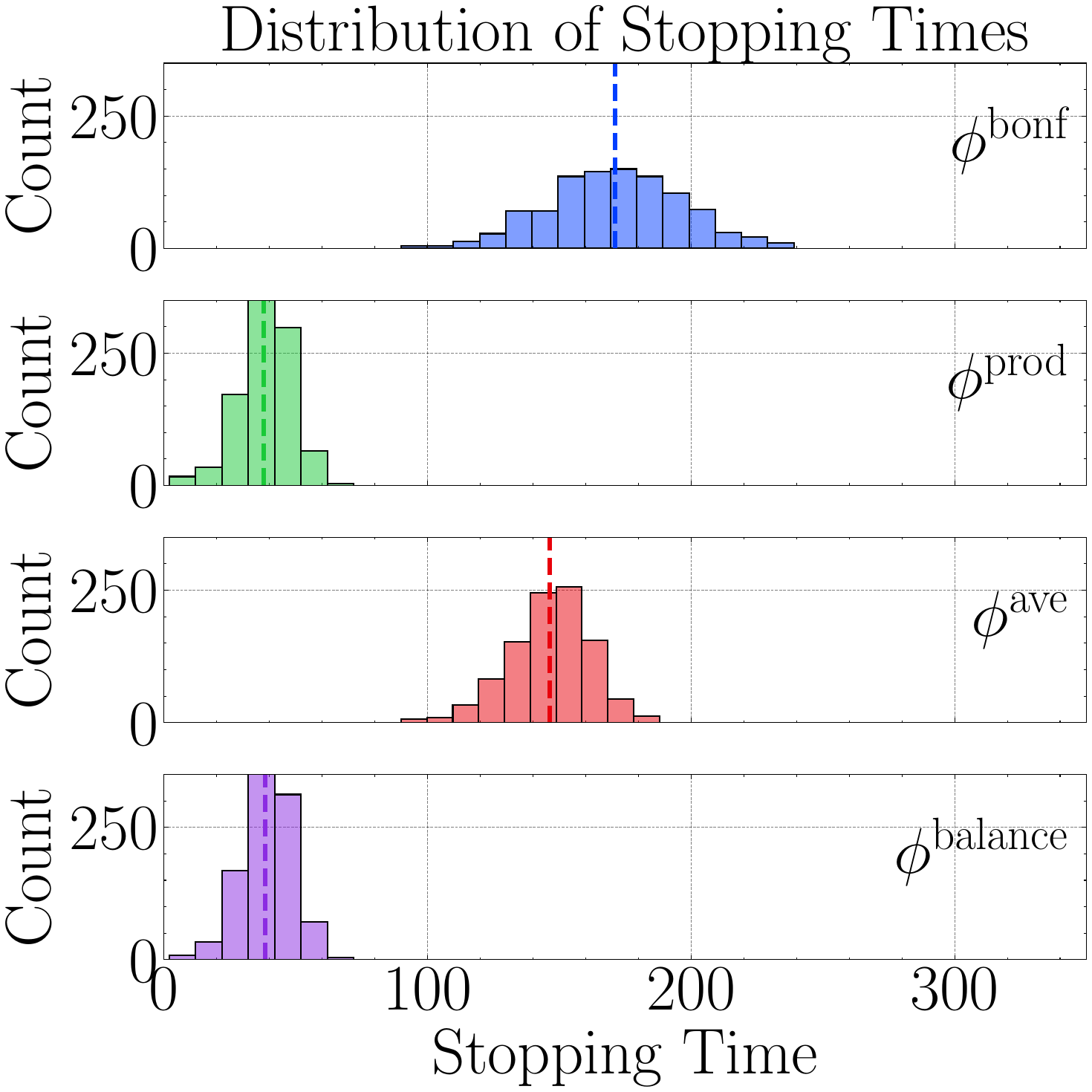}
    \end{subfigure}
    \vspace{1em}
    
    \hspace{-1em}
    \begin{subfigure}[t]{0.32\linewidth}
        \includegraphics[width=\linewidth]{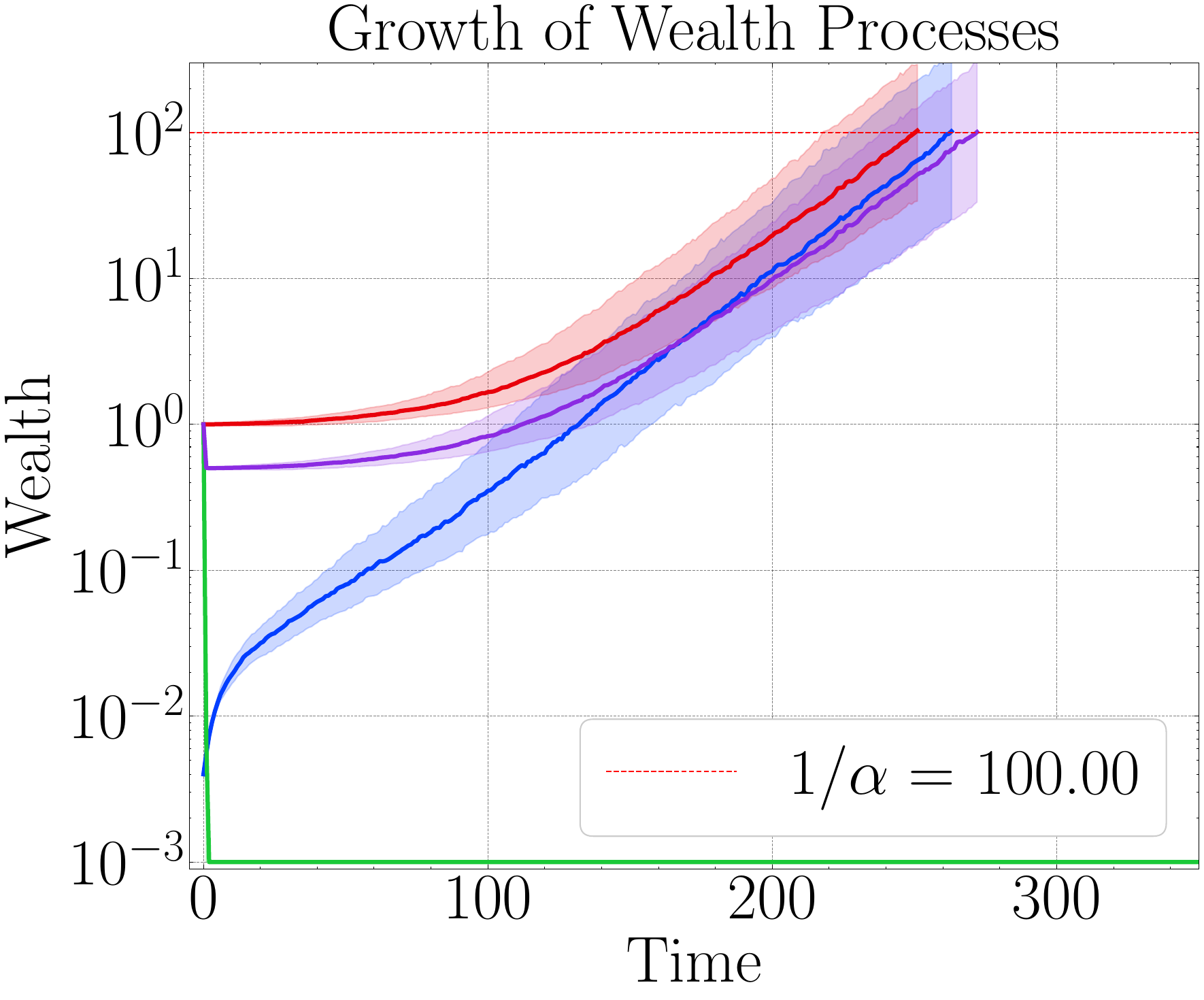}
    \end{subfigure}
    \begin{subfigure}[t]{0.32\linewidth}
        \includegraphics[width=\linewidth]{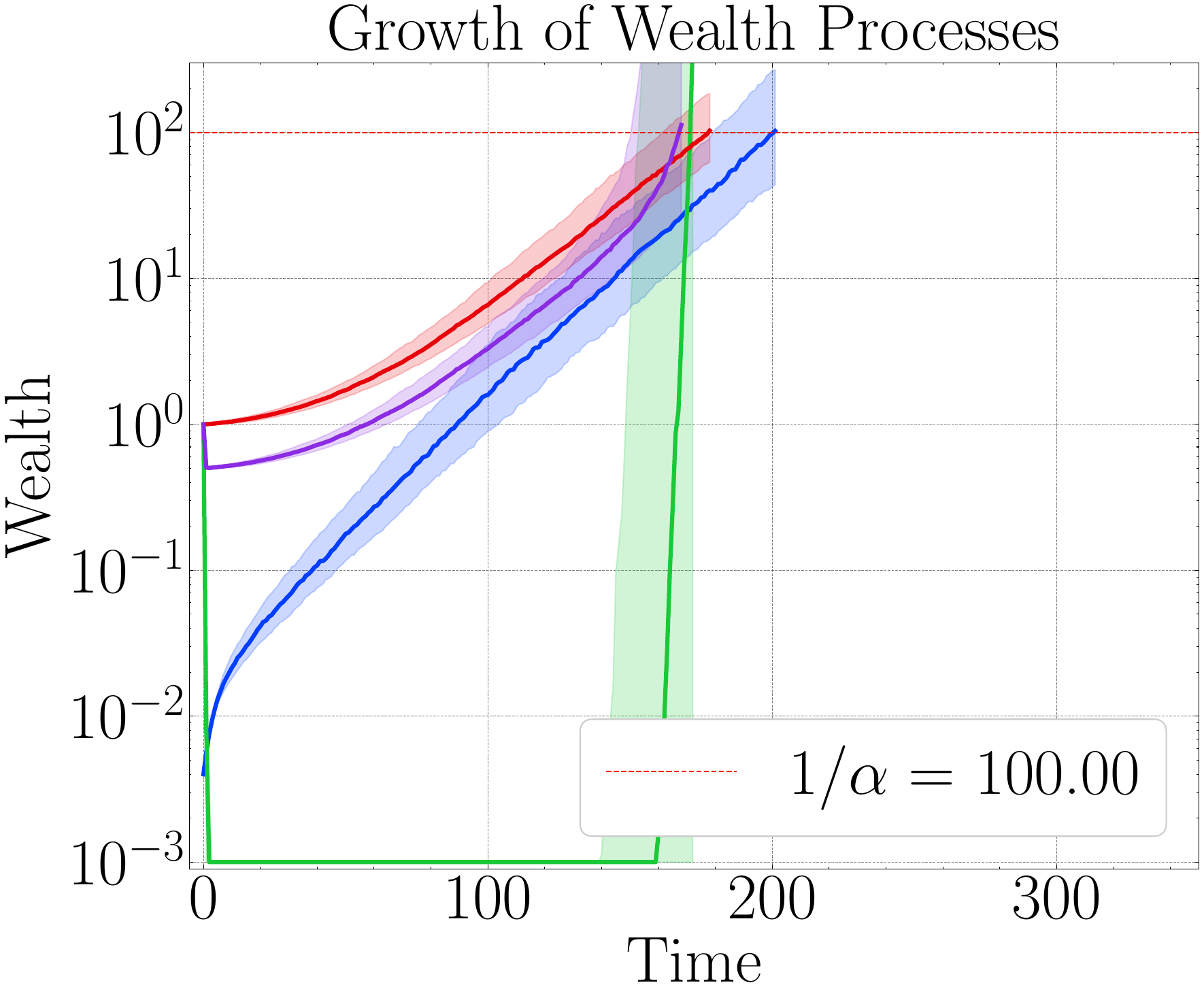}
    \end{subfigure}
    \begin{subfigure}[t]{0.32\linewidth}
        \includegraphics[width=\linewidth]{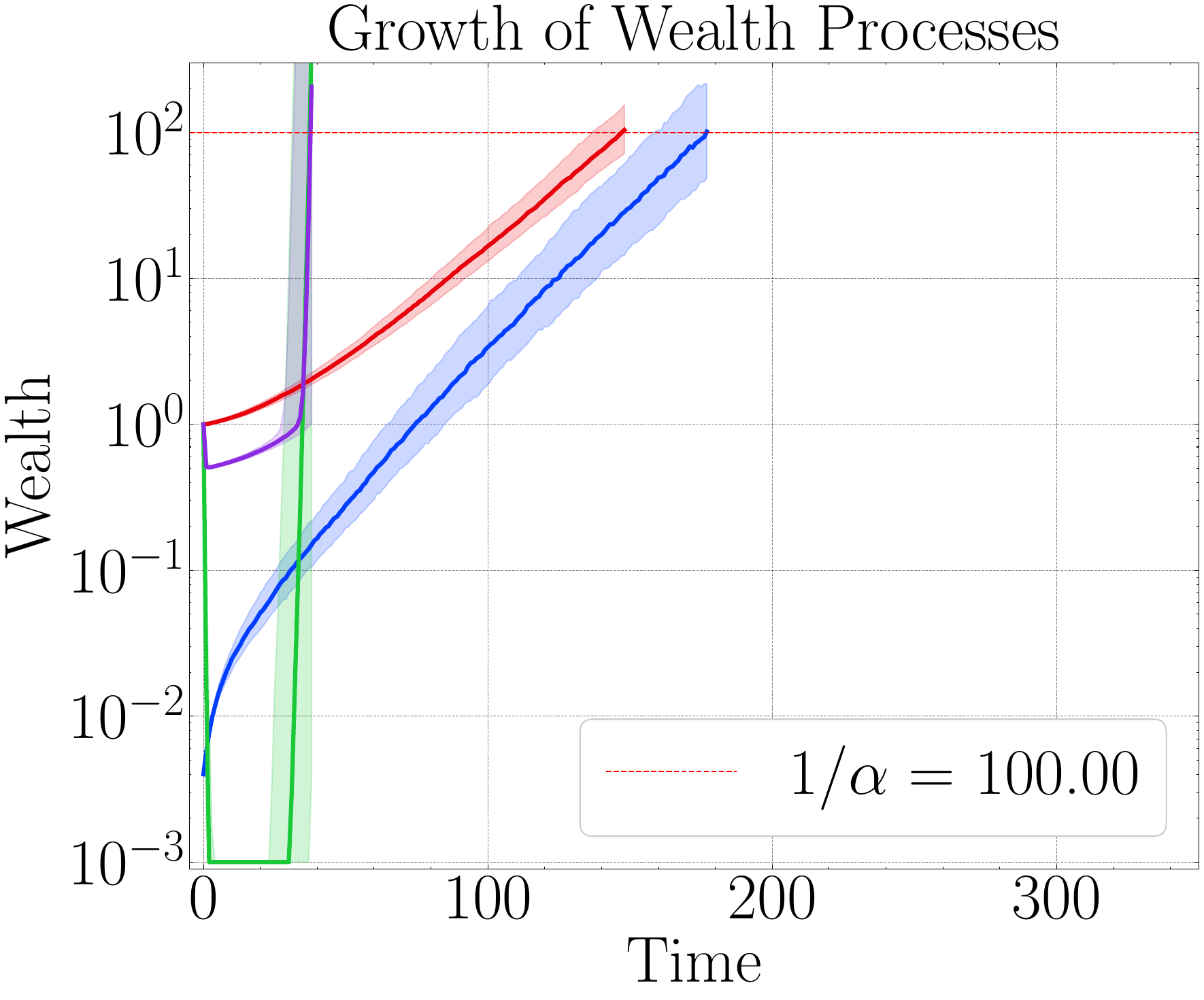}
    \end{subfigure}
    
    \vspace{-0.1em}
    \centering
    \hspace{1em}
    \includegraphics[width=0.7\textwidth]{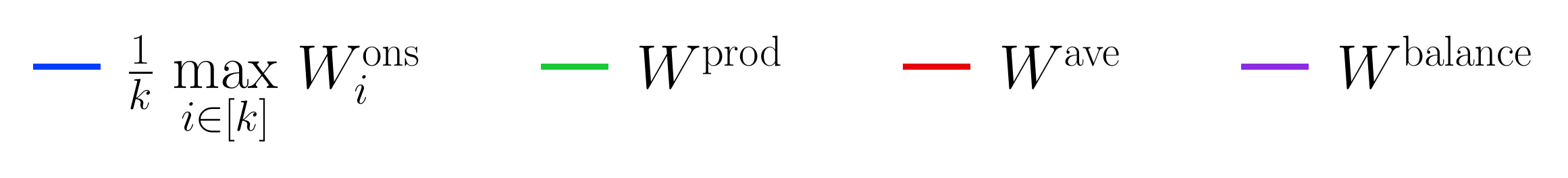}
    
    \vspace{-2em}
    \begin{subfigure}[t]{0.32\linewidth}
        \caption{$\left\lfloor \frac{k_1}{k}\right\rfloor = 0.05$}
        \label{fig:wealth_1}
    \end{subfigure}
    \begin{subfigure}[t]{0.32\linewidth}
        \caption{$\left\lfloor \frac{k_1}{k}\right\rfloor = 0.30$}
        \label{fig:wealth_2}
    \end{subfigure}
    \begin{subfigure}[t]{0.32\linewidth}
        \caption{$\left\lfloor \frac{k_1}{k}\right\rfloor = 0.75$}
        \label{fig:wealth_3}
    \end{subfigure}
    \vspace{0.5em}
    
    \caption{{\bf Top:} Distribution of stopping times, over 1,000 simulations, for various sequential tests across settings with varying proportions of streams with nonzero means. A test rejects when its corresponding wealth process exceeds $\nicefrac{1}{\alpha}$ for $\alpha = 0.01$. The dashed vertical line is the empirical mean of the stopping times. {\bf Bottom:} Trajectories of various wealth processes across settings with different amounts of nonzero means. Each line represents the median trajectory of a wealth process over 1,000 simulations, with shaded areas indicating the 25\% and 75\% quantiles. The y-axis is presented on a logarithmic scale. Wealth processes are clipped to $10^{-3}$ for visualization purposes.}
    \label{fig:synthetic_wealths}
\end{figure*}

\paragraph{Sparse alternative $\left(\left\lfloor \frac{k_1}{k}\right\rfloor = 0.01\right)$}. The stopping time distributions in \cref{fig:wealth_1} illustrate that when 1\% of streams have nonzero means, $\phi^{\textsf{bonf}}$ and  $\phi^{\textsf{ave}}$ have the smallest stopping times. In contrast, $\phi^{\textsf{prod}}$ fails to reject even after 1,000 time steps (only the first 350 time steps are shown). Notably, the stopping time distribution of $\phi^{\textsf{balance}}$, shown in the top panel, is nearly identical to those of $\phi^{\textsf{bonf}}$ and  $\phi^{\textsf{ave}}$. The processes tracked by these tests are nearly identical and grow at a similarly fast rate. In contrast, $W^{\textsf{prod}}$ does not grow, causing $\phi^{\textsf{prod}}$ to not reject.

\paragraph{Moderate alternative $\left(\left\lfloor \frac{k_1}{k}\right\rfloor = 0.30\right)$}. When the proportion of streams with nonzero means increases to 30\%, the behavior of the tests begin to change. The stopping time distribution in \cref{fig:wealth_2} shows that all of the tests have nearly identical distributions. This finding is supported by \cref{fig:wealth_2}, where we see all the wealth processes cross $\nicefrac{1}{\alpha}$ around $t \approx 140 - 200$. The interesting finding is how the wealth processes grow. The wealth processes tracked by $\phi^{\textsf{bonf}}$, $\phi^{\textsf{ave}}$, and $\phi^{\textsf{balance}}$ remain nearly identical and grow at the same rate. In contrast, $W^{\textsf{prod}}$ initially decreases, approaching zero, before finally rapidly increasing around $t \approx 160$.

\paragraph{Dense alternative $\left(\left\lfloor \frac{k_1}{k}\right\rfloor = 0.75\right)$}. When 75\% of streams have nonzero means, the behavior of the tests and wealth processes change significantly. The stopping time distributions in \cref{fig:wealth_3} show that $\phi^{\textsf{prod}}$ has the smallest times, with $\phi^{\textsf{balance}}$ nearly matching its behavior. On the other hand, $\phi^{\textsf{bonf}}$ and $\phi^{\textsf{ave}}$ have similar, but larger stopping times than $\phi^{\textsf{prod}}$ and $\phi^{\textsf{balance}}$. The wealth processes $W^{\textsf{prod}}$ and $W^{\textsf{balance}}$ are nearly identical, decreasing towards zero briefly before rapidly increasing and reaching $\nicefrac{1}{\alpha}$ in $t \leq 50$. The wealth processes tracked by $\phi^{\textsf{bonf}}$ and $\phi^{\textsf{ave}}$ are all similar, growing slower and more gradually than $W^{\textsf{prod}}$ and $W^{\textsf{balance}}$ and rejecting at a later time.

\subsection{Zero-shot medical image classification}
\begin{figure*}
    \centering
    \begin{subfigure}{0.47\linewidth}
        \centering
        \includegraphics[width=\linewidth]{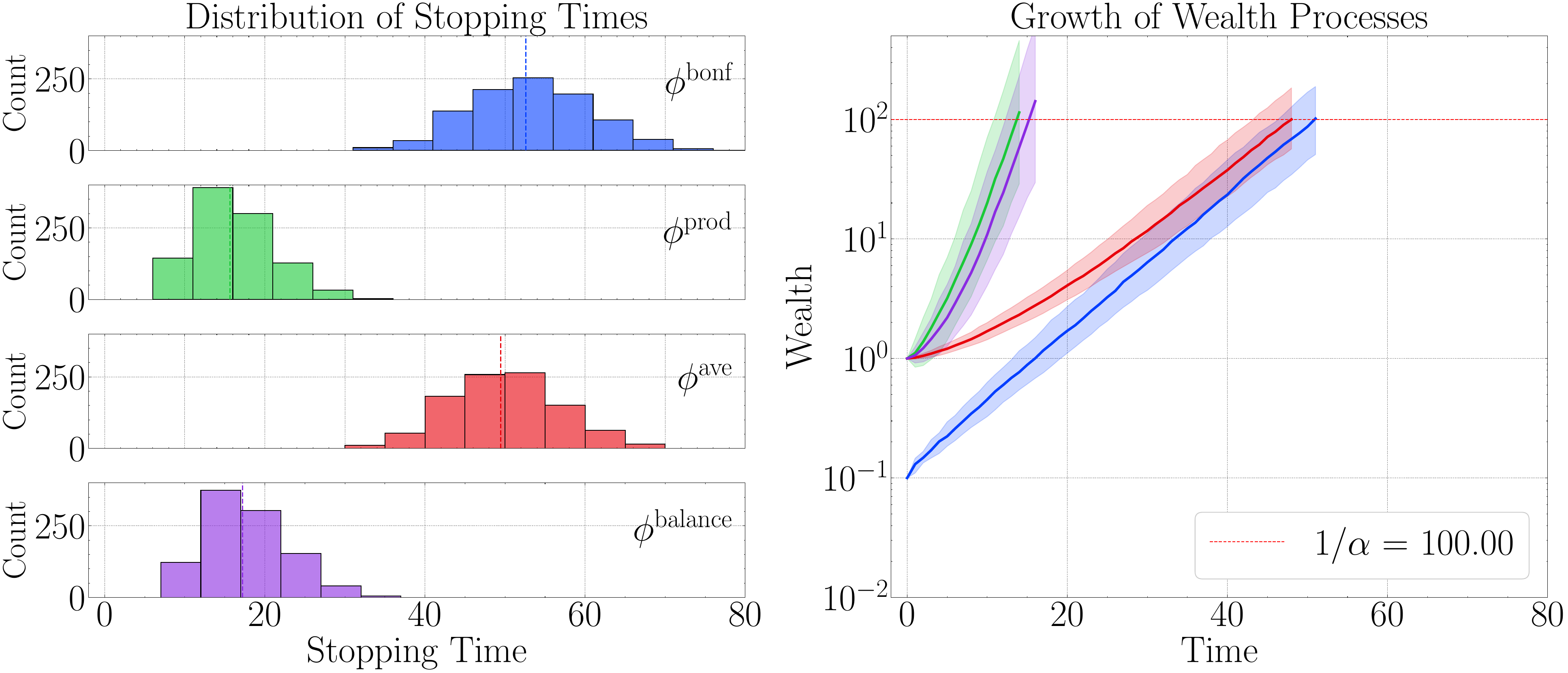}
        \label{fig:conceptCLIP_pre}
    \end{subfigure}
    \hspace{0.03\linewidth} 
    \begin{subfigure}{0.47\linewidth}
        \centering
        \includegraphics[width=\linewidth]{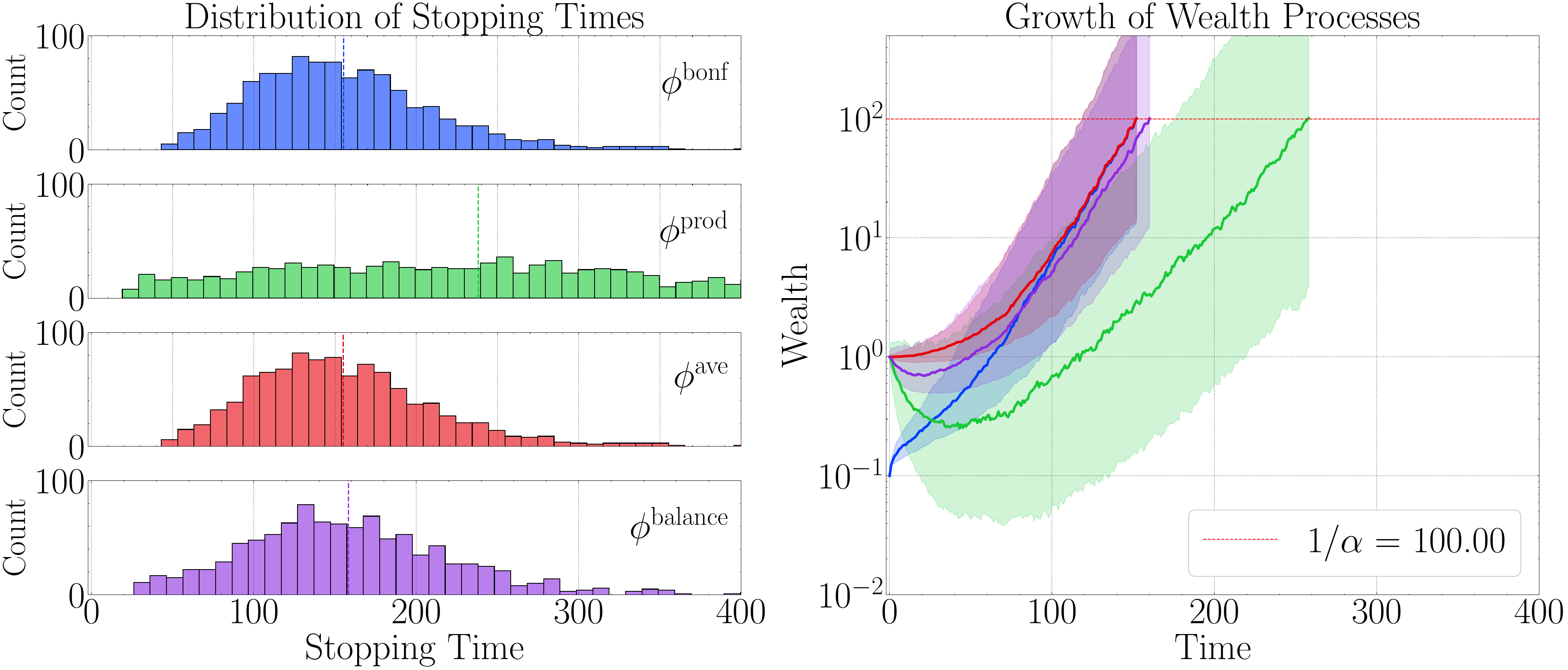}
        \label{fig:conceptclip_post}
    \end{subfigure}

    \vspace{-1.5em}
    \centering
    \hspace{1em}
    \includegraphics[width=0.65\textwidth]{figures/legend_horizontal.pdf}

    \vspace{-2em}
    \begin{subfigure}[t]{0.48\linewidth}
        \caption{Zero-shot multiaccuracy testing results.}
    \end{subfigure}
    \begin{subfigure}[t]{0.48\linewidth}
        \caption{Post-processed multiaccuracy testing results.}
    \end{subfigure}

    \caption{{\bf Left plot of each figure:} Distribution of stopping times, over 1,000 runs, for various sequential tests. A test rejects when its corresponding wealth process exceeds $\nicefrac{1}{\alpha}$ for $\alpha = 0.01$. The dashed vertical line is the empirical mean of the stopping times. {\bf Right plot of each figure:} Various wealth process trajectories. Each line represents the median trajectory of a wealth process over 1,000 runs, with shaded areas indicating the 25\% and 75\% quantiles.}
    \label{fig:conceptclip_results}
\end{figure*}

We evaluate ConceptCLIP \citep{nie2025conceptclip}, the first medical vision–language pretraining model designed to perform accurate prediction tasks on medical images across diverse modalities.  We assess whether the zero-shot ConceptCLIP model $f(X)$ is \emph{multiaccurate} \citep{kim2019multiaccuracy, hebert2018multicalibration} across $k=10$ groups, i.e. we evaluate if the predictions of $f$ are unbiased on every group. The groups we consider are defined by intersections of imaging modalities and anatomical regions, such as \texttt{lung computed tomography (CT)}, \texttt{chest X-ray}, \texttt{breast mammogram}, \texttt{retinal optical ct (OCT)}, and more. Additional details regarding the datasets are in \cref{app:add_exp_results}.

Specifically, we consider $k$ streams of medical image–label pairs $(X_{i,t}, Y_{i,t})_{t \geq 1}$, where $(X_{i,j}, Y_{i,j}) \sim P_i$ for each group $i \in [k]$, and test the global null hypothesis:
\begin{align}
    H^{\text{global}}_0: \forall i \in [k], \ \bE_{P_i}[Z] = 0.
\end{align}
where $Z = f(X) - Y$. Empirical estimates of $\bE_{P_i}[Z]$ for all groups are displayed in \cref{app:conceptclip_errors_table}, provided in \cref{app:add_exp_results}. In this experiment, we analyze which tests reject the null the fastest.

\paragraph{\bf Zero-shot multiaccuracy}. From  \cref{app:conceptclip_errors_table}, we see that ConceptCLIP makes biased predictions on a majority of groups, as $|\bE_{P_i}[Z]| \geq 0.05$ for eight of ten groups. The stopping times for the various tests and wealth processes displayed in \cref{fig:conceptCLIP_pre} support this finding. The tests $\phi^{\textsf{prod}}$ and $\phi^{\textsf{balance}}$ reject the quickest, on average, and their wealth processes grow rapidly. In contrast $\phi^{\textsf{bonf}}$ and $\phi^{\textsf{ave}}$ reject at later times. This is reflected in their wealth processes, which grow slower than $W^{\textsf{prod}}$ and $W^{\textsf{balance}}$.

\paragraph{Multiaccuracy adjustment}. For illustrative purposes, we post-process the predictions of ConceptCLIP to be multiaccurate \citep{hebert2018multicalibration} on all images except those of \texttt{colon endoscopy}, to simulate a sparse alternative where only stream has a nonzero mean. That is, we enforce $\bE_{P_i}[Z] \approx 0$ for all groups except for \texttt{colon endoscopy} images, which has $\bE_{P_i}[Z] \approx -0.09$. \cref{fig:conceptclip_post} displays the stopping times for the various tests and wealth processes and, as expected, we see that $\phi^{\textsf{bonf}}$ and $\phi^{\textsf{ave}}$ reject the quickest, as their wealth processes grow the fastest. Importantly, $\phi^{\textsf{balance}}$ nearly matches the performance of these tests. On the other hand, the remaining tests also reject, just at later times as their wealth processes grow more gradually.

\section{Conclusion}
\label{sec:conclusion}
We studied a testing problem where given several parallel streams of data, reflecting the performance of an ML system, the goal is to raise an alarm whenever any of the streams shows unusual behavior. We modeled this as a sequential hypothesis testing problem, gathering evidence against the global null hypothesis that all data streams have mean zero. Utilizing tools in mean testing for a single stream, we highlight that the standard global sequential test that employs a Bonferroni correction is not always efficient, mainly because of its inability to adapt to the number of streams with non-zero means. Thus, we studied test that relied on different strategies of merging evidence from multiple streams. We provide new stopping time results for tests that employ the average and product merging rules. Subsequently, we demonstrate that a balanced test, which balances these two merging strategies, has improved expected rejection times, rejecting faster than all the other tests while having controlled type-I error. Importantly, the balanced test draws the benefits of both product and average merging processes without having to know in advance the kind of alternative (sparse or dense). Finally, we empirically validated our results on both synthetic and real data, showing that our theoretical conclusions lead to faster and improved practical methods for flagging erroneous model behavior in multi-stream settings.


\bibliographystyle{plainnat}
\bibliography{arxiv/bibliography}
\newpage
\appendix 

\section{Proofs \label{app:proofs}}
For the following proofs, we recall our setting established in \cref{sec:prob_form}, introduce some additional notation, and define some additional properties of stochastic processes.

{\noindent{{\bf Setting, Notation, \& Definitions.}}} We have $k$ parallel data streams $(Z_{1,t})_{t \geq 1}, \dots, (Z_{k,t})_{t \geq 1}$. For each $i \in [k]$, the $i^{\text{th}}$ stream consists of i.i.d points and we denote its mean as $\Delta_i = \bE[Z_{i,1}]$. Moreover, we define $\Delta_{\textsf{m}} = \max_{i \in k} \Delta_i$, and $\Delta_{\textsf{s}} = \sum^k_{i=1} \Delta^2_i$. By definition, under the null $H^{\textsf{global}}_0$, $\Delta_i$, $\Delta_{\textsf{m}}$, and $\Delta_{\textsf{s}}$ equal zero. Under the alternative $H^{\textsf{global}}_1$, $\exists i \in [k]$, such that $\Delta_i \neq 0$. As a result, under $H^{\textsf{global}}_1$, $\Delta_{\textsf{m}} \neq 0$ and $\Delta_{\textsf{s}} \neq 0$. A stochastic process $M \equiv (M_t)_{t\geq 1}$ is adapted to the stream $(Z_{t})_{t \geq 1}$ if $\mathbb{E}_P[M_t | Z_1, \ldots, Z_{t}] = M_t$ $\forall t \geq 1$, and predictable if $\mathbb{E}_P[M_t | Z_1, \ldots, Z_{t-1}] = M_t$ $\forall t \geq 1$.

\subsection{Proof of level-$\alpha$ control and stopping time of $\phi^{\textsf{ons}}_i$ \label{app:prop1}}
\begin{proof} We first show that $\phi^{\textsf{ons}}_{i} \equiv (\phi^{\textsf{ons}}_{i,t})_{t\geq1}$ is a level-$\alpha$ sequential test for $H_{i,0}$. Then we derive an upper bound on its expected stopping time. The proof technique comes from \citet{chugg2023auditing}.

{\noindent {\bf Level-$\alpha$ test}}. Let $\alpha \in (0,1)$. Consider any $i \in [k]$. Recall $\phi^{\textsf{ons}}_{i,t} = \mathbf{1}[W^{\textsf{ons}}_{i,t} \geq 1/\alpha]$, where
\begin{align}
    W^{\textsf{ons}}_{i,t} = \prod_{j=1}^{t} \left(1 + \lambda_{i,j} \cdot Z_{i,j}\right)
\end{align}
and the sequence $(\lambda_{i,t})_{t\geq1}$ is determined by ONS (\cref{alg:ons}). First, by construction the wealth process $W^{\textsf{ons}}_{i}$ has initial value $W^{\textsf{ons}}_{i,0} = 1$. Second, $(\lambda_{i,t})_{t\geq1}$ is a $[-\nicefrac{1}{2}, \nicefrac{1}{2}]$-valued predictable sequence, thus $W^{\textsf{ons}}_{i}$ is adapted and $W^{\textsf{ons}}_{i,t} \geq 0$ for all $t \geq 1$. Furthermore, for any $P \in H_{i,0}$,
\begin{align*}
    \bE_{P}[W^{\textsf{ons}}_{i,t} \mid Z_{i,1}, \dots, &Z_{i,t-1}] = \bE_{P}\left[\prod_{j=1}^{t} \left(1 + \lambda_{i,j} \cdot Z_{i,j}\right) \mid Z_{i,1}, \dots, Z_{i,t-1} \right]\\
    &= \prod^{t-1}_{j=1}\left(1 + \lambda_{i,j} \cdot Z_{i,j}\right)\cdot \bE_{P}\left[1 + \lambda_{i,t} \cdot Z_{i,t}\mid Z_{i,1}, \dots, Z_{i,t-1}\right]\\
    &= W^{\textsf{ons}}_{i,t-1}\cdot(1 + \lambda_{i,t}\cdot\bE_{P}\left[Z_{i,t} \mid Z_{i,1}, \dots, Z_{i,t-1}\right]) = W^{\textsf{ons}}_{i,t-1}\cdot(1 + 0) = W^{\textsf{ons}}_{i,t-1}.
\end{align*}
Therefore $(W^{\textsf{ons}}_{i,t})_{t\geq 1}$ is a test martingale for $H_{i,0}$. Thus, by \nameref{theorem:villes},
\begin{align}
    \sup_{P \in H_{i,0}} P(\exists t \geq 1: \phi^{\textsf{ons}}_{i,t} = 1) = \sup_{P \in H_{i,0}} P(\exists t \geq 1: W^{\textsf{ons}}_{i,t} \geq 1/\alpha) \leq \alpha.
\end{align}

{\noindent {\bf Expected stopping time.}}
Let $\alpha \in (0,1)$. By definition, $\tau_i := \min\{t\geq 1: \phi^{\textsf{ons}}_{i,t} = 1\} = \min\{t \geq 1: W^{\textsf{ons}}_{i,t} \geq 1/\alpha\}$. Since $\tau_i$ is a non-negative integer valued random variable, for any $P \in H_{i,1}$
\begin{align}
    \bE_P[\tau_i] &= \sum^{\infty}_{t=0}P(\tau_i > t) \\
    &= 1 + \sum^{\infty}_{t=1}P(\tau_i > t) \\
    &\leq 1 + \sum^{\infty}_{t=1}P(W^{\textsf{ons}}_{i,t} < 1/\alpha)\\
    &= 1 + \sum^{\infty}_{t=1}P(\ln(W^{\textsf{ons}}_{i,t}) < \ln(1/\alpha)) = 1+ \sum^{\infty}_{t=1}P(E_{i,t})
\end{align}
where $E_{i,t} = \{\ln(W^{\textsf{ons}}_{i,t}) < \ln(1/\alpha)\}$. Now, defining
\begin{align}
    A_{i,t} = \sum^t_{j=1} Z_{i,j} \quad V_{i,t} = \sum^t_{j=1} (Z_{i,j})^2,
\end{align}
by \cref{lemma:log_wealth_lb} we have
\begin{align}
    \ln(W^{\textsf{ons}}_{i,t}) &\geq \frac{(A_{i,t})^2}{4(V_{i,t} + |A_{i,t}|)} - 2\ln\left(4t\right), \quad \forall t \geq 1.
\end{align}
Furthermore, $\forall t \geq 1$, we know $|A_{i,t}| \leq \sum_{j\leq t} |Z_{i,j}| \leq t$ and $V_{i,t} \leq t$. Thus,
\begin{align}
    E_{i,t} &\subseteq \left\{\frac{(A_{i,t})^2}{4(V_{i,t} + |A_{i,t}|)} - 2\ln\left(4t\right) < \ln(1/\alpha) \right\}\\
    &= \left\{ (A_{i,t})^2 < 4(V_{i,t} + |A_{i,t}|) (\ln(1/\alpha) + 2\ln(4t)) \right\}\\
    &\subseteq \left\{(A_{i,t})^2 < 8t\ln(1/\alpha) + 16t\ln(4t) \right\}\\
    &\subseteq \left\{ (A_{i,t})^2 < 16t\ln(4t/\alpha) \right\}\\
    &\subseteq \left\{ |A_{i,t}| < 4\sqrt{t\ln(4t/\alpha)} \right\}.
\end{align}
Since $A_{i,t} = \sum_{j \leq t} Z_{i,t}$ where $Z_{i,j} \in [-1,1]$ by \cref{lemma:concentration}, with probability at least $1 - \nicefrac{1}{t^2}$
\begin{align}
    |A_{i,t}|  &\geq \bE[A_{i,t}] - \sqrt{4t\ln(2t)} = t\cdot\left|\Delta_i\right| - \sqrt{4t\ln(2t)}.
\end{align}
By \cref{lemma:loglemma}, 
\begin{align}
    \frac{1}{2}t\cdot\left|\Delta_i\right| &\geq \sqrt{4t\ln(2t)} ~~ \text{for all}~~ t \geq \frac{32}{\left|\Delta_i\right|^2}\ln\left(\frac{32}{\left|\Delta_i\right|^2}\right)\\
    \frac{1}{2}t\cdot\left|\Delta_i\right| &\geq 4\sqrt{t\ln(4t/\alpha)} ~~ \text{for all}~~ t \geq \frac{128}{\left|\Delta_i\right|^2}\ln\left(\frac{256}{\alpha \cdot \left|\Delta_i\right|^2}\right).
\end{align}
As a result, for
\begin{align}
    t \geq T := \frac{128}{\left|\Delta_i\right|^2}\ln\left(\frac{256}{\alpha \cdot \left|\Delta_i\right|^2}\right) + \frac{32}{\left|\Delta_i\right|^2}\ln\left(\frac{32}{\left|\Delta_i\right|^2}\right) = \mathcal{O}\left(\frac{1}{\Delta_i^2}\ln \frac{1}{\alpha \cdot \Delta_i^2}\right).
\end{align}
we have
\begin{align}
    t\cdot\left|\Delta_i\right| \geq 4\sqrt{t\ln(4t/\alpha)} + \sqrt{4t\ln(2t)}.
\end{align}
Therefore, by the law of total probability, for $t \geq T$
\begin{align}
    P(E_{i,t}) &\leq P\left(|A_{i,t}| < 4\sqrt{t\ln(4t/\alpha)} \right) \leq 1/t^2
\end{align}
and so we can conclude
\begin{align}
    \bE[\tau_i] \leq 1 + \sum^\infty_{t=1}P(E_{i,t}) &= 1+ \sum^T_{t=1}P(E_{i,t}) + \sum^\infty_{t=T}P(E_{i,t}) \leq 1 + T + \sum^\infty_{t=T}\frac{1}{t^2} \leq 1 + T + \frac{\pi^2}{6}.
\end{align}
\end{proof}

\subsection{Proof of \cref{proposition:prop2} \label{app:prop2}}
\chugg*
\begin{proof} We first show that $\phi^{\textsf{bonf}} \equiv (\phi^{\textsf{bonf}}_t)_{t \geq 1}$ is a level-$\alpha$ sequential test for $H^{\textsf{global}}_{0}$. Then we derive the upper bound on the expected stopping time.

{\noindent {\bf Level-$\alpha$ test}}. Let $\alpha \in (0,1)$. Recall $ \phi^{\textsf{bonf}}_t = \mathbf{1}\left[\max_{i \in [k]} W^{\textsf{ons}}_{i,t} \geq \frac{k}{\alpha}\right]$, where $\forall i \in [k]$
\begin{align}
    W^{\textsf{ons}}_{i,t} = \prod_{j=1}^{t} \left(1 + \lambda_{i,j} \cdot Z_{i,j}\right)
\end{align}
and the wealth processes $W^{\textsf{ons}}_{i}$ for all $i \in [k]$ are test-martingales for their respective nulls $H_{0,i}$. By \nameref{theorem:villes} and a union bound
\begin{align}
     \sup_{P \in H_{i,0}} P(\exists t\geq 1: \phi^{\textsf{bonf}}_t = 1) &= \sup_{P \in H_{i,0}} P\left(\exists t\geq 1: \max_{i \in [k]} W^{\textsf{ons}}_{i,t} \geq \frac{k}{\alpha} = 1\right)\\
     &\leq P\left(\exists t\geq 1: \cup^k_{i=1} W^{\textsf{ons}}_{i,t} \geq k/\alpha\right)\\
     &\leq \sum^k_{i=1} P(\exists t\geq 1: W^{\textsf{ons}}_{i,t} \geq k/\alpha)\\ &\leq \frac{\alpha}{k}k = \alpha.
\end{align}

{\noindent {\bf Expected stopping time}}. 
Let $\alpha \in (0,1)$. By definition, $\tau := \min\{t \geq 1: \phi^{\textsf{bonf}}_{t} = 1\} = \min\{t\geq 1: \max_{i \in [k]} W^{\textsf{ons}}_{i,t} \geq k/\alpha\}$. Since $\tau$ is a non-negative integer valued random variable, for any $P \in H^{\textsf{global}}_{1}$ and any $i \in [k]$
\begin{align}
    \bE_P[\tau] &= \sum^{\infty}_{t=0}P(\tau > t)\\
    &= 1 + \sum^{\infty}_{t=1}P(\tau > t)\\
    &\leq 1 + \sum^{\infty}_{t=1}P\left( \max_{i \in [k]}W^{\textsf{ons}}_{i,t} < k/\alpha\right)\\
    &\leq 1 + \sum^{\infty}_{t=1}P\left( W^{\textsf{ons}}_{i,t} < k/\alpha\right).
\end{align}
Applying the same argument used in the expected stopping time proof of $\phi^{\textsf{ons}}_{i}$ presented in \cref{app:prop1}, we get
\begin{align}
    \bE[\tau] \leq 1 + T + \frac{\pi^2}{6} \quad \text{where} \quad T = \frac{128}{\left(\left|\Delta_i\right|\right)^2}\ln\left(\frac{256}{(\nicefrac{\alpha}{k}) \cdot \left(\left|\Delta_i\right|\right)^2}\right) + \frac{32}{\left(\left|\Delta_i\right|\right)^2}\ln\left(\frac{32}{\left(\left|\Delta_i\right|\right)^2}\right).
\end{align}
Since this holds for any $i \in [k]$
\begin{align}
    \bE[\tau] \leq \min_{i \in [k]} \left\{\frac{128}{\left|\Delta_i\right|^2}\ln\left(\frac{256k}{\alpha \cdot \left|\Delta_i\right|^2}\right) + \frac{32}{\left|\Delta_i\right|^2}\ln\left(\frac{32}{\left|\Delta_i\right|^2}\right)\right\} + \frac{\pi^2}{6}.
\end{align}
Furthermore,
\begin{align}
    &\min_{i \in [k]} \left\{\frac{128}{\left|\Delta_i\right|^2}\ln\left(\frac{256k}{\alpha \cdot \left|\Delta_i\right|^2}\right) + \frac{32}{\left|\Delta_i\right|^2}\ln\left(\frac{32}{\left|\Delta_i\right|^2}\right)\right\} \leq \frac{128}{\Delta_{\textsf{m}}^2}\ln\left(\frac{256k}{\alpha \cdot \Delta_{\textsf{m}}^2}\right) + \frac{32}{\Delta_{\textsf{m}}^2}\ln\left(\frac{32}{\Delta_{\textsf{m}}^2}\right)
\end{align}
concluding the proof.
\end{proof}

\subsection{Proof of \cref{theorem:prod_stopping_time} \label{app:prod_stopping_time}}
\prodstop*
\begin{proof}
We first show that $\phi^{\textsf{prod}} = (\phi^{\textsf{ons}}_{i,t})_{t\geq1}$ is a level-$\alpha$ sequential test for $H^{\textsf{global}}_0$. Then we derive the upper bound on its expected stopping time.

\noindent{{\bf Level-$\alpha$ test}}. Let $\alpha \in (0,1)$. Recall $\phi^{\textsf{prod}}_{t} = \mathbf{1}[W^{\textsf{prod}}_{t} \geq 1/\alpha]$, where
\begin{align}
    W^{\textsf{prod}}_{t} = \prod^k_{i=1} W^{\textsf{ons}}_{i,t}.
\end{align}
For all $i \in [k]$, the wealth processes $W^{\textsf{ons}}_{i}$ are test martingales for their respective nulls $H_{i,0}$. By definition, they are also test-martingales for $ H^{\textsf{global}}_0$. Furthermore, $W^{\textsf{prod}}_0 = 1$, $W^{\textsf{prod}}_0 \geq 0$ for all $t \geq 1$, and for any $P \in H^{\textsf{global}}_0$
\begin{align*}
    \bE_{P}[W^{\textsf{prod}}_{t} &\mid Z_{1,1}, \dots, Z_{1,t-1}, \dots, Z_{k,1}, \dots, Z_{k,t-1}]\\
    &= \bE\left[\prod^k_{i=1} W^{\textsf{ons}}_{i,t} \mid Z_{1,1}, \dots, Z_{1,t-1}, \dots, Z_{k,1}, \dots, Z_{k,t-1} \right]\\
    &= \prod^k_{i=1}\bE\left[W^{\textsf{ons}}_{i,t} \mid Z_{i,1}, \dots, Z_{i,t-1}\right]\\
    &= 
    \prod^k_{i=1}W^{\textsf{ons}}_{i,t-1}.
\end{align*}
Therefore $W^{\textsf{prod}}$ is a test martingale for $H^{\textsf{global}}_{0}$ and so, by \nameref{theorem:villes},
\begin{align}
    \sup_{P \in H^{\textsf{global}}_0} P(\exists t \geq 0: \phi^{\textsf{prod}}_{t} = 1) = \sup_{P \in H^{\textsf{global}}_0} P(\exists t \geq 0: W^{\textsf{prod}}_{t} \geq 1/\alpha) \leq \alpha.
\end{align}
{\bf Expected stopping time}. Let $\alpha \in (0,1)$. By definition, $\tau := \min\{t: \phi^{\textsf{prod}}_{t} = 1\} = \min\{t: W^{\textsf{prod}}_{t} \geq 1/\alpha\}$. Since $\tau$ is a non-negative integer valued random variable, for any $P \in H^{\textsf{global}}_{1}$
\begin{align}
    \bE[\tau] = \sum^{\infty}_{t=1}P(\tau > t) = \sum^{\infty}_{t=1}P(W^{\textsf{prod}}_{t} < 1/\alpha)
    &= \sum^{\infty}_{t=1}P\left(\prod^k_{i=1}W^{\textsf{ons}}_{i,t} < 1/\alpha\right)\\
    &= \sum^{\infty}_{t=1}P\left(\ln\left(\prod^k_{i=1}W^{\textsf{ons}}_{i,t}\right) < \ln(1/\alpha)\right)\\
    &= \sum^{\infty}_{t=1}P\left(\sum^k_{i=1}\ln\left(W^{\textsf{ons}}_{i,t}\right) < \ln(1/\alpha)\right)\\
    &= \sum^{\infty}_{t=1}P(E_t),
\end{align}
where $E_t = \left\{\sum^k_{i=1}\ln\left(W^{\textsf{ons}}_{i,t}\right) < \ln(1/\alpha)\right\}$. Now, defining
\begin{align}
    A_{i,t} = \sum^t_{j=1} Z_{i,j},  \quad V_{i,t} = \sum^t_{j=1} (Z_{i,j})^2,
\end{align}
by \cref{lemma:log_wealth_lb}, 
$\forall i \in [k]$, we have the following guarantee for $W^{\textsf{ons}}_{i,t}$
\begin{align}
    \ln(W^{\textsf{ons}}_{i,t}) &\geq \frac{(A_{i,t})^2}{4(V_{i,t} + |A_{i,t}|)} - 2\ln\left(4t\right), \quad \forall t \geq 1.
\end{align}
Therefore
\begin{align}
    \sum^k_{i=1}\ln(W^{\textsf{ons}}_{i,t}) \geq \sum^k_{i=1}\frac{(A_{i,t})^2}{4(V_{i,t} + |A_{i,t}|)} - 2k\ln(4t), \quad \forall t \geq 1.
\end{align}
Furthermore, $\forall i \in [k]$, $\forall t \geq 1$, $|A_{i,t}| \leq \sum_{j\leq t} |Z_{i,j}| \leq t$ and $V_{i,t} \leq t$. Thus,
\begin{align}
    E_t &\subseteq \left\{ \sum^k_{i=1}\frac{(A_{i,t})^2}{4(V_{i,t} + |A_{i,t}|)} - 2k\ln(4t)< \ln(1/\alpha) \right\}\\
    &\subseteq \left\{\sum^k_{i=1}(A_{i,t})^2 < 8t\ln(1/\alpha) + 16tk\ln(4t) \right\}.
\end{align}
Now consider the function
\begin{align}
    \psi (Z_{1,1}, \dots, Z_{1,t}, \dots, Z_{k,1}, \dots, Z_{k,t}) = \sum^k_{i=1}(A_{i,t})^2.
\end{align}  
Consider the vector $(Z_{1,1}, \ldots, Z_{k,t})$ and another vector $(\bar{Z}_{1,1}, \ldots, \bar{Z}_{k,t})$ which differs from the first vector in exactly one coordinate. Since $\psi$ is symmetric with respect to $(Z_{1,1}, \ldots, Z_{k,t})$, we can assume this coordinate is $Z_{1,1}$ without loss of generality. Then
\begin{align}
    \big| \psi(Z_{1,1}, \ldots, Z_{k,t}) - \psi(\bar Z_{1,1}, \ldots, \bar Z_{k,t})\big| &= \left|(Z_{1,1})^2 - (\bar Z_{1,1})^2 + 2 \left(\sum_{t'=2}^t Z_{1,t'}\right) (Z_{1,1} - \bar{Z}_{1,1})\right| \\
    &\leq 2 + 4 (t - 1) \leq 4t.
\end{align}
So, by \nameref{lemma:mcdiarmid}, we have
\begin{align}
    P \big( \psi - \bE[\psi] \leq -\beta \big) \leq \exp\left(\frac{-2\beta^2}{16kt^3}\right).
\end{align}
Setting the right hand side equal to $1/t^2$ and solving for $\beta$ yields $\beta = t^{1.5}\sqrt{16k\ln(t)}$. Thus, with probability at least $1 - \nicefrac{1}{t^2}$:
\begin{align}
    \sum^k_{i=1} (A_{i,t})^2 \geq \bE[\psi] - t^{1.5} \sqrt{16k \ln(t)} 
    &= \sum_{i=1}^k \bE[(A_{i,t})^2] - t^{1.5} \sqrt{16k \ln(t)} \\
    &\geq  \sum_{i=1}^k  \bE[A_{i,t}]^2 - t^{1.5} \sqrt{16k \ln(t)}\\
    &= t^2\sum_{i=1}^k  \Delta_i^2 - t^{1.5} \sqrt{16k \ln(t)}\\
    &=t^2\Delta_{\textsf{s}} - t^{1.5} \sqrt{16k \ln(t)},
\end{align}
where the last equality holds because, by definition, $\Delta_{\textsf{s}} = \sum^k_{i=1} \Delta^2_i.$ Now, by \cref{lemma:loglemma},
\begin{align}
    \frac{1}{3}t^2\Delta_{\textsf{s}} &\geq 16tk\ln(4t) ~~ \text{for all}~~ t \geq \frac{96k}{\Delta_{\textsf{s}}}\ln\left(\frac{192k}{\Delta_{\textsf{s}}}\right)\\
    \frac{1}{3}t^2 \Delta_{\textsf{s}} &\geq t^{1.5} \sqrt{16k\ln(t)} ~~ \text{for all}~~ t \geq \frac{288k}{\Delta_{\textsf{s}}^2}\ln\left(\frac{144k}{\Delta_{\textsf{s}}^2}\right)
\end{align}
and furthermore $\frac{1}{3}t^2\Delta_{\textsf{s}} \geq 8t\ln(1/\alpha) $ for all $t \geq \frac{24}{\Delta_{\textsf{s}}}\ln(1/\alpha)$. As a result, for 
\begin{align}
    t \geq T_1 := &\frac{96k}{\Delta_{\textsf{s}}}\ln\left(\frac{192k}{\Delta_{\textsf{s}}}\right) + \frac{288k}{\Delta_{\textsf{s}}^2}\ln\left(\frac{144k}{\Delta_{\textsf{s}}^2}\right) + \frac{24}{\Delta_{\textsf{s}}}\ln\left(\frac{1}{\alpha}\right)
\end{align}
we have 
\begin{align}
    t^2\Delta_{\textsf{s}} \geq t^{1.5} \sqrt{16k \ln(t)} + 8tk\ln(1/\alpha) + 16tk\ln(4t).
\end{align}
Therefore, by the law of total probability, for $t \geq T_1$
\begin{align}
    P(E_{t}) &\leq P\left(\sum^k_{i=1}(A_{i,t})^2 < 8tk\ln(1/\alpha) + 16tk\ln(4t)\right) \leq 1/t^2
\end{align}
and so we can conclude
\begin{align}
    \bE[\tau] \leq \sum^\infty_{t=1}P(E_{t}) = \sum^{T_1}_{t=1}P(E_{t}) + \sum^\infty_{t=T_1}P(E_{t})
    &\leq T_1 + \sum^\infty_{t=T_1}\frac{1}{t^2} \leq T_1 + \frac{\pi^2}{6}.
\end{align}
\end{proof}

\subsection{Proof of \cref{theorem:ave_stopping_time} \label{app:ave_stopping_time}}
\avestop*
\begin{proof}
We first show that $\phi^{\textsf{ave}} = (\phi^{\textsf{ave}}_{t})_{t\geq1}$ is a level-$\alpha$ sequential test for $H^{\textsf{global}}_0$. Then we derive the upper bound on its expected stopping time.

{\noindent {\bf Level-$\alpha$ test}}. Let $\alpha \in (0,1)$. Recall $\phi^{\textsf{ave}}_{t} = \mathbf{1}[W^{\textsf{ave}}_{t} \geq 1/\alpha]$, where
\begin{align}
    W^{\textsf{ave}}_{t} = \frac{1}{k}\sum^k_{i=1} W^{\textsf{ons}}_{i,t}.
\end{align}
For all $i \in [k]$, the wealth processes $W^{\textsf{ons}}_{i}$ are test-martingales for their respective nulls $H_{i,0}$. By definition, they are also test-martingales for $ H^{\textsf{global}}_0$. Furthermore $W^{\textsf{ave}}_0 = 1$, $W^{\textsf{ave}}_0 \geq 0$ for all $t \geq 1$, and for any $P \in H^{\textsf{global}}_0$
\begin{align*}
    \bE_{P}[W^{\textsf{ave}}_{t} &\mid Z_{1,1}, \dots, Z_{1,t-1}, \dots, Z_{k,1}, \dots, Z_{k,t-1}]\\
    &= \bE\left[\frac{1}{k}\sum^k_{i=1} W^{\textsf{ons}}_{i,t} \mid Z_{1,1}, \dots, Z_{1,t-1}, \dots, Z_{k,1}, \dots, Z_{k,t-1} \right]\\
    &= \frac{1}{k}\sum^k_{i=1}\bE\left[W^{\textsf{ons}}_{i,t} \mid  Z_{i,1}, \dots, Z_{i,t-1}\right]\\
    &= \frac{1}{k}\sum^k_{i=1}W^{\textsf{ons}}_{i,t-1}.
\end{align*}
Thus $W^{\textsf{ave}}_{t}$ is a test-martingale for $H^{\textsf{global}}_0$. So, by \nameref{theorem:villes},
\begin{align}
    \sup_{P \in H^{\textsf{global}}_0} P(\exists t \geq 0: \phi^{\textsf{ave}}_{t} = 1) = \sup_{P \in H^{\textsf{global}}_0} P(\exists t \geq 0: W^{\textsf{ave}}_{t} \geq 1/\alpha) \leq \alpha.
\end{align}

{\noindent {\bf Expected stopping time}}. Let $\alpha \in (0,1)$. By definition, $\tau := \min\{t: \phi^{\textsf{ave}}_{t} = 1\} = \min\{t: W^{\textsf{ave}}_{t} \geq 1/\alpha\}$. Since $\tau$ is a non-negative integer valued random variable, for any $P \in H^{\textsf{global}}_{1}$
\begin{align}
    \bE[\tau] = \sum^{\infty}_{t=1}P(\tau > t) \leq \sum^{\infty}_{t=1}P(W^{\textsf{ave}}_{t} < 1/\alpha)
    &= \sum^{\infty}_{t=1}P\left(\frac{1}{k}\sum^k_{i=1}W^{\textsf{ons}}_{i,t} < 1/\alpha\right)\\
    &= \sum^{\infty}_{t=1}P\left(\ln\left(\frac{1}{k}\sum^k_{i=1}W^{\textsf{ons}}_{i,t}\right) < \ln(1/\alpha)\right)\\
    &\leq \sum^{\infty}_{t=1}P\left(\frac{1}{k}\sum^k_{i=1}\ln\left(W^{\textsf{ons}}_{i,t}\right) < \ln(1/\alpha)\right)\\
    &= \sum^{\infty}_{t=1}P\left(\sum^k_{i=1}\ln\left(W^{\textsf{ons}}_{i,t}\right) < k\ln(1/\alpha)\right) \\
    &= \sum^{\infty}_{t=1}P(E_t)
\end{align}
where $E_t = \left\{\sum^k_{i=1}\ln\left(W^{\textsf{ons}}_{i,t}\right) < k\ln(1/\alpha)\right\}$ and $\frac{1}{k}\sum^k_{i=1}\ln\left(W^{\textsf{ons}}_{i,t}\right) \leq \frac{1}{k}\sum^k_{i=1}W^{\textsf{ons}}_{i,t}$ due to Jensen's inequality. Following the stopping time proof of $\phi^{\textsf{prod}}$ in \cref{app:prod_stopping_time}, we have
\begin{align}
    \sum^k_{i=1}\ln(W^{\textsf{ons}}_{i,t}) \geq \sum^k_{i=1}\frac{(A_{i,t})^2}{4(V_{i,t} + |A_{i,t}|)} - 2k\ln(4t), \quad \forall t \geq 1.
\end{align}
where $A_{i,t} = \sum^t_{j=1} Z_{i,j}$ and $V_{i,t} = \sum^t_{j=1} (Z_{i,j})^2$. Since $|A_{i,t}| \leq \sum_{j\leq t} |Z_{i,j}| \leq t$ and $V_{i,t} \leq t$ for all $i \in [k]$ and all $ t \geq 1$, we have
\begin{align}
    E_t &\subseteq \left\{ \sum^k_{i=1}\frac{(A_{i,t})^2}{4(V_{i,t} + |A_{i,t}|)} - 2k\ln(4t)< k\ln(1/\alpha) \right\}\\
    &\subseteq \left\{\sum^k_{i=1}(A_{i,t})^2 < 8t(k\ln(1/\alpha) + 2k\ln(4t)) \right\} \\
    &=\left\{\sum^k_{i=1}(A_{i,t})^2 < 8tk\ln(1/\alpha) + 16tk\ln(4t) \right\}.
\end{align}
From the stopping time proof of $\phi^{\textsf{prod}}$ in \cref{app:prod_stopping_time}, we know with probability at least $1 - \nicefrac{1}{t^2}$:
\begin{align}
    \sum^k_{i=1} (A_{i,t})^2 &\geq t^2\Delta_{\textsf{s}} - t^{1.5} \sqrt{16k \ln(t)}.
\end{align}
By \cref{lemma:loglemma},
\begin{align}
    \frac{1}{3}t^2\Delta_{\textsf{s}} &\geq 16tk\ln(4t) ~~ \text{for all}~~ t \geq \frac{96k}{\Delta_{\textsf{s}}}\ln\left(\frac{192k}{\Delta_{\textsf{s}}}\right)\\
    \frac{1}{3}t^2 \Delta_{\textsf{s}} &\geq t^{1.5} \sqrt{16k\ln(t)} ~~ \text{for all}~~ t \geq \frac{288k}{\Delta_{\textsf{s}}^2}\ln\left(\frac{144k}{\Delta_{\textsf{s}}^2}\right)
\end{align}
and furthermore $\frac{1}{3}t^2\Delta_{\textsf{s}} \geq 8tk\ln(1/\alpha) $ for all $t \geq \frac{24k}{\Delta_{\textsf{s}}}\ln(1/\alpha)$. As a result, for 
\begin{align}
    t \geq T_1 := &\frac{96k}{\Delta_{\textsf{s}}}\ln\left(\frac{192k}{\Delta_{\textsf{s}}}\right) + \frac{288k}{\Delta_{\textsf{s}}^2}\ln\left(\frac{144k}{\Delta_{\textsf{s}}^2}\right) + \frac{24k}{\Delta_{\textsf{s}}}\ln(1/\alpha)
\end{align}
we have 
\begin{align}
    t^2\sum_{i=1}^k  \Delta_i^2 \geq t^{1.5} \sqrt{16k \ln(t)} + 8tk\ln(1/\alpha) + 16tk\ln(4t).
\end{align}
Therefore, by the law of total probability, for $t \geq T_1$
\begin{align}
    P(E_{t}) &\leq P\left(\sum^k_{i=1}(A_{i,t})^2 < 8tk\ln(1/\alpha) + 16tk\ln(4t)\right) \leq 1/t^2
\end{align}
and so we can conclude
\begin{align}
    \bE[\tau] \leq \sum^\infty_{t=1}P(E_{t}) = \sum^{T_1}_{t=1}P(E_{t}) + \sum^\infty_{t=T_1}P(E_{t})
    &\leq T_1 + \sum^\infty_{t=T_1}\frac{1}{t^2} \leq T_1 + \frac{\pi^2}{6}.
\end{align}
One can also see that for any $i \in [k]$
\begin{align}
    \bE[\tau] = \sum^{\infty}_{t=1}P(\tau > t) \leq \sum^{\infty}_{t=1}P(W^{\textsf{ave}}_{t} < 1/\alpha)
    &= \sum^{\infty}_{t=1}P\left(\frac{1}{k}\sum^k_{i=1}W^{\textsf{ons}}_{i,t} < 1/\alpha\right)\\
    &\leq \sum^{\infty}_{t=1}P\left(\frac{1}{k}W^{\textsf{ons}}_{i,t} < 1/\alpha\right)\\
    &= \sum^{\infty}_{t=1}P\left(W^{\textsf{ons}}_{i,t} < k/\alpha\right)\\
    &\leq T_2 + \frac{\pi^2}{6}
\end{align}
where 
\begin{align}
    T_2 := \frac{128}{\Delta_{\textsf{m}}^2}\ln\left(\frac{256k}{\alpha \cdot \Delta_{\textsf{m}}^2}\right) + \frac{32}{\Delta_{\textsf{m}}^2}\ln\left(\frac{32}{\Delta_{\textsf{m}}^2}\right)
\end{align}
due to the proof of
\cref{proposition:prop2} in \cref{app:prop2}. Thus, for any $P \in H^{\textsf{global}}_{1}$, the expected stopping time of $\phi^{\textsf{ave}}$ obeys 
\begin{align}
    \bE[\tau] \leq \min\{T_1, T_2\}.
\end{align}
\end{proof}

\subsection{Proof of \cref{theorem:balance_stopping_time} \label{app:balance_stopping_time}}
\balancestop*
\begin{proof}
We first show that $\phi^{\textsf{balance}} = (\phi^{\textsf{balance}}_{t})_{t\geq1}$ is a level-$\alpha$ sequential test for $H^{\textsf{global}}_0$. Then we derive the upper bound on its expected stopping time.

{\bf Level-$\alpha$ test}. Let $\alpha \in (0,1)$. Recall $\phi^{\textsf{balance}}_{t} = \mathbf{1}[W^{\textsf{balance}}_{t} \geq 1/\alpha]$, where
\begin{align}
    W^{\textsf{balance}}_{t} = \frac{1}{2}W^{\textsf{ave}}_{t} + \frac{1}{2} W^{\textsf{prod}}_{t}.
\end{align}
The wealth processes $W^{\textsf{ave}}$ and $W^{\textsf{prod}}$ are both test-martingales for $ H^{\textsf{global}}_{0}$. Thus, $W^{\textsf{balance}}_{0} = 1$, $W^{\textsf{balance}}_{t} \geq 0$ for all $t \geq 1$, and for any $P \in H^{\textsf{global}}_{0}$
\begin{align}
    \bE[&W^{\textsf{balance}}_{t} \mid Z_{1,1}, \dots, Z_{1,t}, \dots, Z_{k,1}, \dots, Z_{k,t}] \\
    &= \bE\left[\frac{1}{2}W^{\textsf{ave}}_{t} + \frac{1}{2} W^{\textsf{prod}}_{t}\mid Z_{1,1}, \dots, Z_{1,t}, \dots, Z_{k,1}, \dots, Z_{k,t}\right]\\
    &=\frac{1}{2}\bE\left[W^{\textsf{ave}}_{t}\mid Z_{1,1}, \dots, Z_{1,t}, \dots, Z_{k,1}, \dots, Z_{k,t} \right]+ \frac{1}{2} \bE\left[W^{\textsf{prod}}_{t}\mid Z_{1,1}, \dots, Z_{1,t}, \dots, Z_{k,1}, \dots, Z_{k,t}\right] \\
    &= \frac{1}{2}W^{\textsf{ave}}_{t-1} + \frac{1}{2} W^{\textsf{prod}}_{t-1}.
\end{align}
As a result, $W^{\textsf{balance}}_{t}$ is a test-martingale for $ H^{\textsf{global}}_{0}$. Thus, by \nameref{theorem:villes},
\begin{align}
    \sup_{P \in H^{\textsf{global}}_0} P(\exists t \geq 0: \phi^{\textsf{balance}}_{t} = 1) = \sup_{P \in H^{\textsf{global}}_0} P(\exists t \geq 0: W^{\textsf{balance}}_{t} \geq 1/\alpha) \leq \alpha.
\end{align}

{\bf Expected stopping time}. Let $\alpha \in (0,1)$. By definition, $\tau := \min\{t: \phi^{\textsf{balance}}_{t} = 1\} = \min\{t: W^{\textsf{balance}}_{t} \geq 1/\alpha\}$. since $\tau$ is a non-negative integer valued random variable, for any $P \in H^{\textsf{global}}_{1}$
\begin{align}
    \bE[\tau] = \sum^{\infty}_{t=1}P(\tau > t) \leq \sum^{\infty}_{t=1}P(W^{\textsf{balance}}_{t} < 1/\alpha)
    &= \sum^{\infty}_{t=1}P\left(\frac{1}{2}W^{\textsf{ave}}_{t} + \frac{1}{2} W^{\textsf{prod}}_{t} < 1/\alpha\right)\\
    &< \sum^{\infty}_{t=1}P\left(\frac{1}{2}W^{\textsf{prod}}_{t}  < 1/\alpha\right)\\
    &= \sum^{\infty}_{t=1}P\left(W^{\textsf{prod}}_{t} < 2/\alpha\right).
\end{align}
By the proof of \cref{theorem:prod_stopping_time} provided in \cref{app:prod_stopping_time}, we know 
\begin{align}
    \sum^{\infty}_{t=1}P\left(W^{\textsf{prod}}_{t} < 2/\alpha\right) \leq  T_1 + \frac{\pi^2}{6}
\end{align}
where 
\begin{align}
    T_1 := &\frac{96k}{  \Delta_{\textsf{s}}}\ln\left(\frac{192k}{\Delta_{\textsf{s}}}\right) + \frac{288k}{\Delta_{\textsf{s}}^2}\ln\left(\frac{144k}{\Delta_{\textsf{s}}^2}\right) + \frac{24}{\Delta_{\textsf{s}}}\ln\left(\frac{2}{\alpha}\right).
\end{align}

We can also bound the expected stopping time in the following manner. 
\begin{align}
    \bE[\tau] &< \sum^{\infty}_{t=1}P\left(\frac{1}{2}W^{\textsf{ave}}_{t}  < 1/\alpha\right) = \sum^{\infty}_{t=1}P\left(W^{\textsf{ave}}_{t} < 2/\alpha\right).
\end{align}
By the expected stopping time proof of $\phi^{\textsf{ave}}$ provided in \cref{app:ave_stopping_time}, we know 
\begin{align}
    \sum^{\infty}_{t=1}P\left(W^{\textsf{ave}}_{t} < 2/\alpha\right) \leq  T_2 + \frac{\pi^2}{6}
\end{align}
where
\begin{align}
    T_2 := \frac{128}{\Delta_{\textsf{m}}^2}\ln\left(\frac{256\cdot(2k)}{\alpha \cdot \Delta_{\textsf{m}}^2}\right) + \frac{32}{\Delta_{\textsf{m}}^2}\ln\left(\frac{32}{\Delta_{\textsf{m}}^2}\right).
\end{align}
Thus, for any $P \in H^{\textsf{global}}_{1}$, the expected stopping time of $\phi^{\textsf{balance}}$ obeys 
\begin{align}
    \bE[\tau] \leq \min\{T_1, T_2\}.
\end{align}
\end{proof}

\section{Useful Lemmas and Inequalities \label{app:useful_lemmas}}
In this section, we present  results that are used throughout the proofs in \cref{app:proofs}. For cited results, their proofs can be found in the referenced works; otherwise, proofs are provided here.

\begin{lemma}[Hoeffding's Inequality \citep{hoeffding1963probability}]
\label{lemma:hoeffding}
Let $X_1, \dots, X_t$ be independent random variables such that $X_j \in [a_j, b_j]$ almost surely. Then, for all $\beta > 0$,
\begin{align}
P\left(\left|\sum_{j=1}^t X_j - \bE\left[\sum_{j=1}^t X_j\right]\right| \geq \beta \right) \leq 2\exp\left( - \frac{2\beta^2}{\sum^t_{j=1}(b_j - a_j)^2} \right).
\end{align}
\end{lemma}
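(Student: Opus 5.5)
We prove the two one-sided bounds separately with the Chernoff (exponential Markov) method and then combine them with a union bound. Write $S_t = \sum_{j=1}^t (X_j - \bE[X_j])$. For any $s>0$, Markov's inequality applied to $e^{sS_t}$ gives
\begin{align*}
P(S_t \geq \beta) \leq e^{-s\beta}\, \bE\left[e^{sS_t}\right] = e^{-s\beta} \prod_{j=1}^t \bE\left[e^{s(X_j - \bE[X_j])}\right],
\end{align*}
where the factorization uses independence of the $X_j$.

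The key ingredient is Hoeffding's lemma. If $Y$ is a random variable with $\bE[Y]=0$ and $Y\in[a,b]$ almost surely, then
\begin{align*}
\bE\left[e^{sY}\right] \leq \exp\left(\frac{s^2(b-a)^2}{8}\right).
\end{align*}
This is the step I expect to be the main obstacle, since everything else is mechanical. I would argue as follows.
\begin{enumerate}
\item By convexity of $y\mapsto e^{sy}$ on $[a,b]$,
\begin{align*}
e^{sy} \leq \frac{b-y}{b-a}e^{sa} + \frac{y-a}{b-a}e^{sb}.
\end{align*}
\item Taking expectations and using $\bE[Y]=0$ gives $\bE[e^{sY}] \leq e^{L(h)}$, where $h = s(b-a)$, $p = -a/(b-a)$, and
\begin{align*}
L(h) = -hp + \ln\left(1-p+pe^{h}\right).
\end{align*}
\item One checks that $L(0)=0$ and $L'(0)=0$.
\item The second derivative has the form $L''(h) = q(1-q)$ for some $q\in[0,1]$, so $L''(h)\leq 1/4$.
\item Taylor's theorem then yields $L(h) \leq h^2/8$, which is the claimed bound.
\end{enumerate}
We apply the lemma to $Y_j = X_j - \bE[X_j]$. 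It is centered and lies in an interval of length $b_j - a_j$.

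Plugging the lemma into the Chernoff bound gives
\begin{align*}
P(S_t\geq\beta) \leq \exp\left(-s\beta + \frac{s^2}{8}\sum_{j=1}^t (b_j-a_j)^2\right).
\end{align*}
We minimize the exponent over $s>0$, choosing $s = 4\beta/\sum_j (b_j-a_j)^2$. This gives
\begin{align*}
P(S_t\geq\beta) \leq \exp\left(-\frac{2\beta^2}{\sum_{j=1}^t (b_j-a_j)^2}\right).
\end{align*}
Applying the same argument to $-X_j$, which lies in $[-b_j,-a_j]$ (an interval of the same length), gives the identical bound for $P(S_t \leq -\beta)$. A union bound over the two tails produces the factor $2$ and completes the proof.
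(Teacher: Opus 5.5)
Your proof is correct: it is the standard Chernoff-bound argument with Hoeffding's lemma (the convexity bound, $L(0)=L'(0)=0$, $L''\le 1/4$), the optimal choice $s = 4\beta/\sum_j (b_j-a_j)^2$, and a union bound over the two tails. The paper gives no proof of its own and simply cites Hoeffding's 1963 paper, whose original argument is essentially this one. The only thing you leave implicit is the trivial degenerate case $\sum_j (b_j-a_j)^2 = 0$, where $S_t = 0$ almost surely.
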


\begin{lemma}[Concentration of bounded random variables \citep{chugg2023auditing}]
\label{lemma:concentration}
Let $X_1, \dots, X_t$ be independent random variables such that $X_j \in [-1, 1]$ almost surely and $\bE[X_j] = \mu$. with probability at least $1 - \nicefrac{1}{t^2}$
\begin{align}
    \left|\sum^t_{j=1}X_j\right| \geq t\cdot |\mu| - \sqrt{4t\ln(2t)}.
\end{align}
\end{lemma}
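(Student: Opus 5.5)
The plan is to derive the claim directly from \cref{lemma:hoeffding}, combined with the reverse triangle inequality. Write $S_t = \sum_{j=1}^t X_j$. By independence and linearity, $\bE[S_t] = t\mu$.

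First, apply \cref{lemma:hoeffding} with $a_j = -1$ and $b_j = 1$. Then $\sum_{j=1}^t (b_j - a_j)^2 = 4t$, so for every $\beta > 0$
\begin{align*}
P\left(|S_t - t\mu| \geq \beta\right) \leq 2\exp\left(-\frac{2\beta^2}{4t}\right) = 2\exp\left(-\frac{\beta^2}{2t}\right).
\end{align*}
Next, choose $\beta = \sqrt{4t\ln(2t)}$, which is positive for every $t \geq 1$ since $\ln 2 > 0$. Then $\beta^2/(2t) = 2\ln(2t)$, and the right-hand side equals $2(2t)^{-2} = \nicefrac{1}{(2t^2)}$, which is at most $\nicefrac{1}{t^2}$. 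Hence the event $\{|S_t - t\mu| < \sqrt{4t\ln(2t)}\}$ holds with probability at least $1 - \nicefrac{1}{t^2}$.

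On that event, the reverse triangle inequality gives
\begin{align*}
|S_t| \geq |t\mu| - |S_t - t\mu| \geq t|\mu| - \sqrt{4t\ln(2t)}.
\end{align*}
This is exactly the claim. No real obstacle is expected. The only point to check is that the chosen constant in $\beta$ makes the two-sided Hoeffding tail, including its factor of $2$, at most $\nicefrac{1}{t^2}$. As computed above, it gives the tighter bound $\nicefrac{1}{(2t^2)}$, so there is slack.
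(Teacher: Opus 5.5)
Your proof is correct and follows essentially the same route as the paper: Hoeffding with range $[-1,1]$ (variance proxy $4t$), then the reverse triangle inequality. The only cosmetic difference is that the paper sets the tail equal to $1/t^2$, gets $\beta=\sqrt{2t\ln(2t^2)}$, and then bounds this by $\sqrt{4t\ln(2t)}$; you instead plug in $\beta=\sqrt{4t\ln(2t)}$ directly and check that the tail is $1/(2t^2)\le 1/t^2$.
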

\begin{proof}
By \nameref{lemma:hoeffding}
\begin{align}
    P\left(\left|\sum^t_{j=1}X_j - \bE\left[\sum^t_{j=1}X_j\right]\right| \geq \beta \right) \leq 2\exp\left(-\beta^2/2t \right).
\end{align}
Setting the right hand side equal to $1/t^2$ and solving for $\beta$ yields $\beta = \sqrt{2t\ln(2t^2)}$. Thus, with probability at least $1 - \nicefrac{1}{t^2}$,
\begin{align}
    \left|\left|\sum^t_{j=1}X_j\right| - \left|\bE\left[\sum^t_{j=1}X_j\right]\right|\right| \leq \left|\sum^t_{j=1}X_j - \bE\left[\sum^t_{j=1}X_j\right]\right| \leq \sqrt{2t\ln(2t^2)} 
    &\leq \sqrt{4t\ln(2t)} 
\end{align}
which implies with probability at least $1 - \nicefrac{1}{t^2}$
\begin{align}
    \left|\sum^t_{j=1}X_j\right| &\geq \left|\bE\left[\sum^t_{j=1}X_j\right]\right| - \sqrt{4t\ln(2t)} = t\cdot|\mu| - \sqrt{4t\ln(2t)}.
\end{align}
\end{proof}

\begin{lemma}[McDiarmid's Inequality \citep{mcdiarmid1989method}]
\label{lemma:mcdiarmid}
Let $X_1, \dots, X_n$ be independent random variables taking values in some set $\mathcal{X}$. Let $f: \mathcal{X}^n \to \mathbb{R}$ be a function that satisfies the following bounded differences property: for every $i \in [n]$ and for all possible values $x_1, \dots, x_n, x_i' \in \mathcal{X}$,
\[
|f(x_1, \dots, x_n) - f(x_1, \dots, x_{i-1}, x_i', x_{i+1}, \dots, x_n)| \leq c_i,
\]
where $c_i$ are constants. Then, for any $\epsilon > 0$, the following inequalities holds:
\begin{align} 
P\left( f(X_1, \dots, X_n) - \mathbb{E}[f(X_1, \dots, X_n)] \geq \epsilon \right) &\leq \exp\left(-\frac{2\epsilon^2}{\sum_{i=1}^n c_i^2}\right)\\
P\left( f(X_1, \dots, X_n) - \mathbb{E}[f(X_1, \dots, X_n)] \leq -\epsilon \right) &\leq \exp\left(-\frac{2\epsilon^2}{\sum_{i=1}^n c_i^2}\right).
\end{align}
\end{lemma}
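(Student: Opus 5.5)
The plan is the standard martingale method: write $f(X_1,\dots,X_n)-\mathbb{E}[f]$ as a sum of bounded martingale differences, bound its moment generating function one increment at a time with Hoeffding's lemma, and finish with a Chernoff bound. Only the upper-tail inequality needs proof. The lower tail then follows by applying the upper tail to $-f$, which has the same bounded-differences constants $c_i$.

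\textbf{Step 1 (Doob martingale).} Let $\mathcal{F}_i=\sigma(X_1,\dots,X_i)$. Define $Y_i=\mathbb{E}[f(X_1,\dots,X_n)\mid\mathcal{F}_i]$, so that $Y_0=\mathbb{E}[f]$ and $Y_n=f(X_1,\dots,X_n)$. Set $D_i=Y_i-Y_{i-1}$. Then $f-\mathbb{E}[f]=\sum_{i=1}^n D_i$ and $\mathbb{E}[D_i\mid\mathcal{F}_{i-1}]=0$.

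\textbf{Step 2 (conditional range of each increment).} This is the key step, and the only place independence is used. By independence, $Y_i=g_i(X_1,\dots,X_i)$ with
\[
g_i(x_1,\dots,x_i)=\mathbb{E}[f(x_1,\dots,x_i,X_{i+1},\dots,X_n)].
\]
Conditionally on $\mathcal{F}_{i-1}$, the increment $D_i$ therefore lies between
\[
L_i=\inf_{x} g_i(X_1,\dots,X_{i-1},x)-Y_{i-1}
\quad\text{and}\quad
U_i=\sup_{x} g_i(X_1,\dots,X_{i-1},x)-Y_{i-1},
\]
both of which are $\mathcal{F}_{i-1}$-measurable. Consider any two values $x,x'$ of the $i$-th coordinate. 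The integrands defining $g_i(\cdot,x)$ and $g_i(\cdot,x')$ differ only in that coordinate, and integrating the pointwise bound gives $|g_i(\dots,x)-g_i(\dots,x')|\le c_i$. Hence $U_i-L_i\le c_i$. The main subtlety is exactly here: the interval has width $c_i$, not $2c_i$, because it is a deterministic-given-the-past interval rather than a symmetric bound $|D_i|\le c_i$. Keeping this is what yields the constant $2$ in the exponent.

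\textbf{Step 3 (Hoeffding's lemma and iteration).} By Hoeffding's lemma, a mean-zero variable supported in an interval of length $c_i$ satisfies $\mathbb{E}[e^{sD_i}\mid\mathcal{F}_{i-1}]\le e^{s^2c_i^2/8}$ for all $s>0$. Conditioning successively on $\mathcal{F}_{n-1},\mathcal{F}_{n-2},\dots$ (the tower property) gives
\[
\mathbb{E}\bigl[e^{s(f-\mathbb{E}f)}\bigr]\le \exp\Bigl(\tfrac{s^2}{8}\sum_{i=1}^n c_i^2\Bigr).
\]

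\textbf{Step 4 (Chernoff bound).} By Markov's inequality,
\[
P(f-\mathbb{E}f\ge\epsilon)\le \exp\Bigl(-s\epsilon+\tfrac{s^2}{8}\sum_{i=1}^n c_i^2\Bigr).
\]
Choosing $s=4\epsilon/\sum_{i=1}^n c_i^2$ minimizes the right-hand side and yields $\exp\bigl(-2\epsilon^2/\sum_{i=1}^n c_i^2\bigr)$, which is the claimed upper-tail bound.
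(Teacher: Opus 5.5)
Your proof is correct and is the standard Doob-martingale argument. The crucial point is that, given $\mathcal{F}_{i-1}$, each increment $D_i$ lies in an interval of width $c_i$ rather than $[-c_i,c_i]$, and you handle it properly; the Hoeffding-lemma, tower and Chernoff steps with $s=4\epsilon/\sum_i c_i^2$ then give exactly the constant $2$, and the lower tail follows by applying the bound to $-f$. The paper does not prove this lemma itself but only cites \citet{mcdiarmid1989method}, whose proof follows essentially this same martingale route, so there is no different approach to compare.
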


\begin{theorem}[Theorem 20 \cite{cutkosky2018black}] \label{theorem:opt_wealth_lb}
Let $\|\cdot\|$ be \emph{any} norm on $\mathbb{R}^{d}$ and
$\|\cdot\|_{*}$ be its dual norm. Define $\calK = \{v \in \bR^d: \|v\| \leq \nicefrac{1}{2}\}$. Fix any vector $u\in\mathbb{R}^d$ satisfying $\|u\|=1$ and let $(g_t)_{t\ge 1} \subset \bR^d$ be any sequence of vectors satisfying $\|g_t\|_{*}\leq 1$ for all $t\geq 1$. Then,
\begin{align}
    \max_{\lambda \in \calK} ~\sum^t_{j=1} \ln(1 + \langle \lambda , g_j \rangle) \geq \frac{1}{4}\frac{\langle \sum^t_{j=1} g_j, u \rangle^2}{\sum^t_{j=1} \langle g_j, u \rangle^2 + \left|\left\langle\sum^t_{j=1} g_j, u\right\rangle\right|}.
\end{align}
\end{theorem}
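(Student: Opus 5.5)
The plan is to reduce the $d$-dimensional maximization to a one-dimensional betting problem along the direction $u$, and then lower bound the scalar log-wealth with a quadratic surrogate. Since the left-hand side is a maximum over $\calK$, it is enough to exhibit one feasible $\lambda$ that achieves the claimed value.

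\emph{Reduction to one dimension.} I will restrict to $\lambda = \beta u$ with $\beta\in[-\nicefrac12,\nicefrac12]$. This is feasible because $\|\beta u\| = |\beta|\,\|u\| = |\beta| \le \nicefrac12$. Writing $x_j := \langle g_j, u\rangle$, the definition of the dual norm gives $|x_j| \le \|g_j\|_*\|u\| \le 1$. Hence $\langle \lambda, g_j\rangle = \beta x_j \in [-\nicefrac12,\nicefrac12]$. Set $S := \sum_{j\le t} x_j = \langle \sum_{j} g_j, u\rangle$ and $Q := \sum_{j\le t} x_j^2$. The claim then reduces to
\begin{align*}
\max_{|\beta|\le 1/2} \sum_{j=1}^t \ln(1+\beta x_j) \;\geq\; \frac{S^2}{4(Q+|S|)}.
\end{align*}
If $S = 0$ the right-hand side is $0$, and $\beta=0$ attains it. So assume $S\neq 0$, which gives $Q+|S|>0$.

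\emph{Quadratic surrogate.} I will use the elementary inequality $\ln(1+y) \ge y - y^2$ for $|y|\le \nicefrac12$. One verifies it by noting that $h(y) = \ln(1+y) - y + y^2$ satisfies $h(0)=0$ and $h'(y) = y(1+2y)/(1+y)$. Thus $h' \le 0$ on $[-\nicefrac12,0]$ and $h' \ge 0$ on $[0,\nicefrac12]$, so $0$ is the minimizer and $h\ge 0$ on the interval. Summing over $j$ gives
\begin{align*}
\sum_{j=1}^t \ln(1+\beta x_j) \ \ge\ \beta S - \beta^2 Q.
\end{align*}

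\emph{Choice of $\beta$.} The unconstrained maximizer $S/(2Q)$ may violate $|\beta|\le\nicefrac12$. I will instead take
\begin{align*}
\beta = \frac{S}{2(Q+|S|)},
\end{align*}
which automatically satisfies $|\beta|\le \nicefrac12$. Plugging in yields
\begin{align*}
\frac{S^2}{2(Q+|S|)} - \frac{S^2 Q}{4(Q+|S|)^2}.
\end{align*}
Since $Q/(Q+|S|)\le 1$, this is at least $S^2/(2(Q+|S|)) - S^2/(4(Q+|S|)) = S^2/(4(Q+|S|))$, which is exactly the claimed bound.

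The only delicate step is the choice of $\beta$. It has to stay feasible while still giving the $\nicefrac14$ constant, and adding $|S|$ to the denominator is what achieves both. Everything else is routine.
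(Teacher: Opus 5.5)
Your proof is correct and complete. Restricting to $\lambda=\beta u$ is feasible, and $|\langle g_j,u\rangle|\le \|g_j\|_*\|u\|\le 1$. Combined with $\ln(1+y)\ge y-y^2$ on $|y|\le \nicefrac{1}{2}$ and the choice $\beta=S/(2(Q+|S|))$, this gives exactly the constant $\nicefrac{1}{4}$. The paper does not prove this statement itself but imports it from \citet{cutkosky2018black}. Your argument is the standard one behind that result: a scalar bet along $u$ plus a quadratic lower bound on the logarithm, with your single choice of $\beta$ avoiding a case split between the interior maximizer $S/(2Q)$ and the boundary $\pm\nicefrac{1}{2}$.
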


\begin{theorem}[Lemma 17 \citep{cutkosky2018black}]
\label{theorem:ons_regret}
Let $\|\cdot\|$ be a norm on $\mathbb{R}^{d}$ and
$\|\cdot\|_{*}$ be its dual norm. Define $\calK = \{v \in \bR^d: \|v\| \leq \nicefrac{1}{2}\}$ and let $(g_t)_{t\ge 1}$ be any sequence of vectors satisfying $\|g_t\|_{*}\leq 1$ for all $t \geq 1$. Then, for $\beta = \frac{2- \ln 3}{2}$, the sequence $(\lambda_t)_{t\geq1} \subset \calK$ generated by ONS (\cref{alg:ons}) with input stream $(g_t)_{t\ge 1}$ satisfies 
\begin{align}
    \sum^t_{j=1} -\ln(1 + \langle \lambda_j, g_j\rangle) - \min_{\lambda \in \calK} ~\sum^t_{j=1} -\ln(1 + \langle \lambda, g_j \rangle) &\leq d\left(\frac{\beta}{8} + \frac{2}{\beta}\ln\left(4\sum^t_{j=1} ||g_j||_{*}^2 + 1\right)\right).
\end{align}
By substituting $\beta = \frac{2- \ln 3}{2}$ and using the bounds $\beta \leq \nicefrac{8}{17}$ and $\nicefrac{2}{\beta} \leq 4.5$, one retrieves the following precise bound stated in Lemma 17 of \citep{cutkosky2018black}.
\begin{align}
    \sum^t_{j=1} -\ln(1 + \langle \lambda_j, g_j\rangle) - \min_{\lambda \in \calK} ~\sum^t_{j=1} -\ln(1 + \langle \lambda, g_j\rangle) &\leq d\left(\frac{1}{17} + 4.5\ln\left(4\sum^t_{j=1} ||g_j||_{*}^2 + 1\right)\right).
\end{align}
{\bf NOTE}: To prove the result above, \citet{cutkosky2018black} rely on another result from their work, Theorem 11 (page 18). The proof of Theorem 11 contains a minor typographical error: in the second equation on page 19\footnote{We refer to the version published in the 31st Annual Conference on Learning Theory.}, the final term in the inequality has a factor of $\nicefrac{2}{\beta}$, which should be $\nicefrac{1}{2\beta}$. For $\beta = \frac{2-\ln(3)}{2}$, the quantity $\frac{2}{\beta} \approx 4.44$, which leads to the $4.5$ factor in the bound. The correct value should be any value greater $\nicefrac{1}{2\beta} \approx 1.11$. The correct upper bound, with $\beta = \frac{2- \ln 3}{2}$ is
\begin{align}
    \sum^t_{j=1} -\ln(1 + \langle \lambda_j, g_j\rangle) - \min_{\lambda \in \calK} ~\sum^t_{j=1} -\ln(1 + &\langle \lambda, g_j\rangle) \leq F\left(d, \sum^t_{j=1} ||g_j||_{*}^2\right)
\end{align}
where 
\begin{align}
     F\left(d, \sum^t_{j=1} ||g_j||_{*}^2\right) = \leq d\left(\frac{2- \ln 3}{16} + \frac{1}{2-\ln 3}\ln\left(4\sum^t_{j=1} ||g_j||_{*}^2 + 1\right)\right)
\end{align}
which is less than $2d\ln(4t)$ for all $t \geq 1$.
\end{theorem}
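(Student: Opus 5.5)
The plan is to re-derive the regret bound through the standard Online Newton Step analysis of \citet{hazan2007logarithmic}, applied to the losses $\ell_j(\lambda) = -\ln(1+\langle \lambda, g_j\rangle)$ on $\calK$. I will track the constants carefully, because the NOTE changes the leading factor from $\nicefrac{2}{\beta}$ to $\nicefrac{1}{2\beta}$. I will first do the Euclidean case. For a general norm I will follow \citet{cutkosky2018black} and replace $\langle\cdot,\cdot\rangle$ by the duality pairing; the algebra carries over once the one-dimensional inequality below holds.

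\textbf{Step 1: a one-dimensional quadratic lower bound.} For $\lambda,\lambda'\in\calK$ we have $|\langle\lambda,g\rangle|\le \|\lambda\|\,\|g\|_*\le\nicefrac12$. Write $x=\langle\lambda,g\rangle$ and $x'=\langle\lambda',g\rangle$, both in $[-\nicefrac12,\nicefrac12]$, and set $y = (x'-x)/(1+x)$. I will show
\begin{equation*}
-\ln(1+x') \ \geq\ -\ln(1+x) - y + \frac{\beta}{2}\, y^2 .
\end{equation*}
This is equivalent to $\ln(1+y)\le y-\frac{\beta}{2}y^2$ on the range of $y$. Since $|x'-x|\le 1$ and $1+x\ge\nicefrac12$, that range is contained in $[-\nicefrac23, 2]$. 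The function $h(y)=(y-\ln(1+y))/y^2$ is decreasing, so its minimum over this range is $h(2)=(2-\ln 3)/4=\beta/2$. The endpoint $y=2$ is exactly what fixes $\beta=\frac{2-\ln3}{2}$.

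\textbf{Step 2: the ONS telescoping.} Let $z_j=\nabla\ell_j(\lambda_j)=-g_j/(1+\langle\lambda_j,g_j\rangle)$, so $\|z_j\|_*\le 2$. Step~1 gives, for the comparator $\lambda^\star$,
\begin{equation*}
\ell_j(\lambda_j)-\ell_j(\lambda^\star)\le \langle z_j,\lambda_j-\lambda^\star\rangle-\tfrac{\beta}{2}\langle z_j,\lambda_j-\lambda^\star\rangle^2 .
\end{equation*}
With $A_j=\varepsilon I+\sum_{s\le j} z_s z_s^\top$ and the generalized projection in the $A_j$-norm, the usual Pythagorean argument telescopes $\|\lambda_j-\lambda^\star\|^2_{A_j}$. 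The quadratic terms cancel against the $\tfrac{\beta}{2}\langle z_j,\cdot\rangle^2$ terms, and what remains is at most
\begin{equation*}
\tfrac{\beta}{2}\varepsilon\,\mathrm{diam}^2 \;+\; \tfrac{1}{2\beta}\sum_j z_j^\top A_j^{-1}z_j .
\end{equation*}
Keeping the factor $\tfrac{1}{2\beta}$ here, rather than the $\tfrac{2}{\beta}$ appearing in Cutkosky's Theorem 11, is precisely the correction described in the NOTE. The elliptical potential lemma then bounds $\sum_j z_j^\top A_j^{-1}z_j\le \ln\det A_t/\det A_0 \le d\ln\!\big(1+\sum_j\|z_j\|^2/(d\varepsilon)\big)$.

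\textbf{Step 3: constants and the main obstacle.} The main difficulty is choosing the regularizer $\varepsilon$, and handling $\|z_j\|\le 2\|g_j\|_*$, so that the bound comes out exactly as $d\big(\tfrac{\beta}{8}+\tfrac{1}{2\beta}\ln(4\sum_j\|g_j\|_*^2+1)\big)$. I would try $\varepsilon$ proportional to $1$ and absorb the factor $4$ from $\|z_j\|^2\le 4\|g_j\|_*^2$ into the logarithm. The diameter term then becomes $\tfrac{\beta}{8}d$ (diameter $1$, with the $d$ coming from the norm-equivalence step as in Cutkosky). Substituting $\beta=\frac{2-\ln3}{2}$ gives $\beta/8=\frac{2-\ln 3}{16}$ and $\frac{1}{2\beta}=\frac{1}{2-\ln3}$, which is $F$.

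Finally, for the claim $F<2d\ln(4t)$, use $\sum_j\|g_j\|_*^2\le t$. It then suffices that $0.057+1.14\ln(4t+1)<2\ln(4t)$ for all $t\ge1$. At $t=1$ this reads $\approx1.89<2.77$. The difference $2\ln(4t)-1.14\ln(4t+1)$ is increasing in $t$, since its derivative $\frac{2}{t}-\frac{4.56}{4t+1}$ is positive for all $t\ge 1$, so the inequality holds for every $t\ge1$.
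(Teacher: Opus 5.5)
Your route differs from the paper's. The paper gives no derivation: it imports Lemma~17 of Cutkosky and Orabona, patches the constant in their Theorem~11 (the $\tfrac{2}{\beta}$ should be $\tfrac{1}{2\beta}$), and asserts $F<2d\ln(4t)$ without argument. You instead rerun the Hazan-style ONS analysis yourself, which gains three things the citation does not. It shows where $\beta=\tfrac{2-\ln 3}{2}$ comes from: the extreme ratio $y=2$, i.e.\ $x=-\tfrac12$, $x'=\tfrac12$. It produces $\tfrac{1}{2\beta}$ directly from the telescoping, so it independently confirms the NOTE. And it actually proves $F<2d\ln(4t)$; your monotonicity argument is fine. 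The uncorrected $\tfrac{2}{\beta}$ and $4.5$ versions then follow, since $\tfrac{1}{2\beta}\le\tfrac{2}{\beta}$. Steps~1 and~2 are correct.

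Two points need repair.

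\emph{The bookkeeping in Step~3.} Algorithm~1 fixes $H_0=I_d$, so $\varepsilon=1$. With that $\varepsilon$, a diameter-$1$ bound leaves $\tfrac{\beta}{2}$, which exceeds $d\,\tfrac{\beta}{8}$ for $d\le 3$, including the case $d=1$ that the paper relies on. Use the initialization $\lambda_1=0$ instead. The leftover term is then $\tfrac{\beta}{2}\|\lambda^\star\|_2^2\le\tfrac{\beta}{8}$, because $\|\lambda^\star\|_2\le\|\lambda^\star\|_1\le\tfrac12$, so no factor of $d$ is needed there. For the log term, $\|z_j\|_2^2\le d\|z_j\|_\infty^2\le 4d\|g_j\|_\infty^2$ turns $d\ln(1+\sum_j\|z_j\|_2^2/d)$ into $d\ln(1+4\sum_j\|g_j\|_\infty^2)$.

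\emph{The general-norm claim.} Saying that ``the algebra carries over once the one-dimensional inequality holds'' is not correct. The Pythagorean step and the log-det bound live in the Euclidean geometry fixed by $H_0=I_d$. Besides $|\langle\lambda,g\rangle|\le\tfrac12$, you also need $\|v\|_2\le\|v\|$ on $\mathcal{K}$ and $\|g\|_2^2\le d\|g\|_*^2$. These hold for $\ell_1/\ell_\infty$, which is what Algorithm~1 implements and the paper uses, and for $\ell_2$, but not in general.

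For a concrete failure, take $d=1$, $\|v\|=\epsilon|v|$ and $g_t\equiv\epsilon$, so $\|g_t\|_*=1$. Run ONS from $H_0=1$. For $t\le c\,\epsilon^{-2}$, with $c$ a small absolute constant, one gets $\epsilon\lambda_t\le 0.1$. Each round then costs at least $\ln 1.5-\ln 1.1$ against $\lambda^\star=\tfrac{1}{2\epsilon}$, so the regret is of order $\epsilon^{-2}$. The claimed bound at that horizon is only $O(\ln\epsilon^{-1})$.

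So restrict the proof explicitly to the $\ell_1$ (and $\ell_2$) setting, or run ONS in coordinates where the two norm comparisons hold. Deferring to ``the norm-equivalence step as in Cutkosky'' does not supply this step.
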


\begin{lemma}[Log wealth lower bound]
\label{lemma:log_wealth_lb}
Consider $k$ parallel data streams $(Z_{i,t})_{t \geq 1}, \dots, (Z_{k,t})_{t \geq 1}$. For any data stream $(Z_{i,t})_{t \geq 1}$ and its wealth process $W^{\textsf{ons}}_{i,t} = \prod^t_{j=1}(1 + \lambda_{i,j}Z_{i,j})$ where the $(\lambda_{i,t})_{t\geq 1}$ are determined by running ONS (\cref{alg:ons}) on the stream $(Z_{i,t})_{t \geq 1}$, by \cref{theorem:opt_wealth_lb,theorem:ons_regret}, we have
\begin{align}
    \ln(W^{\textsf{ons}}_{i,t}) \geq \frac{1}{4}\frac{ (\sum^t_{j=1} Z_{i,j})^2}{\sum^t_{j=1} (Z_{i,j})^2 + \left|\sum^t_{j=1} Z_{i,j} \right|} - 2\ln(4t).
\end{align}
Let $\|\cdot\|$ be a norm on $\mathbb{R}^{k}$ and
$\|\cdot\|_{*}$ be its dual norm and define $\calK = \{v \in \bR^k: \|v\| \leq \nicefrac{1}{2}\}$. Consider the stream of multivariate outcomes, $(\vec{Z}_t)_{t\geq 1}$ where $\vec{Z}_t = (Z_{1,t}, \dots, Z_{k,t})$ and its corresponding wealth process $W^{\textsf{mv-ons}}_{t} = \prod^t_{j=1}(1 + \langle \vec{\lambda}_{j},~ \vec{Z}_{j}\rangle)$ where the $(\vec{\lambda}_{j})_{t\geq 1}$ are determined by running ONS (\cref{alg:ons}) on the stream $(\vec{Z}_t)_{t\geq 1}$. Then, by \cref{theorem:opt_wealth_lb,theorem:ons_regret}, for any vector $u \in \bR^k$ satisfying $\|u\| = 1$ we have
\begin{align}
    \ln(W^{\textsf{mv-ons}}_{t}) \geq \frac{1}{4}\frac{\langle \sum^t_{j=1} u, \vec{Z}_j \rangle^2}{\sum^t_{j=1} \langle u, \vec{Z}_j \rangle^2 + \left|\left\langle\sum^t_{j=1} u, \vec{Z}_j \right\rangle\right|} - 2k\ln(4t).
\end{align}
\end{lemma}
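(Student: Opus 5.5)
The plan is to chain the two cited results: the regret bound of ONS against the best constant bet (\cref{theorem:ons_regret}) and the lower bound on that best constant bet's log-wealth (\cref{theorem:opt_wealth_lb}). The log-wealth is exactly the negative of the ONS loss, so \(\ln W_t = \sum_{j\le t}\ln(1+\langle\lambda_j,g_j\rangle)\). Rearranging the regret inequality gives
\begin{align*}
\ln W_t \;\geq\; \max_{\lambda\in\calK}\sum_{j=1}^t \ln(1+\langle \lambda, g_j\rangle) \;-\; F\Bigl(d,\textstyle\sum_{j\le t}\|g_j\|_*^2\Bigr).
\end{align*}
The first term is then lower bounded by \cref{theorem:opt_wealth_lb}, and the second term is upper bounded by \(2d\ln(4t)\). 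Both parts of the lemma are instances of this one chain, with different choices of dimension, norm and \(u\).

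\emph{Univariate case.} Take \(d=1\), with the norm \(|\cdot|\), which is its own dual. Set \(g_j = Z_{i,j}\in[-1,1]\), so \(\|g_j\|_*\le 1\) and \(\calK=[-\nicefrac12,\nicefrac12]\) is the ONS domain. With \(u=1\), \cref{theorem:opt_wealth_lb} gives exactly \(\frac14 A_{i,t}^2/(V_{i,t}+|A_{i,t}|)\). We also have \(\sum_j g_j^2\le t\), and \(F\) is increasing in its second argument, so the regret term is at most \(F(1,t)\).

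\emph{Multivariate case.} Take \(d=k\) and \(g_j=\vec Z_j\). We need a norm whose dual satisfies \(\|\vec Z_j\|_*\le 1\). The natural choice is \(\|\cdot\|=\|\cdot\|_1\) with dual \(\|\cdot\|_\infty\), since each coordinate lies in \([-1,1]\); then \(u\) ranges over vectors with \(\|u\|_1=1\). This choice also keeps the wealth positive: \(|\langle\lambda,\vec Z_j\rangle|\le \|\lambda\|\,\|\vec Z_j\|_*\le\nicefrac12\), so each factor is at least \(\nicefrac12\). Applying \cref{theorem:opt_wealth_lb} with this \(u\) yields the stated bound. Here the numerator should be read as \(\langle u,\sum_j \vec Z_j\rangle^2\), correcting the placement of the sum in the display. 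Again \(\sum_j\|g_j\|_*^2\le t\).

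The main obstacle is the elementary inequality \(F(d,s)\le 2d\ln(4t)\) for \(s\le t\) and \(t\ge1\), using the corrected constant \(1/(2-\ln3)\approx 1.11\) rather than \(4.5\). Dividing by \(d\), it suffices to show
\begin{align*}
\frac{2-\ln 3}{16}+\frac{1}{2-\ln 3}\ln(4t+1)\le 2\ln(4t).
\end{align*}
At \(t=1\) the left side is about \(0.056+1.11\ln 5\approx 1.85\), while the right side is \(2\ln 4\approx 2.77\). For larger \(t\), use \(\ln(4t+1)\le \ln(4t)+\ln\tfrac54\). The left side then has slope coefficient about \(1.11<2\) in \(\ln(4t)\), so the gap only grows with \(t\).

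The rest is bookkeeping: confirm that ONS in \cref{alg:ons} is run on \(\calK\) with these inputs, so that \cref{theorem:ons_regret} applies verbatim.
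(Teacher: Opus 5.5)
Your proposal is correct and follows the paper's proof. You rearrange the ONS regret bound (\cref{theorem:ons_regret}) to get \(\ln W_t \geq \max_{\lambda\in\calK}\sum_{j\le t}\ln(1+\langle\lambda,g_j\rangle) - 2d\ln(4t)\), then apply \cref{theorem:opt_wealth_lb}, with the univariate case obtained as \(d=1\), \(u=\pm 1\). Your explicit choice of the \(\ell_1\)/\(\ell_\infty\) norm pair, which matches the projection in \cref{alg:ons}, and your check that the corrected \(F(d,s)\le 2d\ln(4t)\) for \(s\le t\) fill in details the paper only asserts.
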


\begin{proof} We prove the result for the stream of multivariate outcomes $(\vec{Z}_t)_{t\geq 1}$. By \cref{theorem:ons_regret}, the sequence $(\vec{\lambda}_t)_{t\geq1} \subset \calK$ generated by the ONS algorithm (\cref{alg:ons}) with input stream $(\vec{Z}_t)_{t\ge 1}$ satisfies 
\begin{align}
\sum^t_{j=1} -\ln(1 + \langle \vec{\lambda}_j, \vec{Z}_j\rangle) - \min_{\lambda \in \calK} ~\sum^t_{j=1} -\ln(1 + \langle \vec{\lambda}, \vec{Z}_j\rangle) &\leq 2k\ln(4t).
\end{align}
Since $
    \min_{\vec{\lambda} \in \calK} ~\sum^t_{j=1} -\ln(1 + \langle \vec{\lambda}, \vec{Z}_j\rangle) = -\max_{\vec{\lambda} \in \calK} \sum^t_{j=1} \ln(1 + \langle \vec{\lambda}, \vec{Z}_j\rangle)$
this implies 
\begin{align}
\sum^t_{j=1} -&\ln(1 + \langle \vec{\lambda}_j, \vec{Z}_j\rangle) - \left(-\max_{\vec{\lambda} \in \calK} \sum^t_{j=1} \ln(1 + \langle \vec{\lambda}, \vec{Z}_j\rangle)\right)\\
&= \max_{\vec{\lambda} \in \calK} \sum^t_{j=1} \ln(1 + \langle \vec{\lambda}, \vec{Z}_j\rangle) - \sum^t_{j=1} \ln(1 + \langle \vec{\lambda}_j, \vec{Z}_j\rangle) \leq 2k\ln(4t),
\end{align}
which further implies
\begin{align}
    \sum^t_{j=1} \ln(1 + \langle \vec{\lambda}_j, \vec{Z}_j\rangle) \geq \max_{\vec{\lambda} \in \calK} \sum^t_{j=1} \ln(1 + \langle \vec{\lambda}, \vec{Z}_j\rangle) - 2k\ln(4t).
\end{align}
By \cref{theorem:opt_wealth_lb}, for any vector $u \in \bR^k$ satisfying $\|u\| = 1$ we have 
\begin{align}
    \sum^t_{j=1} \ln(1 + \langle \vec{\lambda}_j, \vec{Z}_j\rangle) \geq \frac{1}{4}\frac{\langle \sum^t_{j=1} u, \vec{Z}_j \rangle^2}{\sum^t_{j=1} \langle u, \vec{Z}_j \rangle^2 + \left|\left\langle\sum^t_{j=1} u, \vec{Z}_j \right\rangle\right|} - 2k\ln(4t).
\end{align}
Note, $\sum^t_{j=1} \ln(1 + \langle \vec{\lambda}, \vec{Z}_j\rangle)$ is precisely $\ln(W^{\textsf{mv-ons}}_t)$, thus we have proven the result.

For a stream of univariate outcomes $(Z_{i,t})_{t\geq1}$ and its corresponding wealth process $W^{\textsf{ons}}_{i,t}$, can apply an identical argument to get the following guarantee:
\begin{align}
    \ln(W^{\textsf{ons}}_{i,t}) \geq \frac{1}{4}\frac{\langle \sum^t_{j=1} u, \vec{Z}_j \rangle^2}{\sum^t_{j=1} \langle u, \vec{Z}_j \rangle^2 + \left|\left\langle\sum^t_{j=1} u, \vec{Z}_j \right\rangle\right|} - 2k\ln(4t).
\end{align}
Now note, here $Z_{i,t} \in [-1,1] \subset \bR$, thus $k = 1$. Furthermore, there is only two vectors $u \in \bR$ satisfying $\|u\| = |u| = 1$, which are $u = 1$ and $u = -1$. Plugging either of these $u$ into the equation above proves the univariate version of the result. 
\end{proof}

\begin{lemma}
\label{lemma:loglemma}
Fix \(A>0\) and \(0<B<\frac{A}{e}\). Then,
\begin{align}
     t \geq \frac{2}{B}\,\ln\left(\frac{A}{B}\right) \implies \frac{\ln(At)}{t} \leq B
\end{align}
\end{lemma}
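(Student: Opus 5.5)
Since $A/B > e > 1$, the lower bound on $t$ is positive, and the condition $\ln(At)/t \leq B$ is equivalent to $At \leq e^{Bt}$. The cleanest route is to study the function $g(t) = Bt - \ln(At)$ on $t>0$ and show that $g(t) \geq 0$ for every $t \geq t_0 := \frac{2}{B}\ln\left(\frac{A}{B}\right)$.

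\textbf{Step 1 (monotonicity).} Since $g'(t) = B - 1/t$, the function $g$ is increasing for $t \geq 1/B$. The hypothesis $B < A/e$ gives $\ln(A/B) > 1$, so $t_0 > 2/B > 1/B$. It therefore suffices to check $g(t_0) \geq 0$.

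\textbf{Step 2 (evaluate at $t_0$).} Writing $L = \ln(A/B) > 1$, we have $Bt_0 = 2L$ and
\begin{align*}
\ln(At_0) = \ln\left(\frac{A}{B}\cdot 2L\right) = L + \ln(2L).
\end{align*}
Hence $g(t_0) = 2L - L - \ln(2L) = L - \ln(2L)$. This is nonnegative exactly when $e^{L} \geq 2L$.

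\textbf{Step 3 (elementary inequality).} The function $h(L) = L - \ln(2L)$ satisfies $h'(L) = 1 - 1/L$, so it is minimized at $L = 1$, where $h(1) = 1 - \ln 2 > 0$. Thus $h > 0$ for all $L > 0$, and in particular $g(t_0) > 0$. Combined with Step 1, this gives $Bt \geq \ln(At)$ for all $t \geq t_0$, which is the claimed inequality $\ln(At)/t \leq B$.

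Nothing here is a genuine obstacle. The only point needing care is verifying that $t_0$ lies in the region where $g$ is increasing. That is exactly where the hypothesis $B < A/e$ enters, via $L > 1$; it also guarantees $At_0 > 0$, so the logarithms are well defined.
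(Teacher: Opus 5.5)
Your proof is correct, but it is organized differently from the paper's. The paper also sets $L = \ln(A/B)$, but substitutes $y = Bt$ and rewrites $\ln(At)/t = B\,(L + \ln y)/y$. It then proves $L + \ln y \le y$ \emph{pointwise} for every $y \ge 2L$ by splitting the right side into halves: $L \le y/2$ is just the hypothesis, and $\ln y < y/2$ is an elementary inequality. The paper invokes $y > 2$ for the second half, which is where it uses $L > 1$.

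You instead reduce to the single threshold point. Monotonicity of $g(t) = Bt - \ln(At)$ on $[1/B, \infty)$ means only $t_0$ needs checking. There the claim becomes $L \ge \ln(2L)$, which you verify by minimizing $h$.

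What each buys:
\begin{itemize}
\item The paper's route is derivative-free and handles each $t$ directly. Its second half in fact holds for all $y > 0$ (since $\ln y \le y/e < y/2$), so $L > 1$ is not really needed there.
\item Your route needs calculus and $t_0 \ge 1/B$ (i.e.\ $L \ge 1/2$) for the monotonicity step. In exchange it exhibits the exact slack at the threshold, $g(t_0) = L - \ln(2L) \ge 1 - \ln 2$, which shows how much room the constant $2$ in the threshold leaves.
\end{itemize}
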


\begin{proof}
Set
\begin{align}
    L := \ln\left(\frac{A}{B}\right) \quad \text{and} \quad y := Bt.
\end{align}
Since \(At = \frac{A}{B}\,y = e^{L}\,y\), we have
\begin{align}
    \ln(At) = L + \ln(y).
\end{align}
Therefore
\begin{align}
    \frac{\ln(At)}{t} = \frac{L + \ln(y)}{y/B} = B \frac{L + \ln(y)}{y}.
\end{align}
To prove the desired inequality it suffices to show then when $y \geq 2L$, we have
\begin{equation}
   L + \ln(y) \leq y.
\end{equation}
Since $y \geq 2L$ we have $L \leq y/2$. Furthermore, we have $y > 2$ because $L > 1$, and for $y > 2$, we have $\ln(y) < y/2$. Thus 
\begin{align}
    L + \ln y \leq \frac{y}{2} + \frac{y}{2} \leq y.
    \end{align}
As a result, for $A > 0$ and $0< B < \nicefrac{A}{e}$, we have $\frac{\ln(At)}{t} \leq B$ whenever  \(t \ge \tfrac{2}{B}\ln(\tfrac{A}{B})\).
\end{proof}

\section{Algorithms \label{app:algorithms}}
\begin{algorithm}[h!]
\caption{\textsc{Online Newton Step} (ONS)}
\label{alg:ons}
\textbf{Input:} Stream $(g_t)_{t\geq 1} \subseteq [-1,1]^d$
\begin{algorithmic}[1]            
    \State \textbf{Initialize} $\lambda_1 = \vec{0} \in \bR^d$, $H_{0} = I_d$                  
    \For{$j = 1, \dots$}
        \State $H_j \gets H_{j-1} + \frac{1}{(1 + \langle \lambda_j,  g_j\rangle)^2}g_j g^T_j$  
        \State $\lambda_{j+1} \gets \text{proj}^{H_j}_{\|v\|_1 \leq \nicefrac{1}{2}}\left(\lambda_j - \frac{2}{2-\ln(3)}H^{-1}_{j}g_j\right)$, where
        \begin{equation*}
            \text{proj}^{H_j}_{\|v\|_1 \leq \nicefrac{1}{2}}(y) = \underset{\|v\|_1 \leq \nicefrac{1}{2}}{\text{argmin}}~\langle H_j(v-y), v-y \rangle
        \end{equation*}
    \EndFor
  \end{algorithmic}
\end{algorithm}
\paragraph{Remark}: When the stream $(g_t)_{t\geq 1}$ consists of $g_{t} \in [-1,1]$, the update rule for $\lambda_{t+1}$ simplifies to \citep{shekhar2023nonparametric, chugg2023auditing, waudby2024estimating}:
\begin{align}
    &\lambda_{j+1} = \Pi_{[\frac{-1}{2}, \frac{1}{2}]}\left(\lambda_{j} - \frac{2}{2-\ln 3}\frac{\nu_{j}}{1 + \sum^{j}_{i=1}(\nu_i)^2}\right) ~~
     \text{where} ~~ \quad \nu_i = \frac{-g_i}{1+\lambda_i\cdot g_i},
\end{align}
where $\Pi_{[\frac{-1}{2}, \frac{1}{2}]}$ is the projection onto the $[-1/2, 1/2]$ interval.

\newpage 
\section{Additional Experimental Details \& Results \label{app:add_exp_results}}
\subsection{Synthetic}
\subsubsection{Results for $k = 25$}
\begin{figure*}[h!]
    \centering
    \resizebox{1\textwidth}{!}{
    \hspace{-0.5em}
    \begin{subfigure}[t]{0.19\linewidth}
        \centering
        \includegraphics[width=\linewidth]{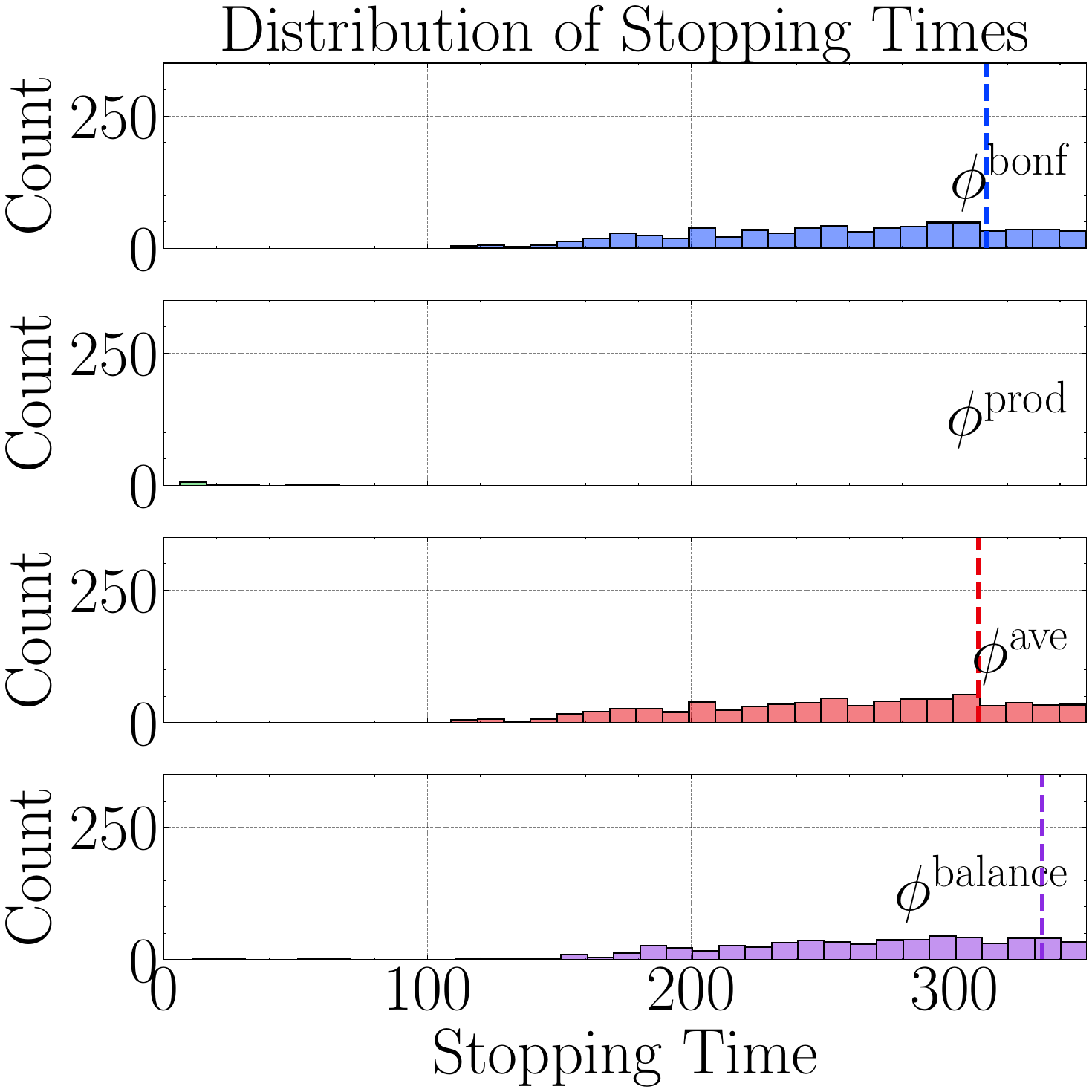}
    \end{subfigure}
    \begin{subfigure}[t]{0.19\linewidth}
        \centering
        \includegraphics[width=\linewidth]{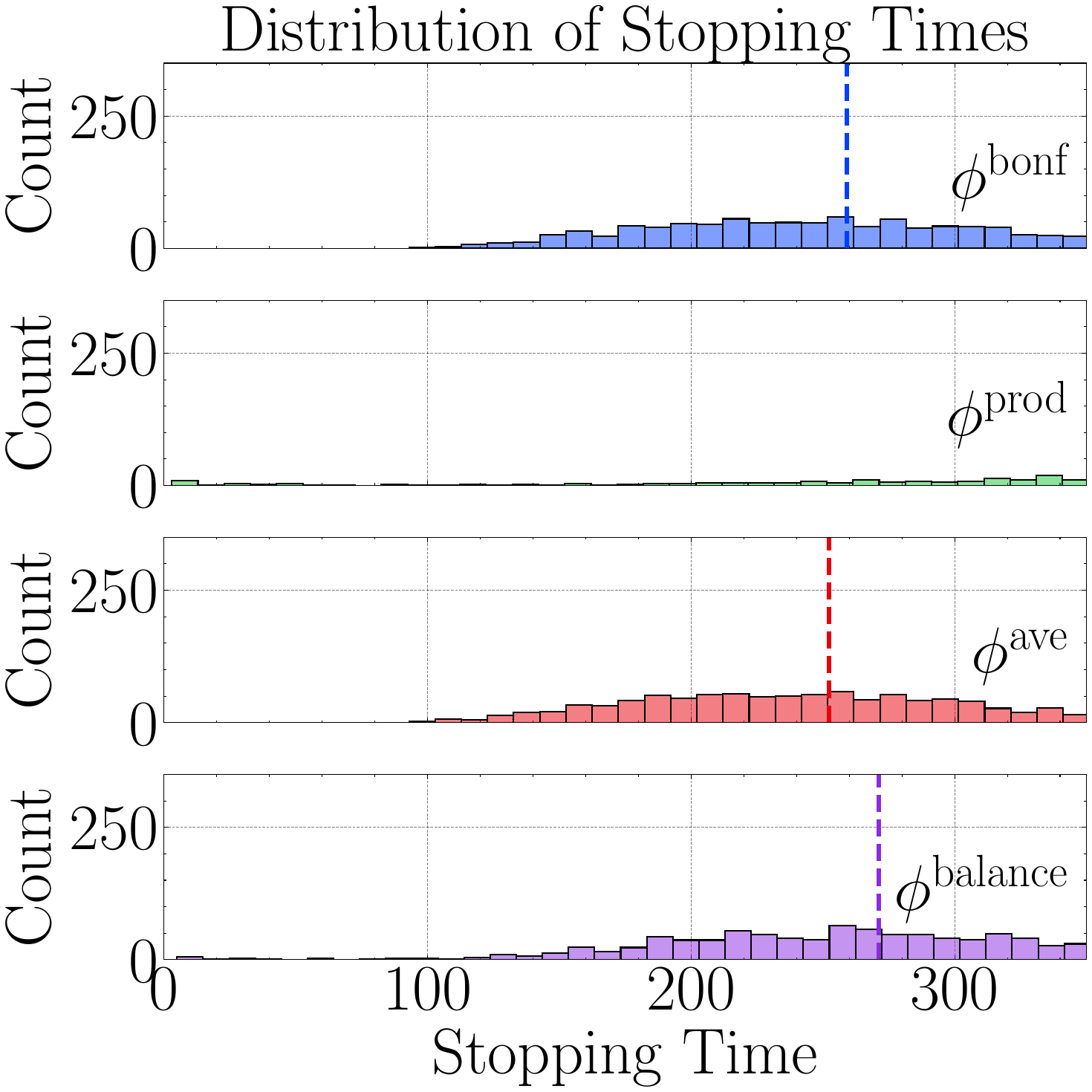}
    \end{subfigure}
    \begin{subfigure}[t]{0.19\linewidth}
        \centering
        \includegraphics[width=\linewidth]{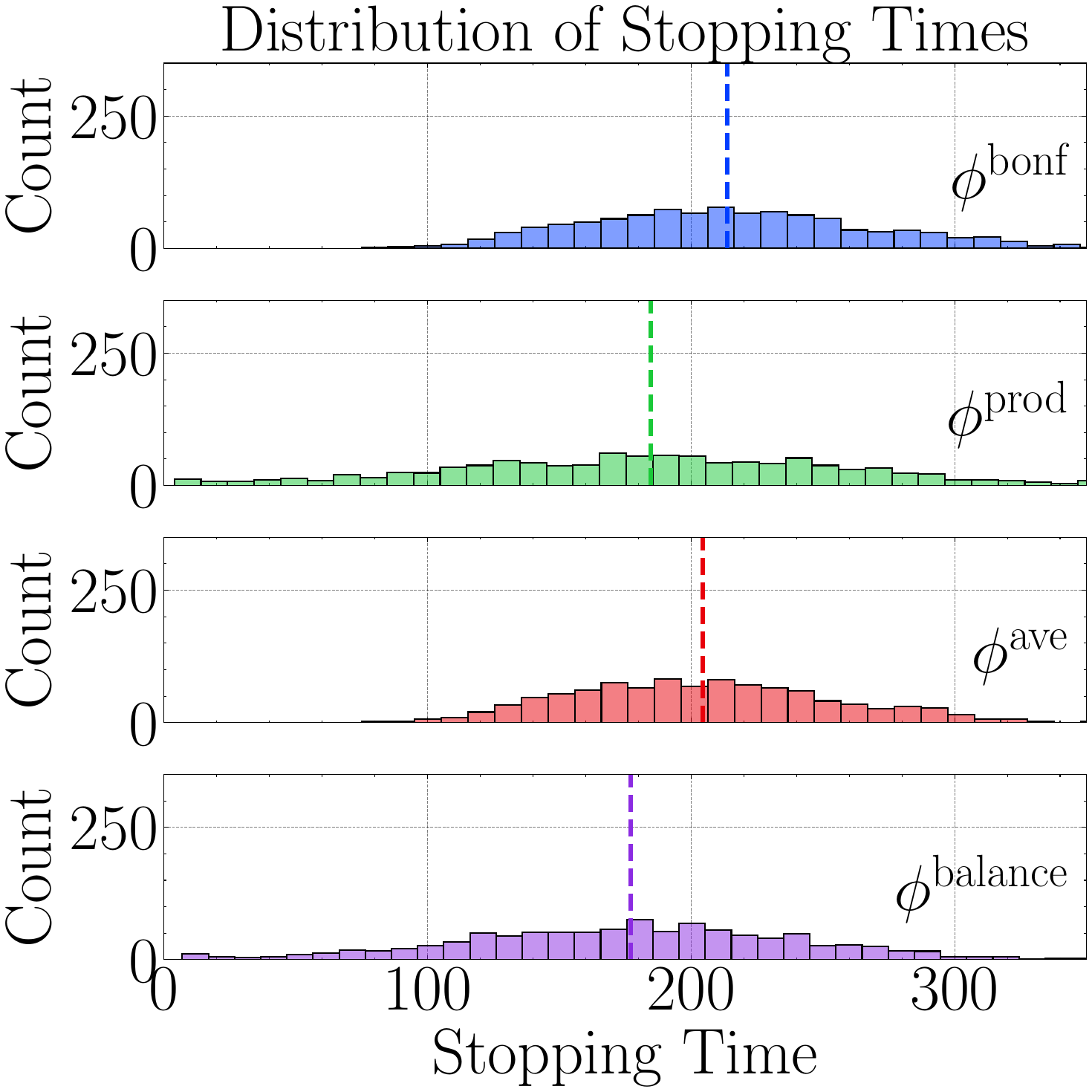}
    \end{subfigure}
    \begin{subfigure}[t]{0.19\linewidth}
        \centering
        \includegraphics[width=\linewidth]{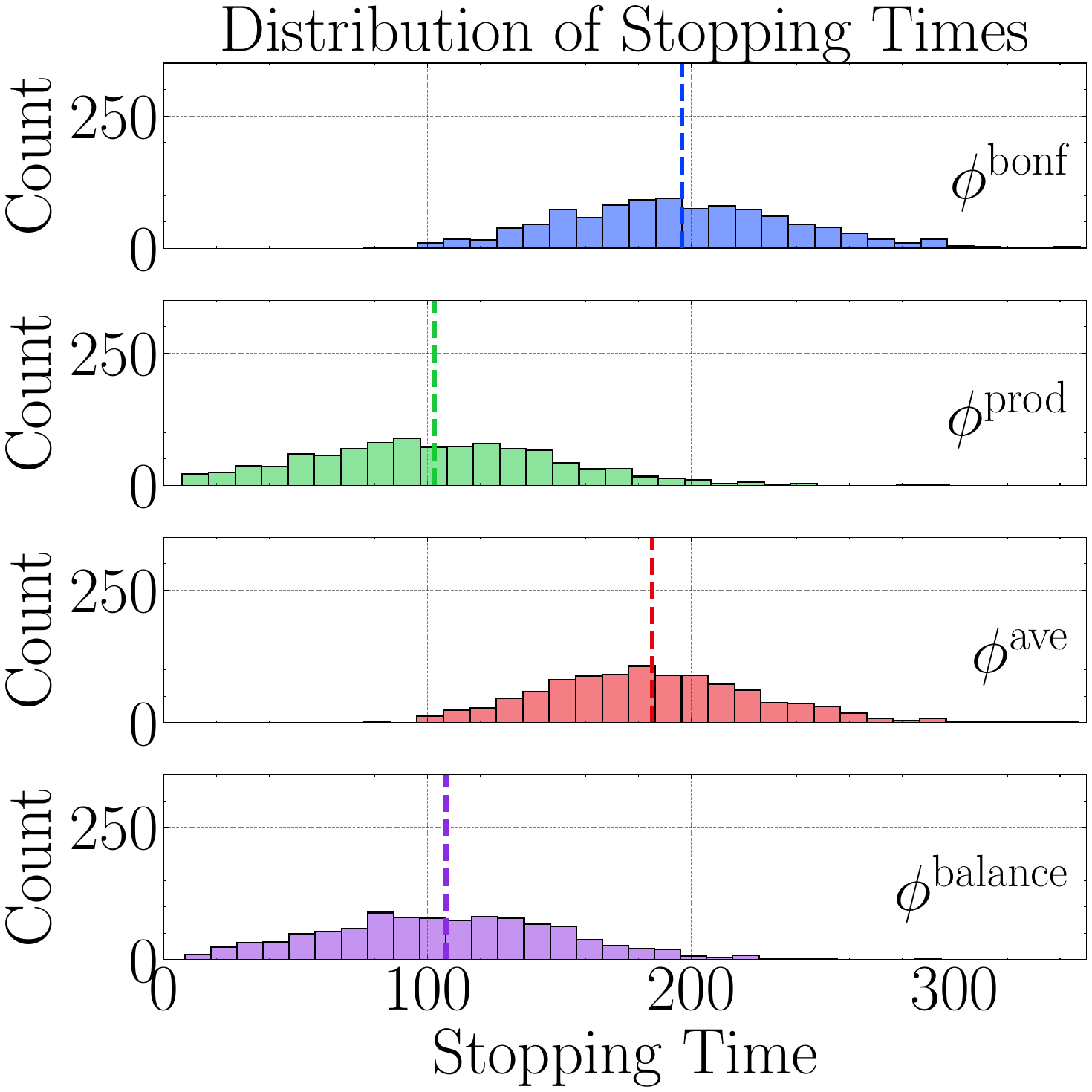}
    \end{subfigure}
    \begin{subfigure}[t]{0.19\linewidth}
        \centering
        \includegraphics[width=\linewidth]{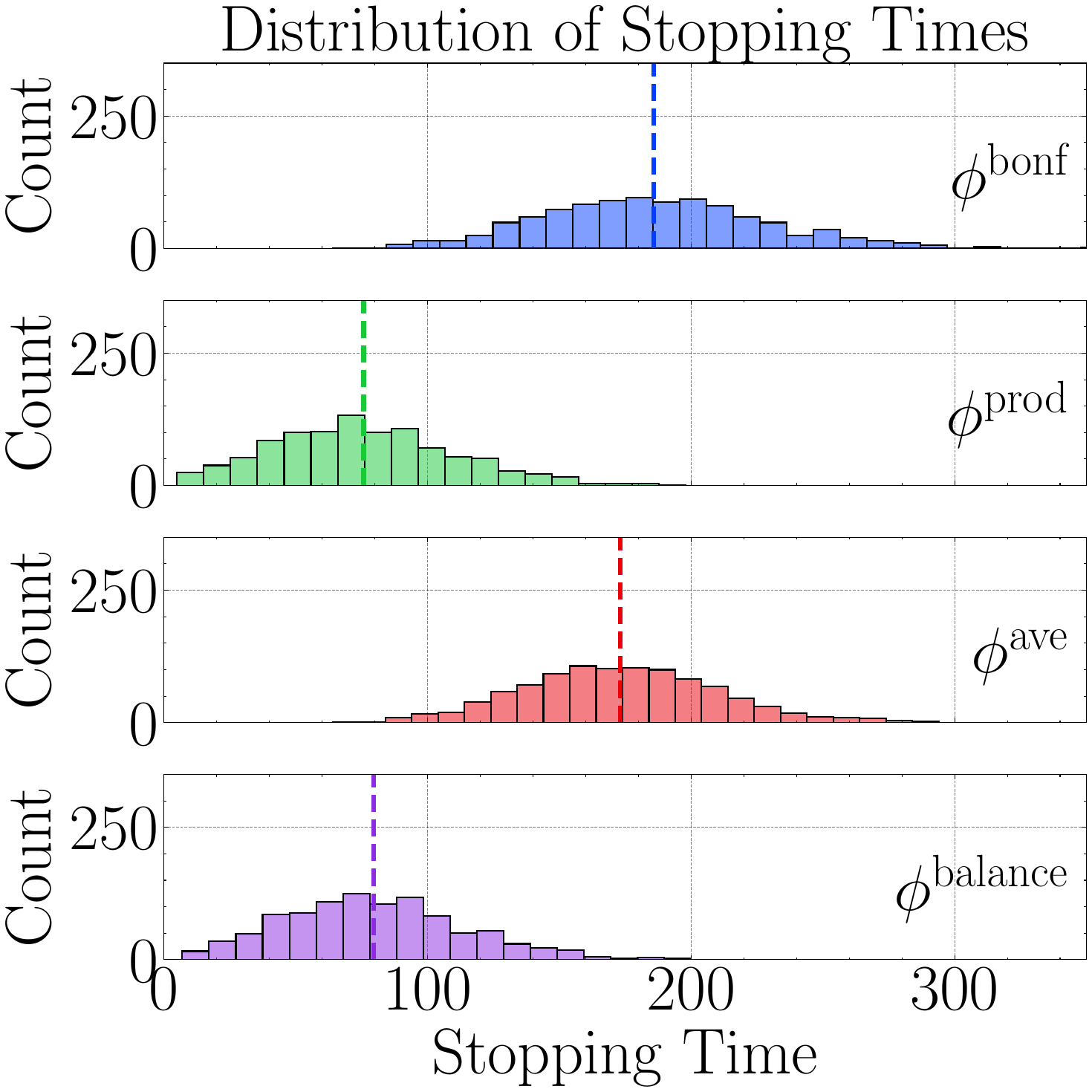}
    \end{subfigure}
    \begin{subfigure}[t]{0.19\linewidth}
        \centering
        \includegraphics[width=\linewidth]{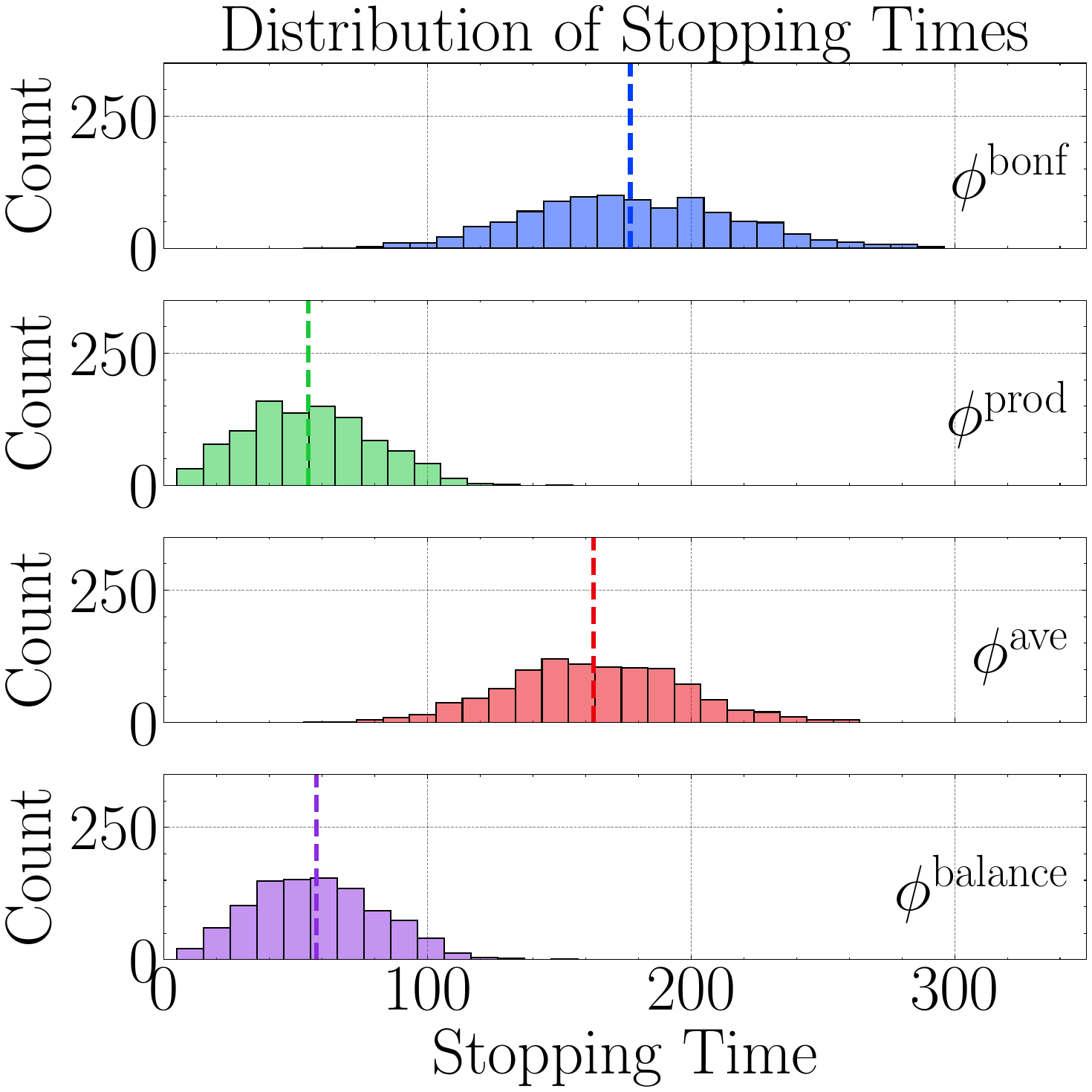}
    \end{subfigure}
    }

    \centering
    \resizebox{1\textwidth}{!}{%
    \begin{subfigure}[t]{0.19\linewidth}
        \includegraphics[width=\linewidth]{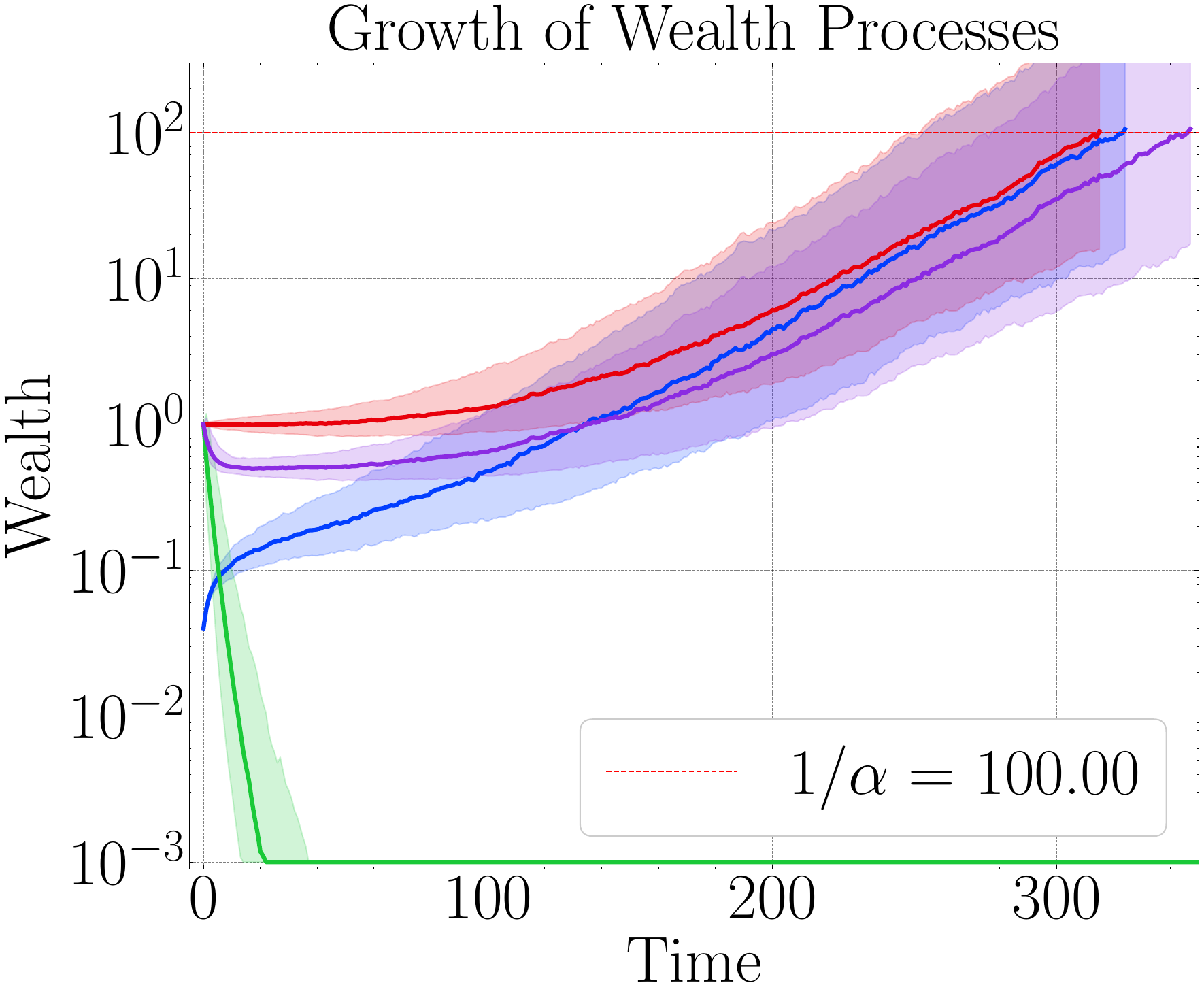}
        \caption{$\lfloor \frac{k_1}{k}\rfloor = 0.05$}
    \end{subfigure}
    \begin{subfigure}[t]{0.19\linewidth}
        \centering
        \includegraphics[width=\linewidth]{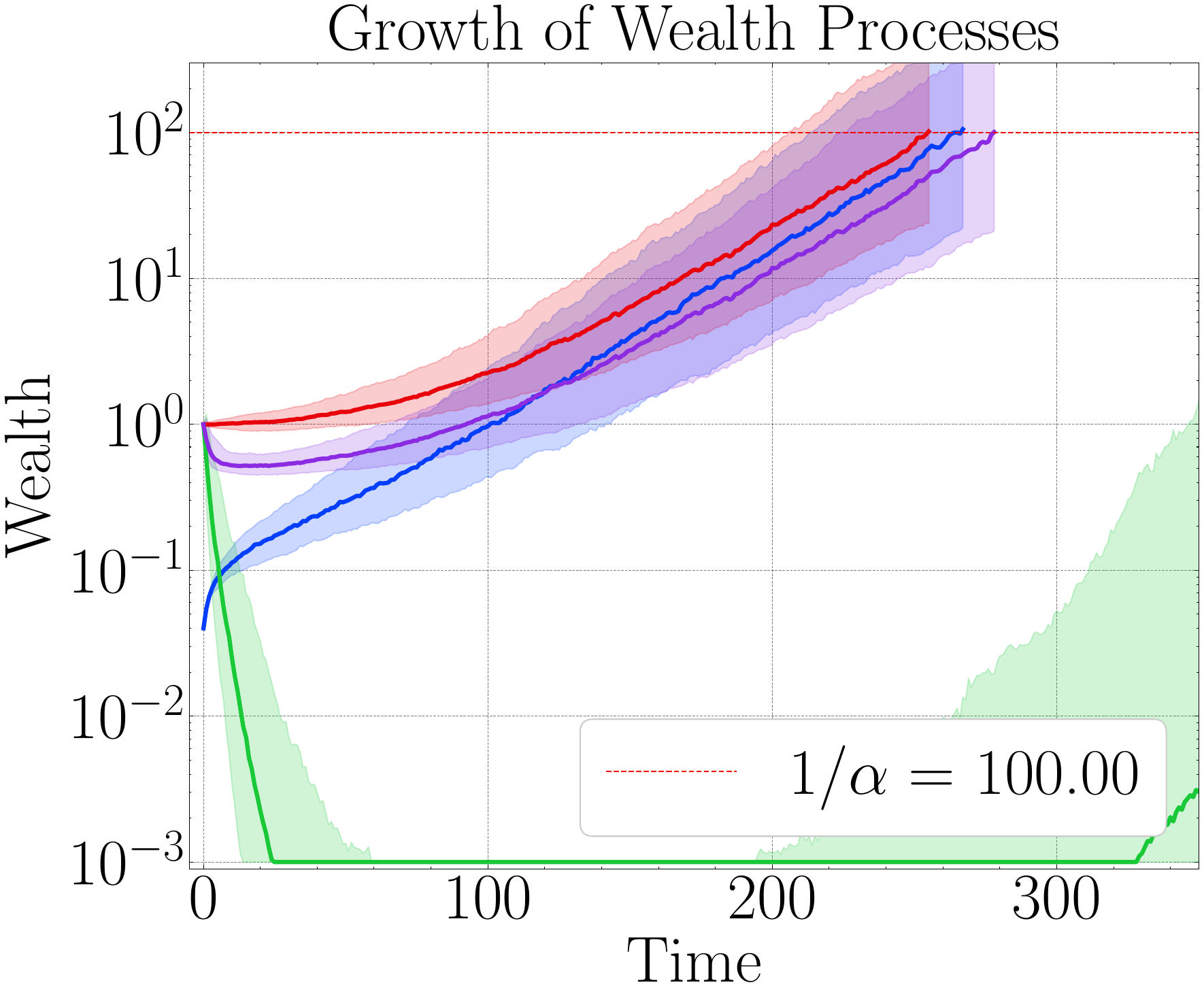}
        \caption{$\lfloor \frac{k_1}{k}\rfloor = 0.15$}
    \end{subfigure}
    \begin{subfigure}[t]{0.19\linewidth}
        \includegraphics[width=\linewidth]{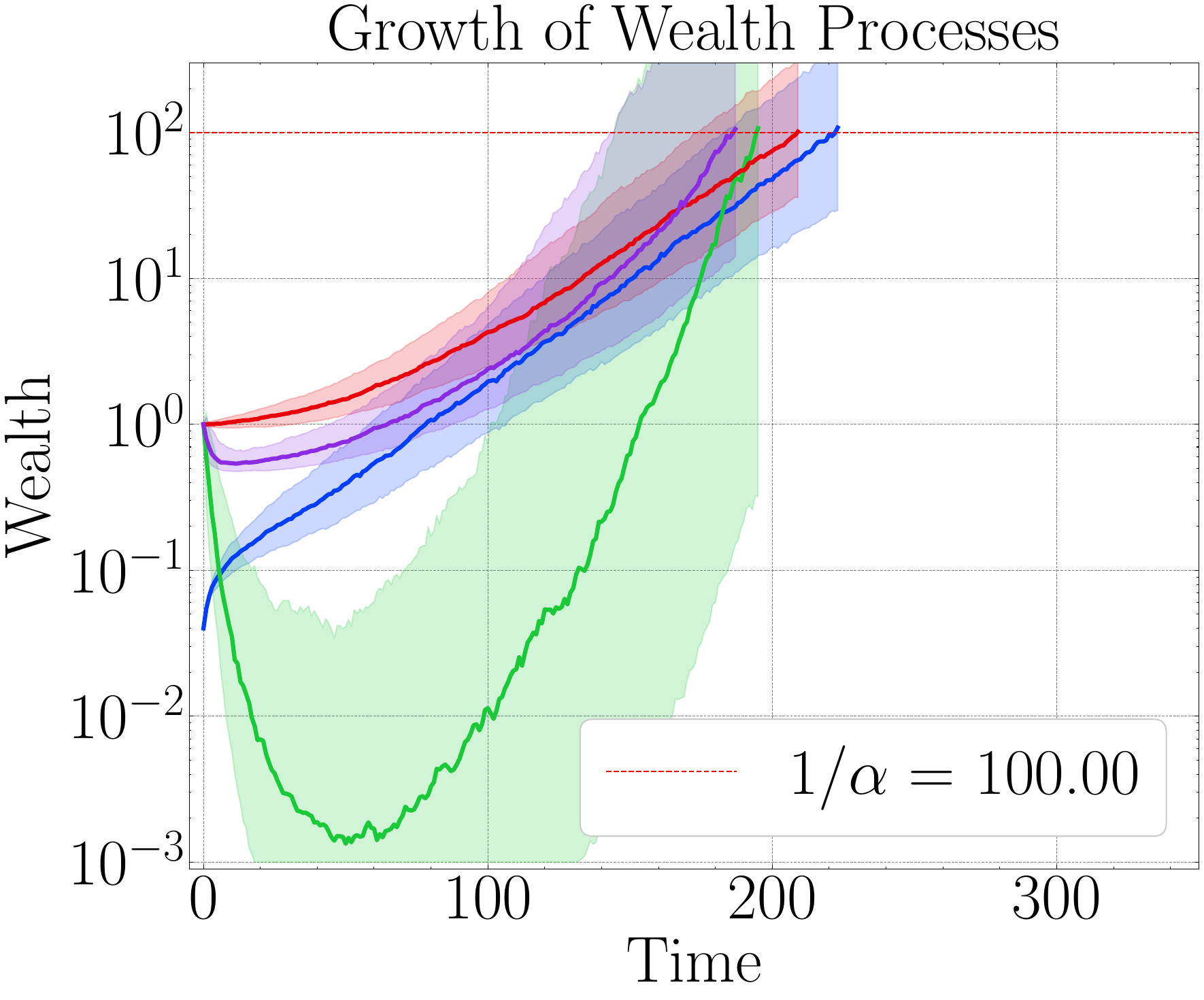}
        \caption{$\lfloor \frac{k_1}{k}\rfloor = 0.30$}
    \end{subfigure}
    \begin{subfigure}[t]{0.19\linewidth}
        \includegraphics[width=\linewidth]{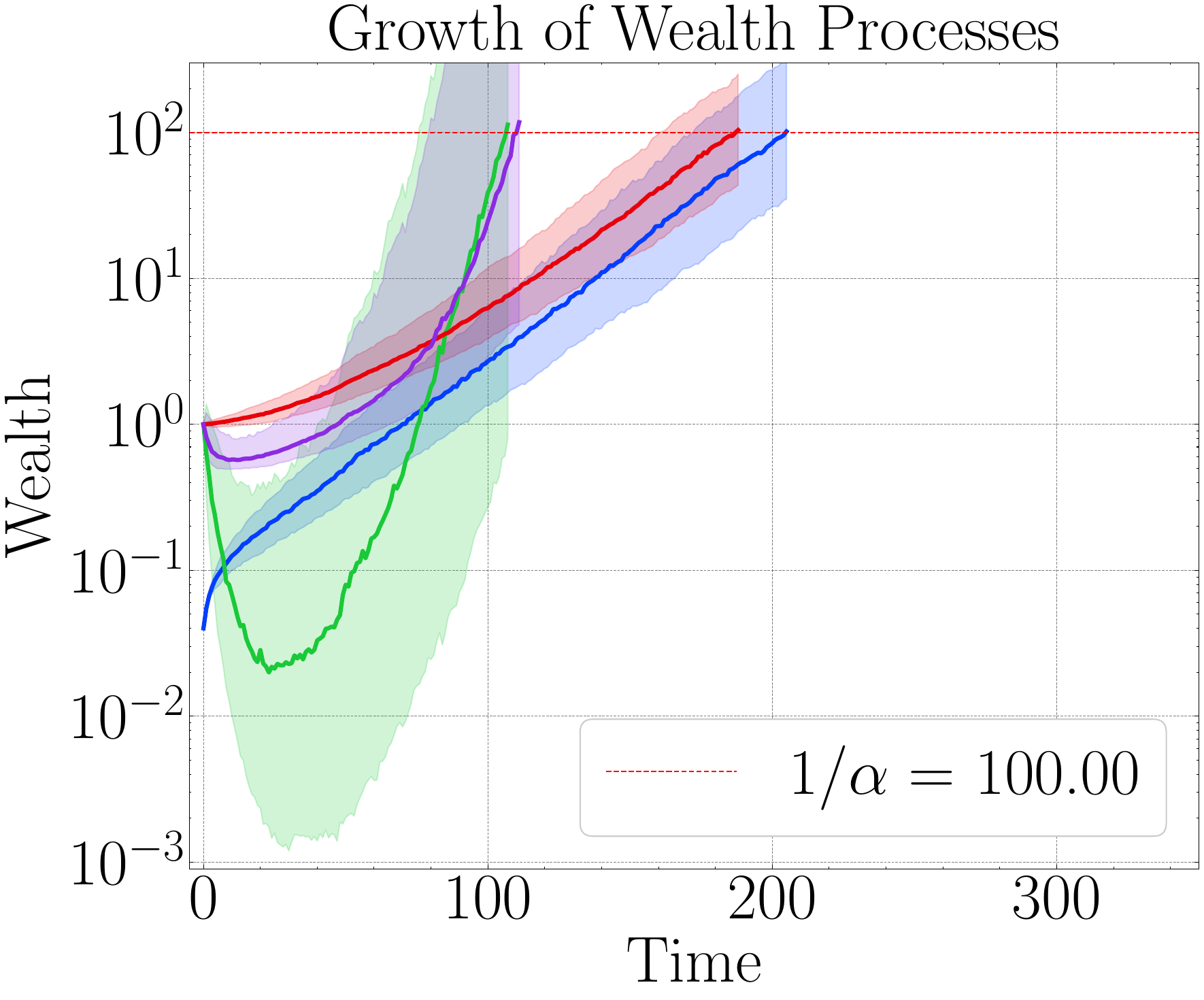}
        \caption{$\lfloor \frac{k_1}{k}\rfloor = 0.45$}
    \end{subfigure}
    \begin{subfigure}[t]{0.19\linewidth}
        \includegraphics[width=\linewidth]{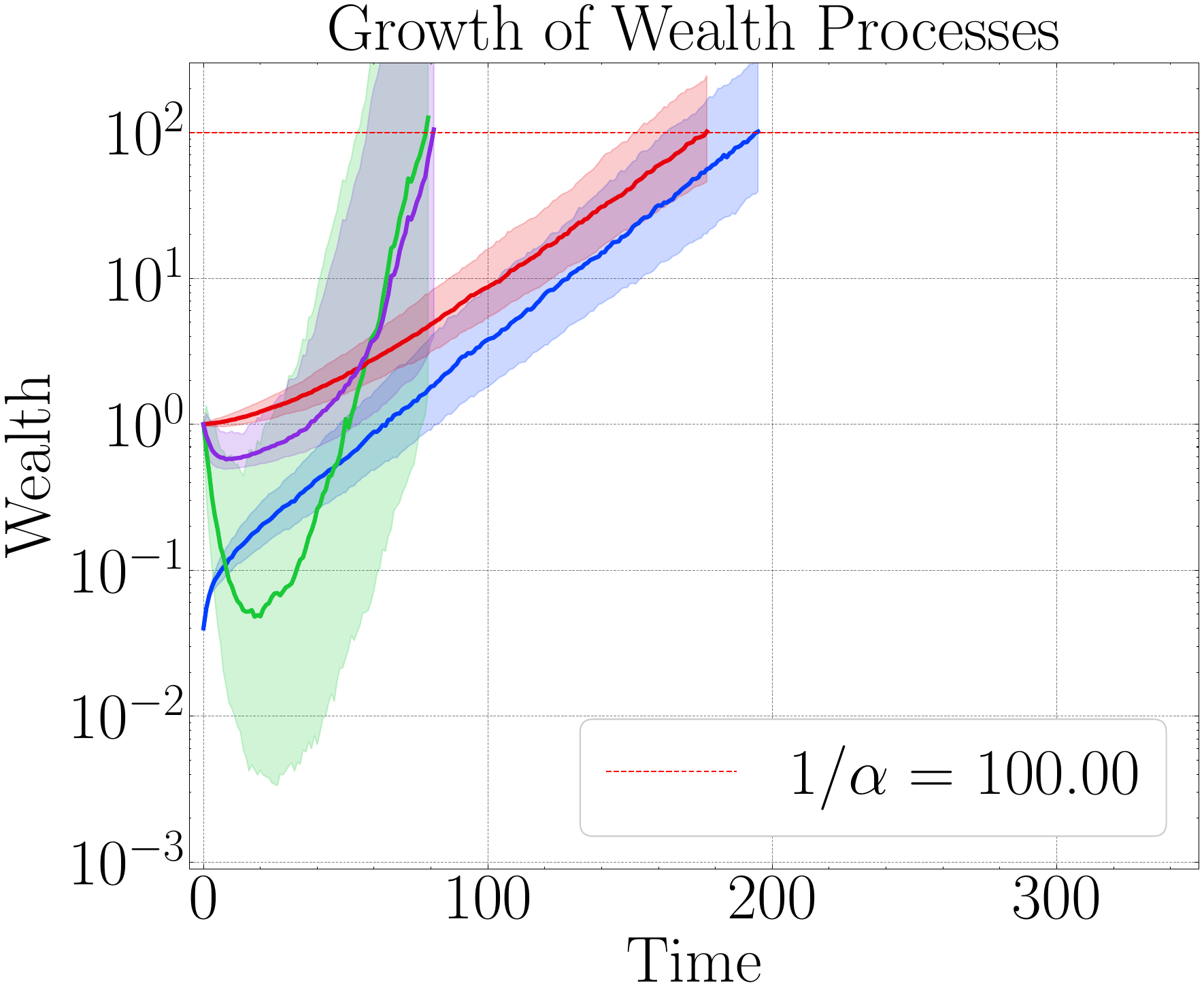}
        \caption{$\lfloor \frac{k_1}{k}\rfloor = 0.60$}
    \end{subfigure}
    \begin{subfigure}[t]{0.19\linewidth}
        \includegraphics[width=\linewidth]{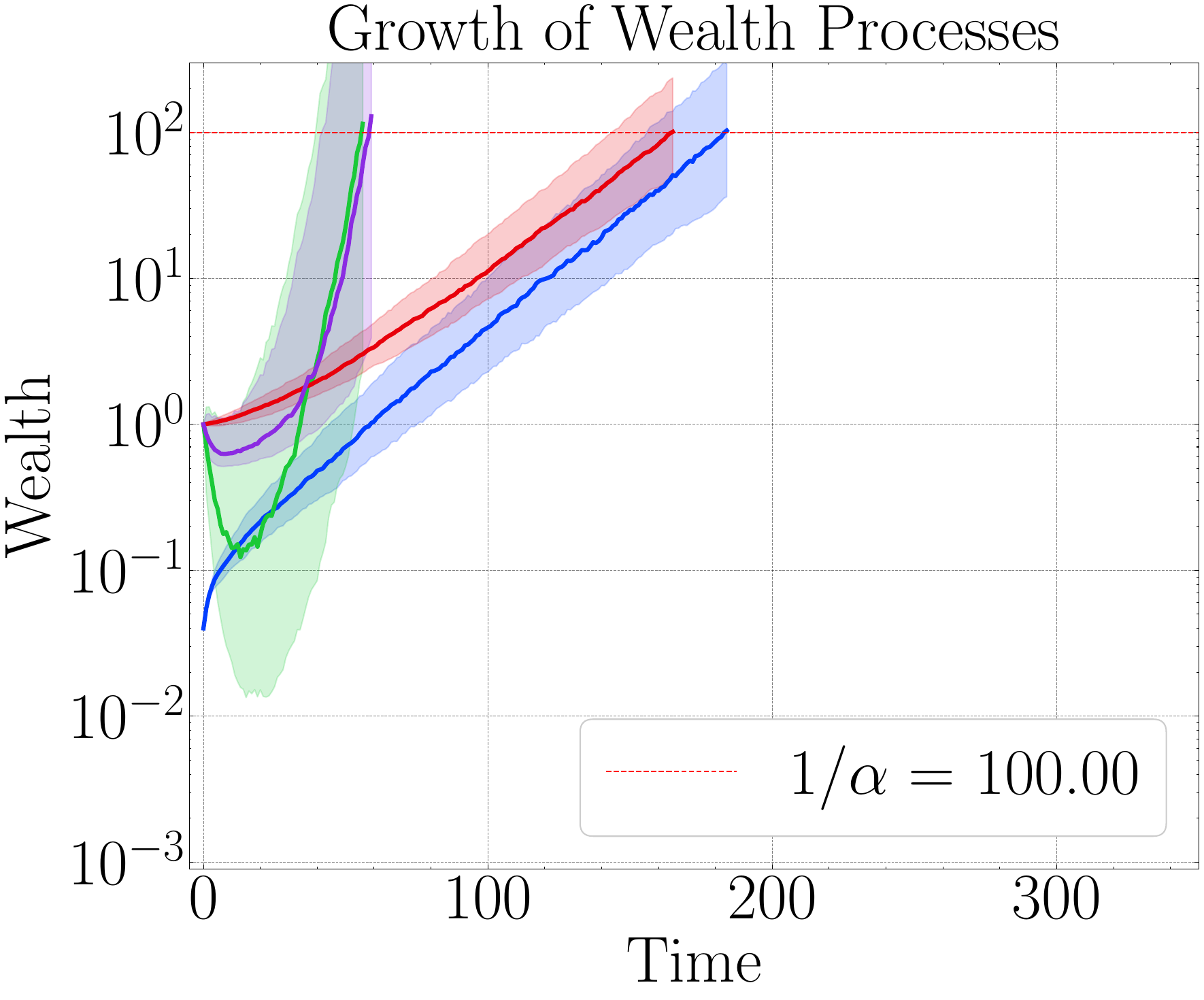}
        \caption{$\lfloor \frac{k_1}{k}\rfloor = 0.75$}
    \end{subfigure}
    }
    \caption{{\bf Top:} Distribution of stopping times, over 1,000 simulations, for various sequential tests across settings with varying proportions of streams with nonzero means. A test rejects when its corresponding wealth process exceeds $\nicefrac{1}{\alpha}$ for $\alpha = 0.01$. The dashed vertical line is the empirical mean of the stopping times. {\bf Bottom:} Trajectories of various wealth processes across settings with different amounts of nonzero means. Each line represents the median trajectory of a wealth process over 1,000 simulations, with shaded areas indicating the 25\% and 75\% quantiles. The y-axis is presented on a logarithmic scale. Wealth processes are clipped to $10^{-3}$ for visualization purposes.}
\end{figure*}

\newpage

\subsubsection{Results for $k$ = 100}
\begin{figure*}[h!]
    \centering
    \resizebox{1\textwidth}{!}{
    \hspace{-0.5em}
    \begin{subfigure}[t]{0.19\linewidth}
        \centering
        \includegraphics[width=\linewidth]{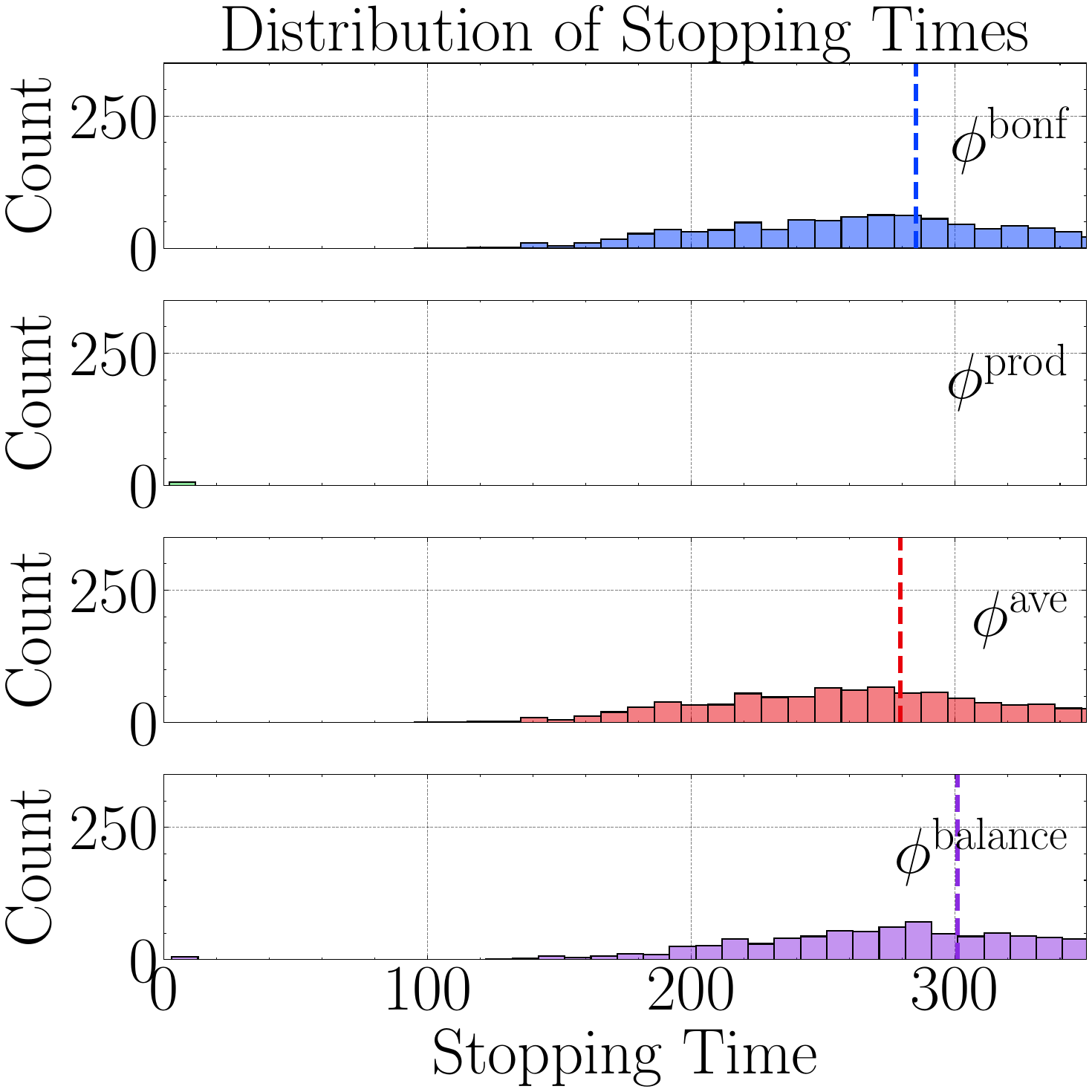}
    \end{subfigure}
    \begin{subfigure}[t]{0.19\linewidth}
        \centering
        \includegraphics[width=\linewidth]{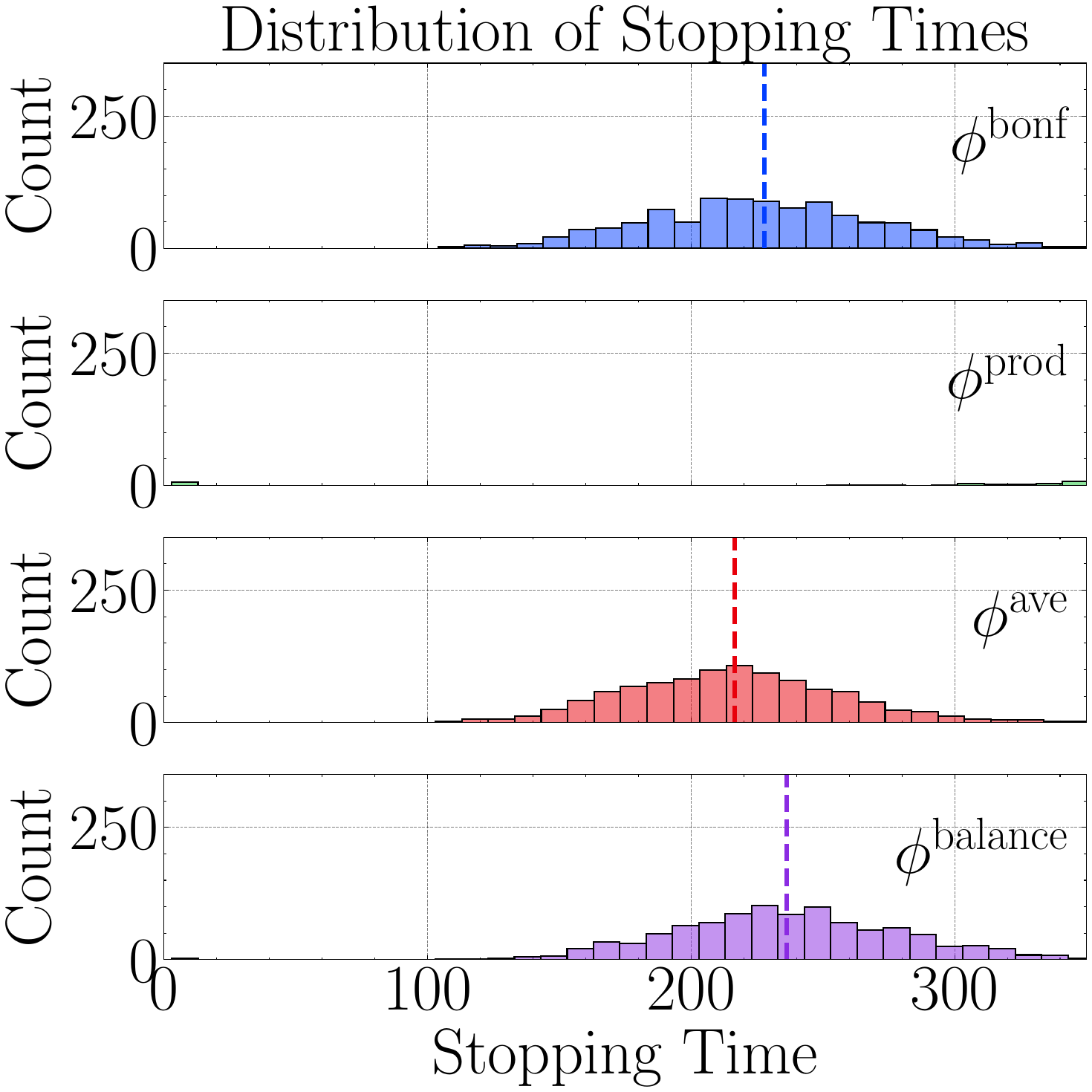}
    \end{subfigure}
    \begin{subfigure}[t]{0.19\linewidth}
        \centering
        \includegraphics[width=\linewidth]{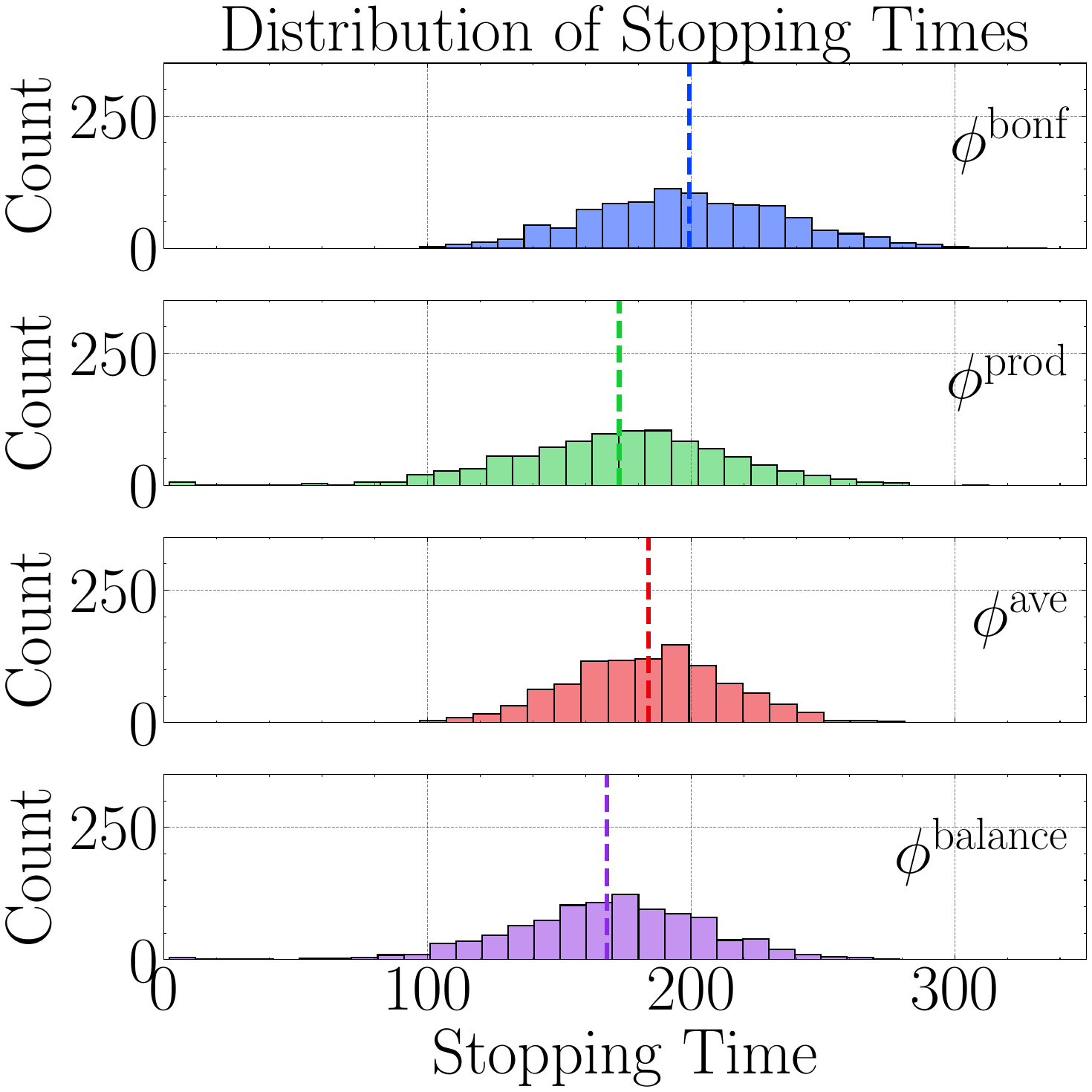}
    \end{subfigure}
    \begin{subfigure}[t]{0.19\linewidth}
        \centering
        \includegraphics[width=\linewidth]{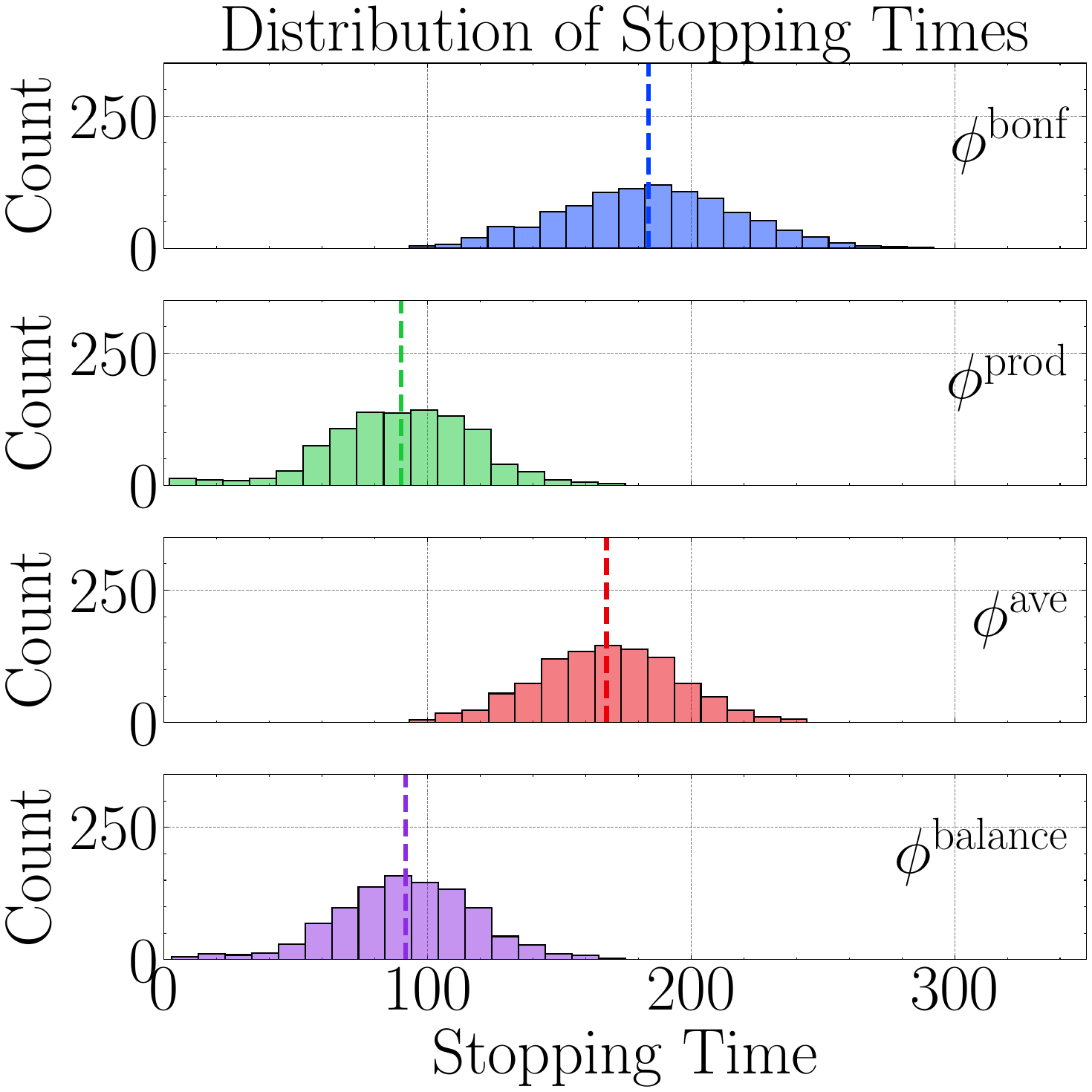}
    \end{subfigure}
    \begin{subfigure}[t]{0.19\linewidth}
        \centering
        \includegraphics[width=\linewidth]{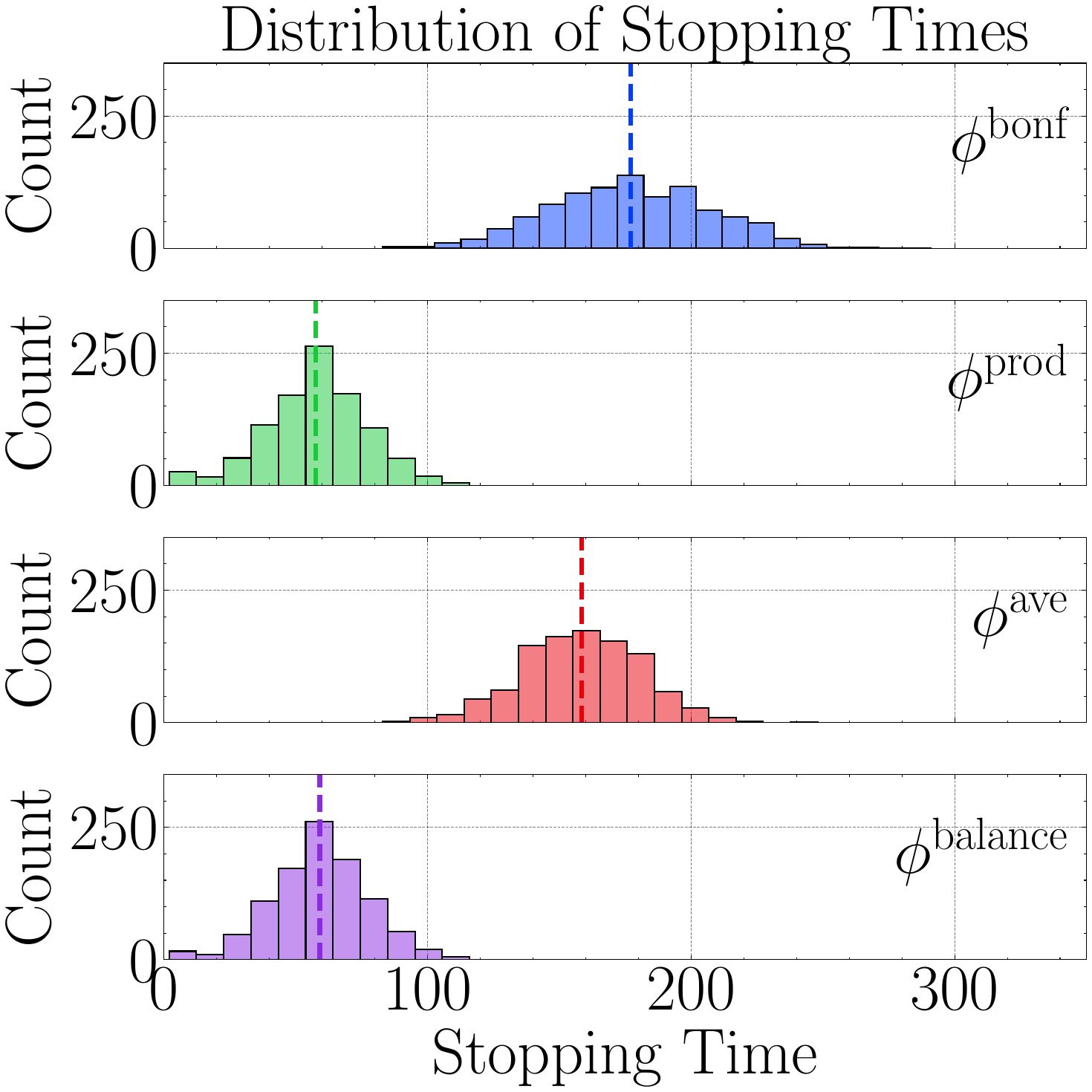}
    \end{subfigure}
    \begin{subfigure}[t]{0.19\linewidth}
        \centering
        \includegraphics[width=\linewidth]{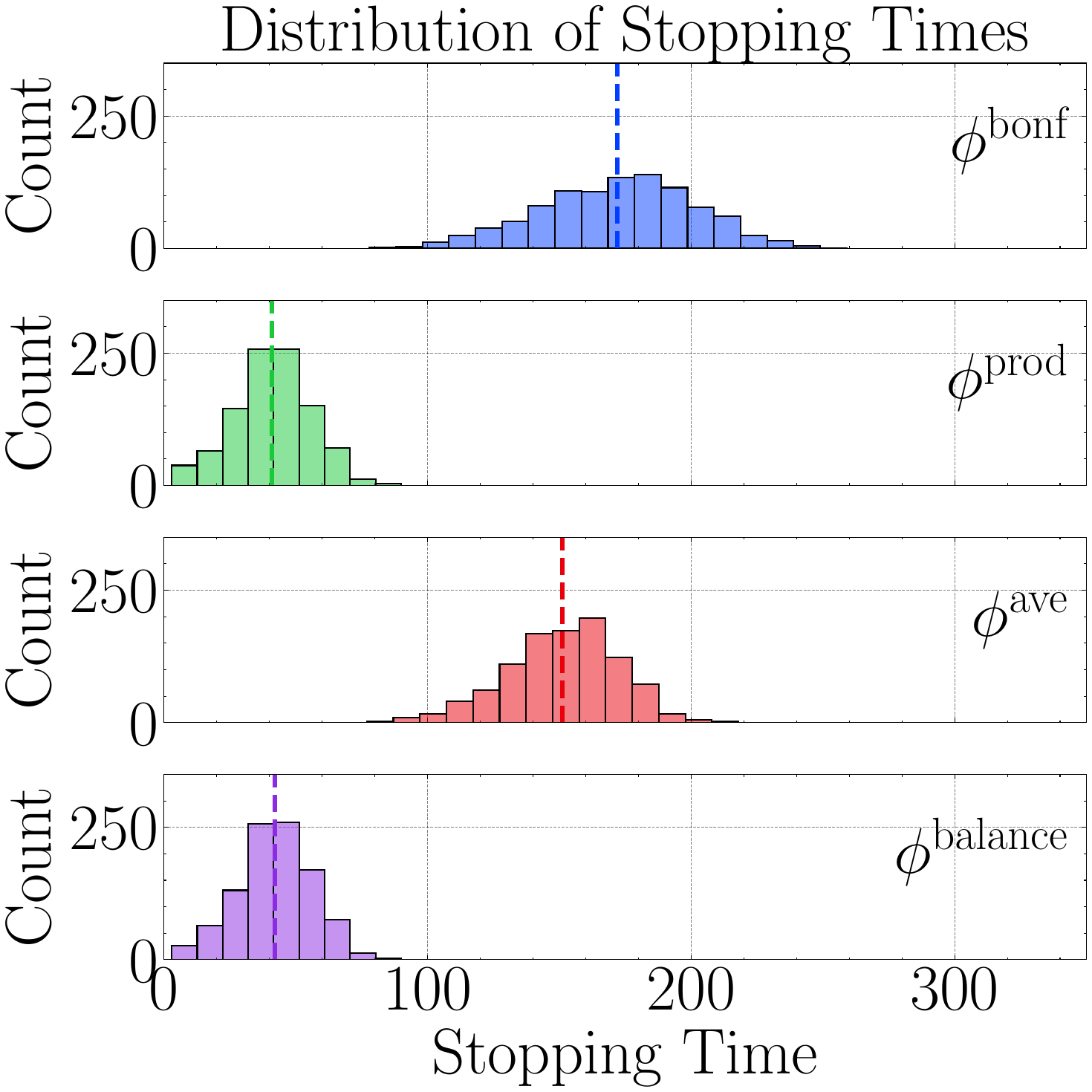}
    \end{subfigure}
    }

    \centering
    \resizebox{1\textwidth}{!}{%
    \begin{subfigure}[t]{0.19\linewidth}
        \includegraphics[width=\linewidth]{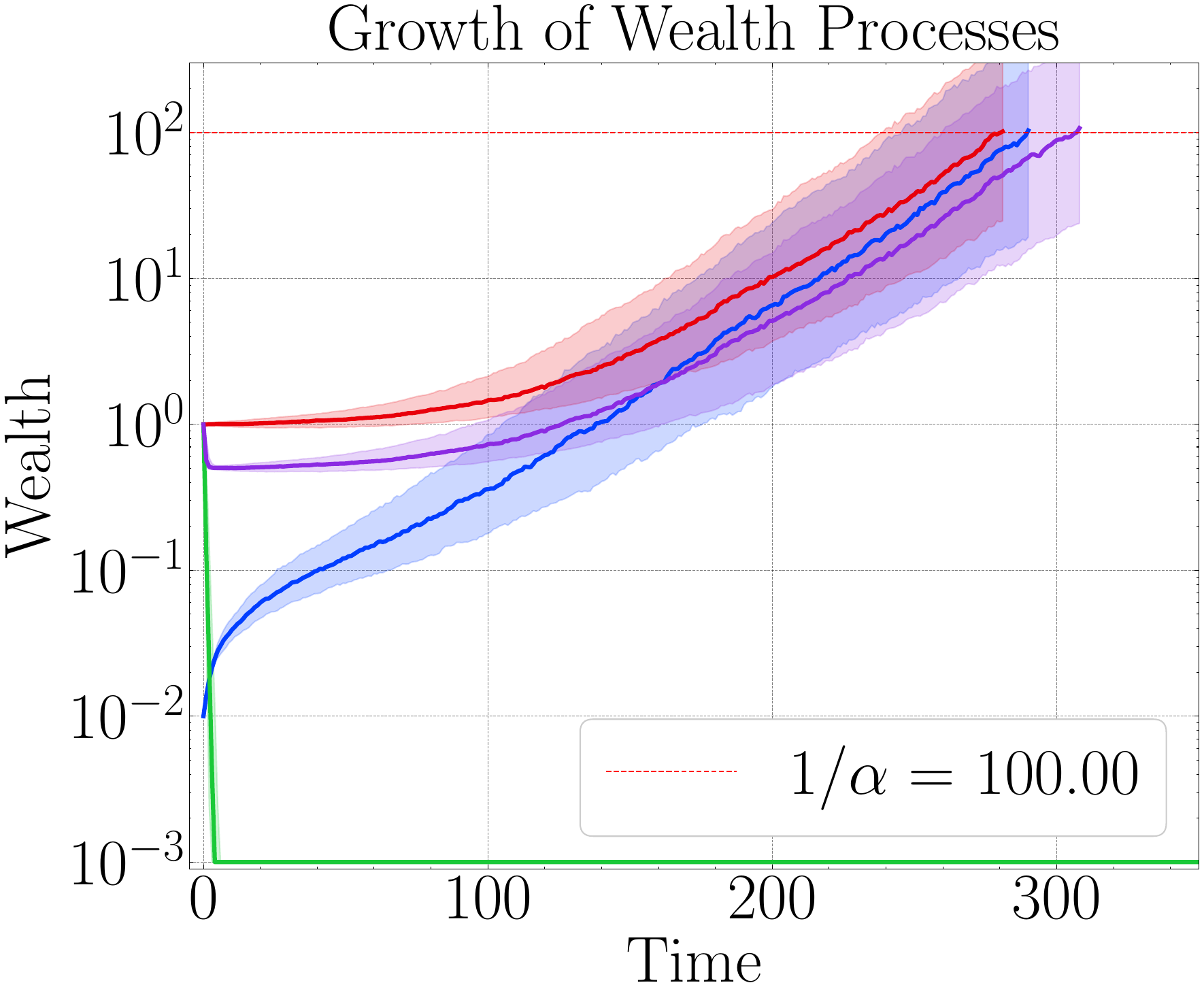}
        \caption{$\lfloor \frac{k_1}{k}\rfloor = 0.05$}
    \end{subfigure}
    \begin{subfigure}[t]{0.19\linewidth}
        \centering
        \includegraphics[width=\linewidth]{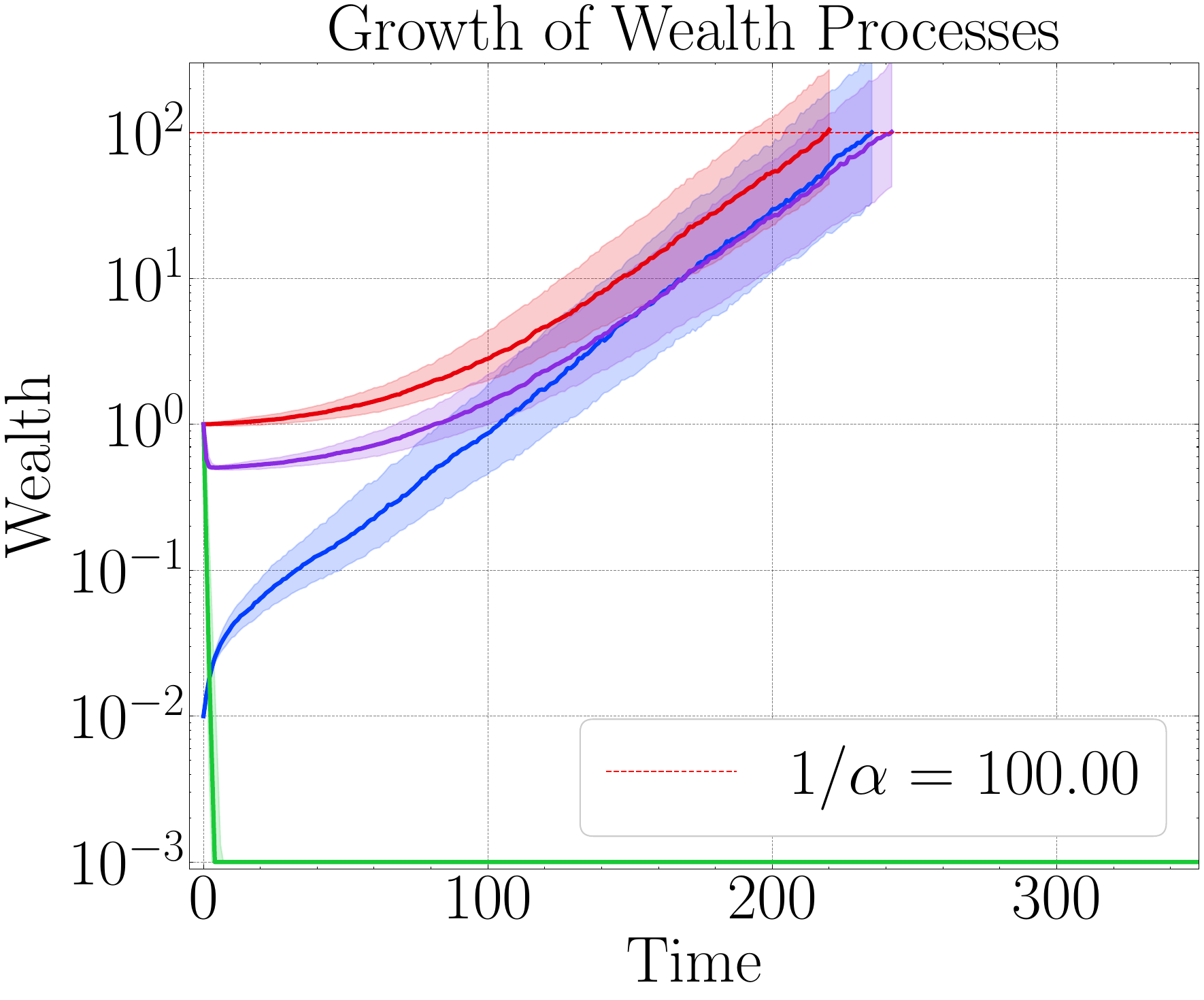}
        \caption{$\lfloor \frac{k_1}{k}\rfloor = 0.15$}
    \end{subfigure}
    \begin{subfigure}[t]{0.19\linewidth}
        \includegraphics[width=\linewidth]{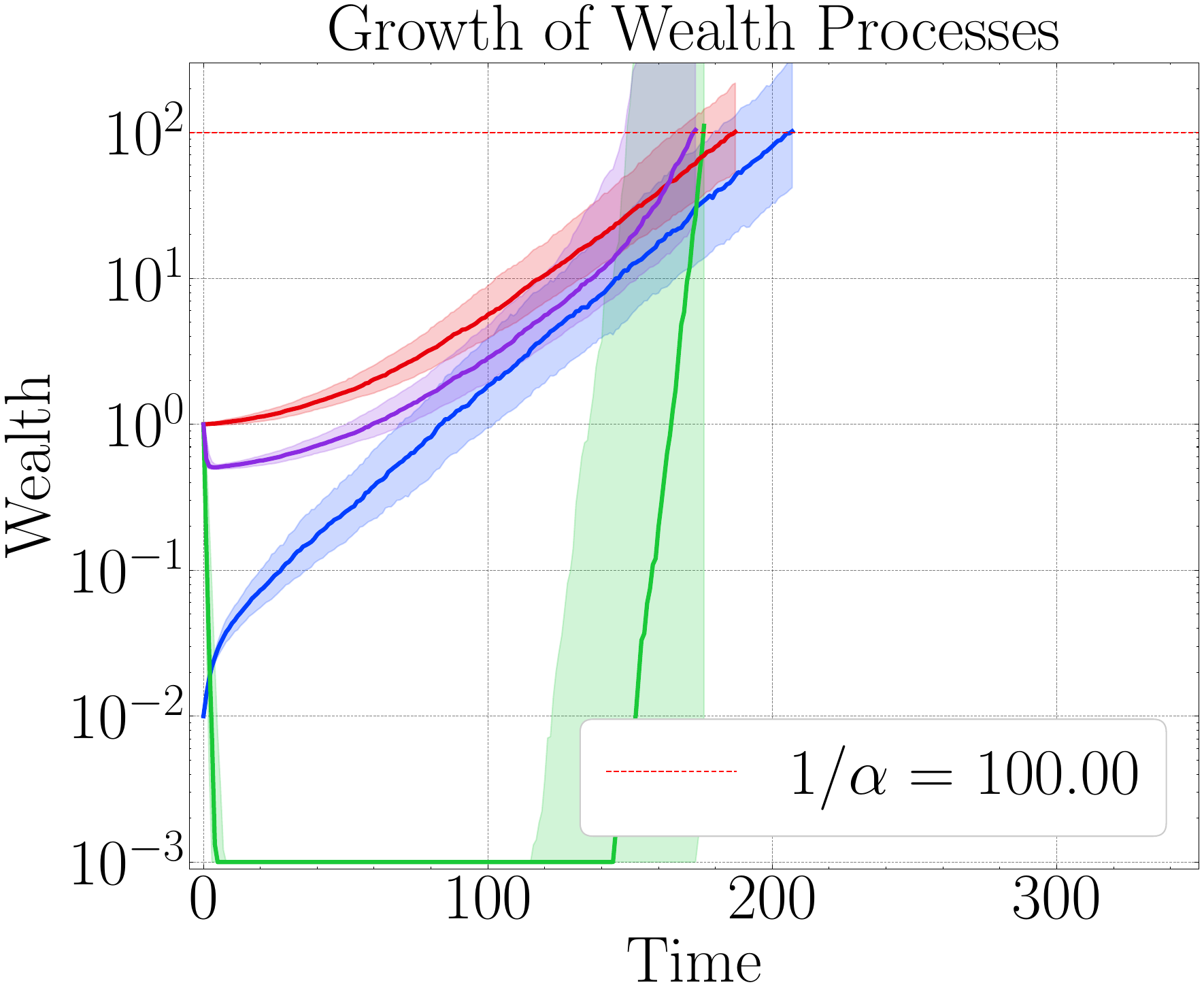}
        \caption{$\lfloor \frac{k_1}{k}\rfloor = 0.30$}
    \end{subfigure}
    \begin{subfigure}[t]{0.19\linewidth}
        \includegraphics[width=\linewidth]{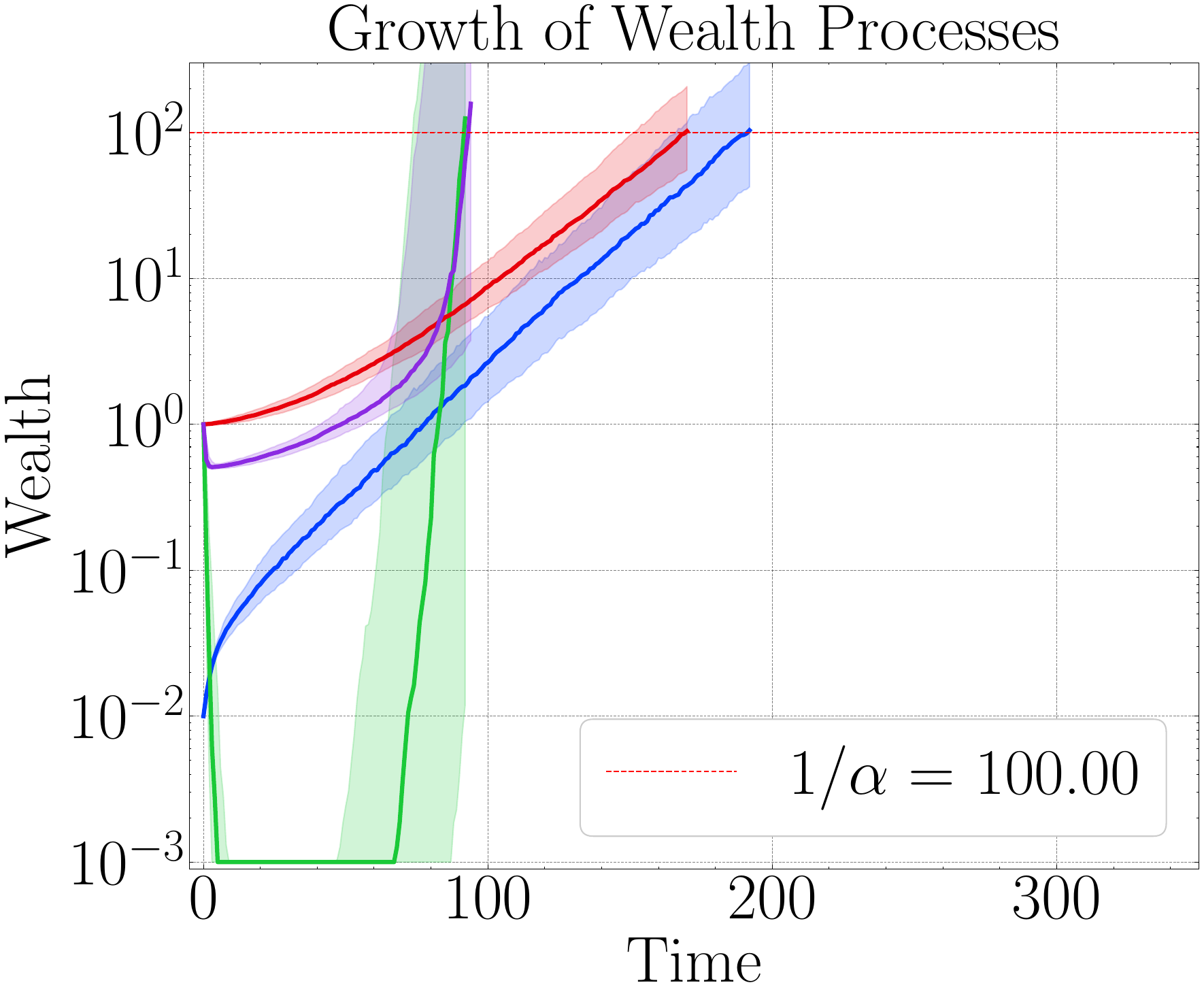}
        \caption{$\lfloor \frac{k_1}{k}\rfloor = 0.45$}
    \end{subfigure}
    \begin{subfigure}[t]{0.19\linewidth}
        \includegraphics[width=\linewidth]{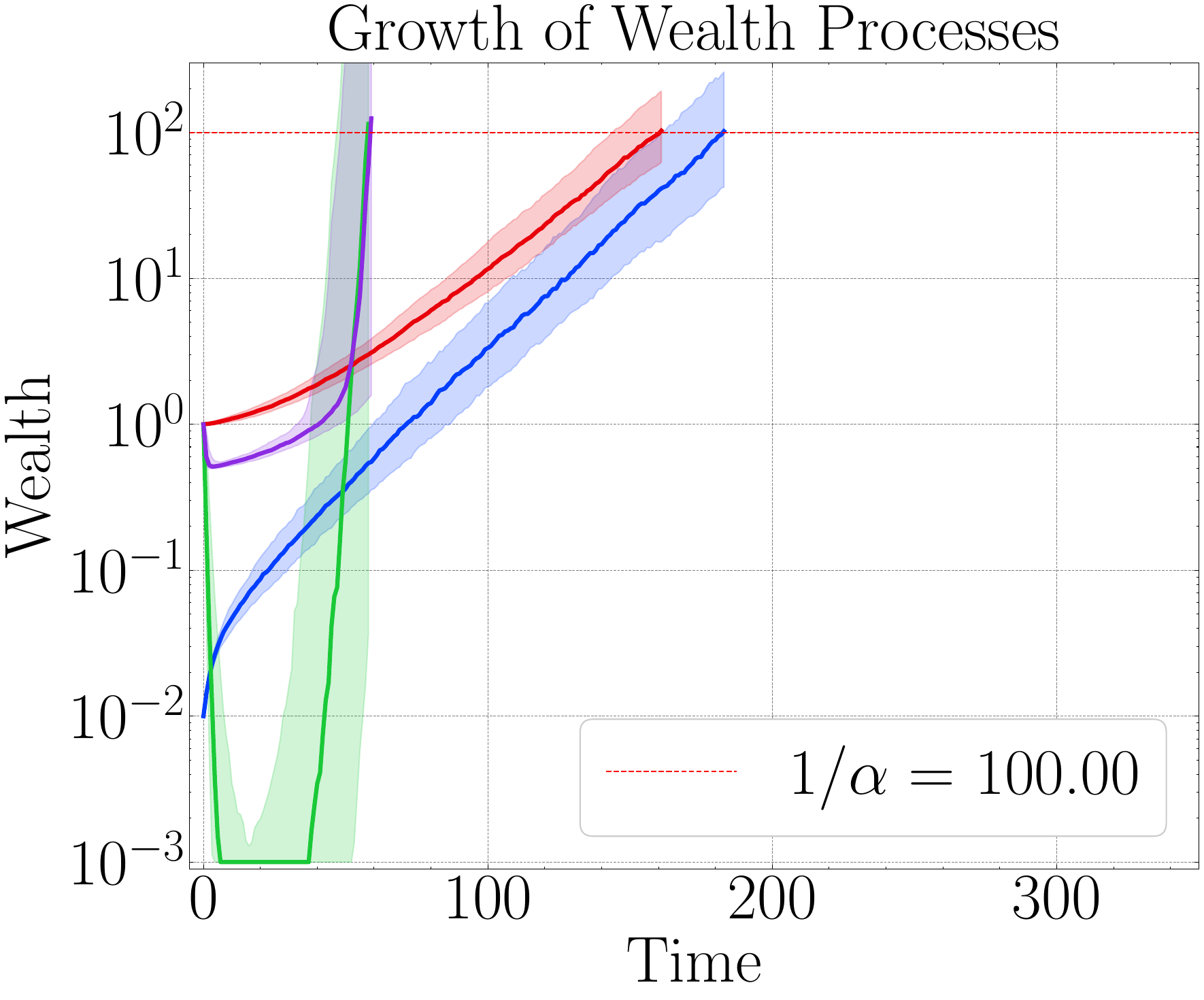}
        \caption{$\lfloor \frac{k_1}{k}\rfloor = 0.60$}
    \end{subfigure}
    \begin{subfigure}[t]{0.19\linewidth}
        \includegraphics[width=\linewidth]{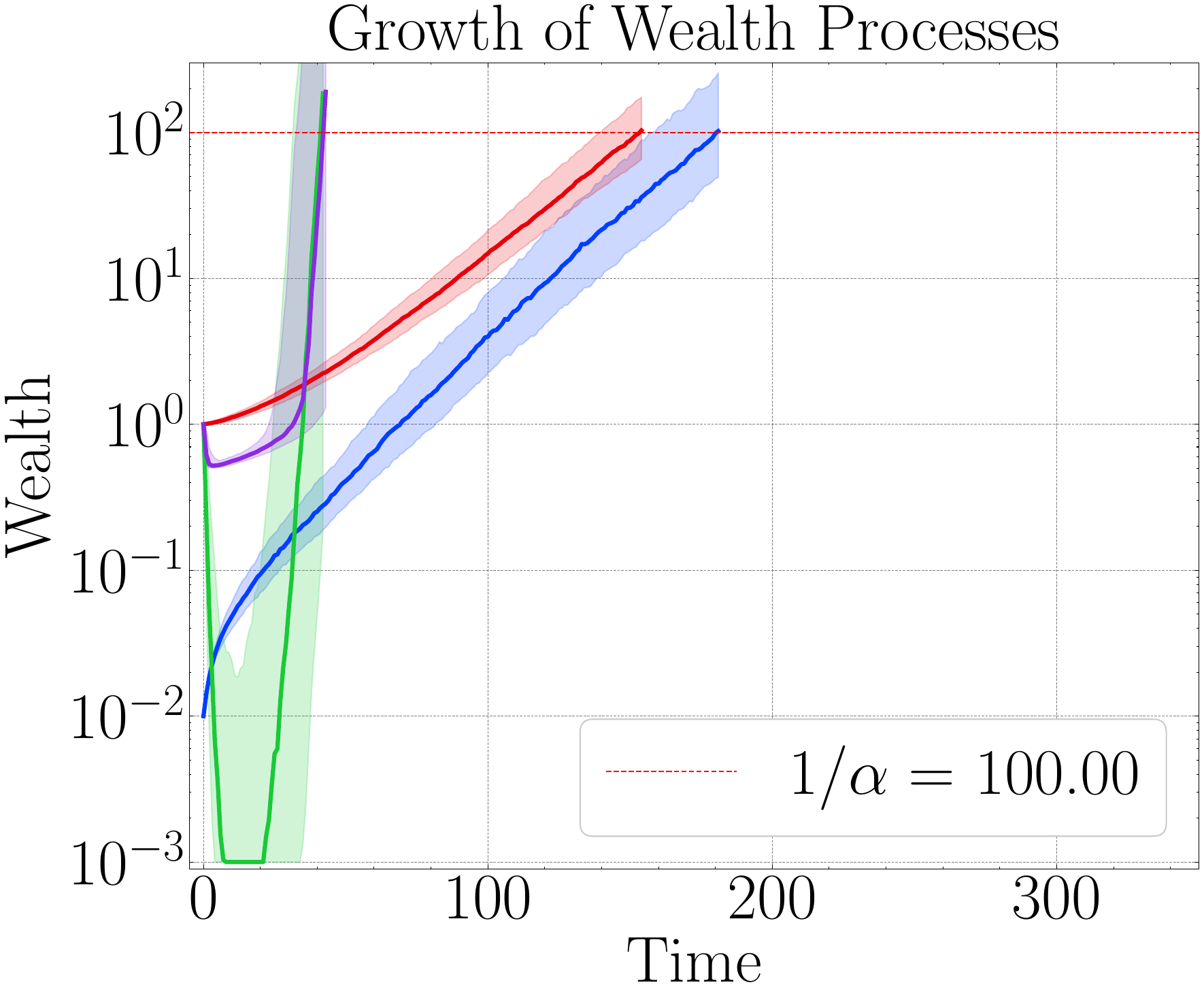}
        \caption{$\lfloor \frac{k_1}{k}\rfloor = 0.75$}
    \end{subfigure}
    }
    \caption{{\bf Top:} Distribution of stopping times, over 1,000 simulations, for various sequential tests across settings with varying proportions of streams with nonzero means. A test rejects when its corresponding wealth process exceeds $\nicefrac{1}{\alpha}$ for $\alpha = 0.01$. The dashed vertical line is the empirical mean of the stopping times. {\bf Bottom:} Trajectories of various wealth processes across settings with different amounts of nonzero means. Each line represents the median trajectory of a wealth process over 1,000 simulations, with shaded areas indicating the 25\% and 75\% quantiles. The y-axis is presented on a logarithmic scale. Wealth processes are clipped to $10^{-3}$ for visualization purposes.}
\end{figure*}

\newpage

\subsection{Zero-shot medical image classification}

\begin{table}[h!]
\centering
\caption{Additional details about the datasets used in the zero-shot medical classification experiment. The first and second classes listed in the table correspond to $Y=0$ and $Y=1$, respectively. Most of the information provided in this table is derived from \citet{nie2025conceptclip}.} 
\vspace{1em}
\small
\setlength{\tabcolsep}{4pt}
\renewcommand{\arraystretch}{0.95}

\scalebox{0.9}{
\begin{tabular}{p{2cm} p{2cm} p{6cm} p{3cm} p{0.75cm}}
\toprule
Group Name & Dataset & Description & Classes & Link \\ 
\midrule
\texttt{brain CT} & Brain Tumor CT  &  High-resolution CT scans for brain tumor detection. & Healthy \newline Tumor& \href{https://www.kaggle.com/datasets/murtozalikhon/brain-tumor-multimodal-image-ct-and-mri}{\textcolor{blue}{\faCheck}}    \\ \hline
\texttt{brain MRI}   & Brain Tumor MRI     & Similar to the Brain Tumor CT dataset but with MRIs.     & Healthy \newline Tumor  & \href{https://www.kaggle.com/datasets/murtozalikhon/brain-tumor-multimodal-image-ct-and-mri}{\textcolor{blue}{\faCheck}}    \\ \hline
\texttt{covid X-ray}   & Covid-CXR2     & 16,000 chest X-ray images including 2,300 positive COVID-19 images/  & No finding \newline Covid-19 & \href{https://www.kaggle.com/datasets/andyczhao/covidx-cxr2}{\textcolor{blue}{\faCheck}}    \\ \hline
\texttt{breast mammogram}   & Breast Cancer   & 3,383 annotated mammogram images focused on breast tumors. & Normal \newline Tumor  & \href{https://www.kaggle.com/datasets/hayder17/breast-cancer-detection/}{\textcolor{blue}{\faCheck}}  \\ \hline
\texttt{breast ultrasound}    & UBIBC    & Ultrasound images related to breast cancer. & Benign \newline Malignant &\href{https://www.kaggle.com/datasets/vuppalaadithyasairam/ultrasound-breast-images-for-breast-cancer}{\textcolor{blue}{\faCheck}} \\ \hline
\texttt{colon pathology}   & LC2500     & 10,000 pathology images from colon tissues.  & Normal \newline Adenocarcinomas &\href{https://github.com/tampapath/lung_colon_image_set}{\textcolor{blue}{\faCheck}}  \\ \hline
\texttt{retinal oct}    & Retinal OCT     & High-quality retinal OCT images.     & Normal \newline Not Normal &\href{https://www.kaggle.com/datasets/obulisainaren/retinal-oct-c8}{\textcolor{blue}{\faCheck}}    \\ \hline
 \texttt{colon endoscopy}    & WCE    & Curated colon disease images. & Normal \newline Not Normal &\href{https://www.kaggle.com/datasets/francismon/curated-colon-dataset-for-deep-learning}{\textcolor{blue}{\faCheck}}   \\ \hline
 \texttt{lung pathology}    & LC2500     & 15,000 pathology images from lung tissue.   & Normal \newline Not Normal &\href{https://github.com/tampapath/lung_colon_image_set}{\textcolor{blue}{\faCheck}}   \\ \hline
\texttt{covid ct}  & COVIDxCT     & CT scans of patients with Covid-19.    & Normal \newline Covid-19 &\href{https://www.kaggle.com/datasets/hgunraj/covidxct/data}{\textcolor{blue}{\faCheck}}   \\ 
\bottomrule
\end{tabular}
}
\label{app:conceptclip_data_table}
\end{table}

\begin{table}[ht!]
    \centering
    \caption{$\bE_{P_i}[Z]$ across different groups defined by modality and anatomical region} 
    \label{app:conceptclip_errors_table}
    \vspace{1em}
    \resizebox{0.4\linewidth}{!}{ 
    \begin{tabular}{lc}
        \toprule
        \textbf{Group} & $\bE_{P_i}[Z]$ \\ 
        \midrule
        \texttt{brain CT} & $0.31$ \\   
        \texttt{brain MRI} & $0.07$ \\   
        \texttt{covid X-ray} & $0.28$  \\
        \texttt{breast mammogram} & $-0.19$ \\   
        \texttt{breast ultrasound} & $-0.07$ \\   
        \texttt{colon pathology} & $0.09$ \\   
        \texttt{retinal oct} & $-0.04$ \\   
        \texttt{colon endoscopy} & $-0.10$ \\   
        \texttt{lung pathology} & $0.000$ \\   
        \texttt{covid ct} & $-0.30$ \\   
        \bottomrule
    \end{tabular}}
\end{table}

\end{document}

%% file: notation.tex
\def\bE{{\mathbb{E}}}

\def\bR{{\mathbb{R}}}

\def\calK{{\mathcal K}}
\def\calO{{\mathcal O}}

%% file: preamble.tex
\usepackage[margin=1in]{geometry}
\usepackage{optidef}
\usepackage{amsmath}
\usepackage{amsthm}
\usepackage{amssymb}
\usepackage{mathtools}
\usepackage{algorithm}
\usepackage{algpseudocode}
\usepackage{float}
\usepackage{bbm}
\usepackage{listings}
\usepackage{booktabs}
\usepackage{comment}
\usepackage{subcaption}
\usepackage{enumitem}
\usepackage[dvipsnames]{xcolor}
\usepackage{footmisc}
\usepackage{titlesec}

\definecolor{bleudefrance}{rgb}{0.19, 0.55, 0.91}
\definecolor{pastelblue}{rgb}{0.68, 0.78, 0.81}
\definecolor{oxfordblue}{rgb}{0.0, 0.13, 0.28}
\definecolor{lavender}{rgb}{0.75, 0.58, 0.89}

\usepackage{tcolorbox}
\usepackage{nicefrac}
\usepackage{thm-restate}
\usepackage{fontawesome}
\usepackage[export]{adjustbox}

\usepackage{hyperref} 
\hypersetup{
  colorlinks = true,
  urlcolor = bleudefrance,
  linkcolor = magenta,
  citecolor = pastelblue
}

\usepackage[capitalize]{cleveref}
\usepackage[numbers]{natbib}

\newtheorem{theorem}{Theorem}[section]
\newtheorem{lemma}{Lemma}[section]

\titleformat{\paragraph}[runin]
  {\normalfont\normalsize\bfseries} 
  {}                                
  {0pt}
  {}

\titlespacing*{\paragraph}
  {0pt}   
  {0.5ex} 
  {1em}   